\pdfoutput=1
\documentclass[11pt]{article}

\usepackage[T1]{fontenc}
\usepackage{lmodern}

\usepackage[margin=1.05in]{geometry}
\usepackage{amsmath,amssymb,amsthm}
\usepackage{booktabs}
\usepackage{longtable}
\usepackage{array}
\usepackage{graphicx}
\usepackage{microtype}
\usepackage{tikz}
\usetikzlibrary{arrows.meta,positioning,shapes.geometric}
\usepackage{algorithm}
\usepackage{algpseudocode}
\definecolor{linknavy}{RGB}{0,51,102}
\usepackage[colorlinks,linkcolor=linknavy,citecolor=linknavy,urlcolor=linknavy]{hyperref}
\hypersetup{pdftitle={SoftModel: A Neural Model That Grows Its
Own Topology --- Governed Structural Growth for Continual
In-Service Learning},pdfauthor={Zhoumin Xie},
pdfsubject={growable soft models; continual in-service
learning},pdfkeywords={continual learning, lifelong learning,
catastrophic forgetting, growing neural networks, structural
plasticity}}
\usepackage{titlesec}
\usepackage[font=small,labelfont=bf]{caption}
\renewcommand{\thepart}{Part~\Roman{part}}
\titleformat{\part}[block]
  {\filcenter\fontsize{20}{24}\selectfont\bfseries}{\thepart.}{0.6em}{\fontsize{20}{24}\selectfont\bfseries}
\titlespacing*{\part}{0pt}{2.0em}{1.4em}
\titleformat{\section}
  {\normalfont\fontsize{16}{19}\selectfont\bfseries\raggedright}
  {\thesection}{1em}{}
\titleformat{\subsection}
  {\normalfont\fontsize{13}{15.5}\selectfont\bfseries}
  {\thesubsection}{1em}{}

\theoremstyle{definition}
\newtheorem{definition}{Definition}
\newtheorem*{axiom}{Axiom (total plasticity)}
\newtheorem*{principle}{Principle (stability through governance, not immobility)}
\newtheorem*{invariant}{Invariant (gate soundness)}
\theoremstyle{plain}
\newtheorem{proposition}{Proposition}
\newtheorem*{lockin}{First-batch lock-in (conjecture)}

\title{\textbf{SoftModel: A Neural Model That Grows Its Own
Topology}\\[0.35em]
{\large Governed Structural Growth for Continual In-Service
Learning}}
\author{Zhoumin Xie\\ \small Independent Researcher\\
\small \href{mailto:xiezm75@gmail.com}{xiezm75@gmail.com}
$\cdot$ ORCID
\href{https://orcid.org/0000-0002-9361-1358}{0000-0002-9361-1358}}
\date{}

\usepackage{fancyhdr}

\begin{document}
\maketitle

\begin{abstract}
Today, a neural system is almost always used in two phases ---
trained, then deployed --- and in that regime it freezes twice:
training ends, and the topology itself was never a degree of
freedom. We take the
opposite premise as an axiom --- total plasticity: no part of a
model, including its structure, is ever frozen --- and derive the
governance a lifelong learner then requires. The design's target
regime is continual, in-service learning: a long-lived model on a
non-stationary stream, whose stability is obtained from
governance rather than immobility and whose capacity follows
demand. The result is a
growable soft model: an algebra of structural operators (width,
hierarchy, composition, input interface, grown cycles, attention
heads), each exact at the moment of application, each budgeted
and audited, with adoption decided solely by a held-out reality
gate that treats parametric and structural change uniformly.
Local solving loops --- grown directed cycles under an enforced
contraction certificate, and self-processing units that refine
newborn structure against a local objective --- extend the same
governance beneath the global pass. A complete from-scratch
system realizes the whole account; its factory surface is
operated end-to-end by a production LLM through a
self-documenting tool surface.
Two conclusions follow from the axiom by construction.
Stability under lifelong change becomes an audit property of
the lifecycle, not an immobility property of parameters ---
nothing need be frozen to be safe. Structure that follows
demand removes the silent cap a fixed topology places on
later capability where the capacity floor binds: a model born
small need not remain bounded by its birth. A third is measured: in the worlds where this was
measured, the marginal value of new capacity was unobservable
before adoption, so workable growth governance took its ex-post form
--- speculation free, adoption earned on held-out reality,
and silence in an unchanging world a governed outcome with a
price. The same governance extends to evaluative
signals --- structure preference and policy optimization on the
growing model under one gate --- and the core method is
evaluated on standard continual-learning benchmarks, where
governed growth preserves the ability to keep learning along
long task sequences and converts announced review into
recovered competence. A pre-registered experimental program adjudicates the
mechanism and value claims on the tested problems and reports
its failures at full prominence; the map --- positive and
negative --- is the contribution.
\end{abstract}

\noindent\textbf{Keywords:} lifelong learning, continual
learning, growing neural networks, catastrophic forgetting,
structural plasticity, function-preserving transformations,
network topology, LLM tool use.

\tableofcontents

\section{Introduction}

\subsection{The two freezes}

Today, a neural model is almost always used in two phases ---
weights updated in training, then held constant in
service~\cite{dohare} --- and in that regime it is frozen twice.

The first freeze is familiar and explicit: after training, weights
are fixed. The model becomes an artifact --- versioned, shipped, and
never again improved by its own experience. The continual-learning
literature exists to lift this freeze, and its central obstacle is
equally familiar: unconstrained weight plasticity causes interference
and catastrophic forgetting, so the standard remedies re-introduce
immobility in softer forms --- freeze earlier columns and attach new
ones; penalize departure from past weights; freeze a backbone and
train low-rank residuals around it. Stability is purchased with
plasticity, at some exchange rate, and the field's implicit consensus
is that this purchase is unavoidable.

The second freeze is quieter, seldom treated as a design axis in
its own right (the dynamic-architecture literature works on it
without this name; surveyed in App.~\ref{sec:related}), and --- we will
argue --- more fundamental: \textbf{the topology is fixed at design
time.} The number of units, the connectivity, the depth, and the
input interface are decided before the first example arrives, and
they never change again. Every subsequent act of learning, replay,
regularization, or adaptation moves the model \emph{within} the
function space this topology determines; none of it can move
the space itself. A model in this condition satisfies every conventional
definition of ``trainable'' while being unable, even in principle, to
follow a world whose demands leave its function space.
Section~\ref{sec:theory} makes this precise as \emph{structural
freezing by omission}, with a practical conjecture we call
\emph{first-batch lock-in}: when capacity and interface are fitted to
the first data a model ever sees, early data permanently caps later
capability.

\subsection{The inversion}

This paper develops the consequences of refusing both freezes at
once. We take as an axiom --- a definition of the object of study,
not an empirical claim --- that in a \textbf{soft model} every
parameter, at every scale of structure, remains trainable for the
lifetime of the model. The axiom immediately forces topology itself
to become a trainable degree of freedom: a learner that cannot grow its width, its hierarchy, its
serial composition, and its own input interface is not fully
plastic.

The obvious objection is that this is exactly the regime in which
lifelong learning fails. Our answer is an inversion of the standard
resolution:

\begin{quote}
\textbf{Stability through governance, not immobility.} Every change
to the model --- a gradient step, a new unit, a new input feature, a
rebuilt network --- must be (i) \emph{harmless at the moment it is
applied} and (ii) \emph{adopted only if it pays on held-out reality}.
Nothing else about the model is restricted, beyond declared,
adjustable budgets.
\end{quote}

Condition (i) is made mechanical by constructing every structural
operator to be \textbf{exact-preserving}: at application time the
model's input--output function is unchanged (new units enter with
zero outgoing weights; a refined node's inner network enters
contributing exactly zero, so the node initially computes what it
computed before; a new input feature enters with zero input weights).
Growth never damages the present; it only adds trainable directions
the future may use. Condition (ii) is made mechanical by a
\textbf{reality gate}: all training and growth happen in a
speculative \emph{working state}, and promotion to the
\emph{committed version} --- the one that serves --- requires
strictly surpassing the incumbent's score on the most recent slice of a held-out
stream of labeled reality, quarantined from all training. Under these
two conditions, stability stops being a property of parameters
(immobility) and becomes an audit property of the lifecycle
(governance): the served lineage never adopts a change that scored below
the incumbent on the evidence available when it was adopted --- and
this guarantee costs no plasticity at all.

Governance changes the character of growth. Because a structural edit
is harmless now and adjudicated later, growth becomes \textbf{low-risk speculation}: the system may widen where its own instability signals
concentrate, refine an unstable node inward, deepen a scope whose
additive route has stopped paying, admit a newly observed input,
or re-derive itself from its own experience store --- even
carelessly --- because the committed model can only improve or stay.
There is no architecture search over training runs and no outer
optimization loop outside one life;
growth is need-driven and evidence-adjudicated inside the model's own
lifetime.

\emph{How} to grow is not an arbitrary design choice either; it
proceeds coarse-to-fine: coarse
structure first, finer refinement nested adaptively
and non-uniformly, exactly where the data prove the current scale
insufficient.

\subsection{The system}

We realize this position in a complete, released system. Its
substrate is a recursive multi-scale network in which any hidden node
may contain an inner network, so \textbf{structure is grown, not
designed}: it appears where the model's experience demands it, and
one learning primitive operates identically at every level. This
recursive network is the \emph{reference} substrate; the substrate
contract itself is architecture-agnostic, and the released system
provides multi-layer hosts under the same lifecycle, among them a
transformer encoder (\S\ref{sec:algebra}). Growth proceeds in two
directions --- additive refinement and widening, and
scope-interior composition --- with the direction selected per
scope by the model's own signals (\S\ref{sec:direction}); a
system-level mode restores the purely additive regime.
The containment relation of scopes forms a rooted tree --- the
\emph{inclusion tree}, in the standard graph-theoretic sense of
levels and height~\cite{knuth}, a part--whole hierarchy in Simon's
sense~\cite{simon}. The model carries two depth
quantities, both grown and both
non-uniform: the \emph{height of the inclusion tree}~\cite{knuth,clrs} --- ``tree height'', evolved per branch --- and
each scope's \emph{layer depth} --- the length of its composition
chain, evolved per scope (\S\ref{sec:direction}).
Over this substrate we define the operator algebra --- $\omega$ (widen), $\rho$ (hierarchical
refinement), $\delta$ (deepen by scope-interior composition),
$\sigma$ (grow the input interface), $\Phi$ (re-found from the
experience store) --- with
exactness measured, not assumed, in the released acceptance suite.
Models are born by \textbf{self-shaping}: creation takes only a name
(with one measured qualification --- birth capacity remains a
configuration obligation for symbol-heavy tasks, App.~\ref{sec:gseries}),
and the feature space, the output form (numeric regression or a
categorical head over a learned vocabulary), and the capacity are
inferred from the data the model is taught --- the zeroth growth
operation, applied at birth.

Finally, the system is built to be operated by a general LLM. The
division of labor is strict: the LLM --- the \emph{brain} ---
supplies all general capability (language, context, feature
extraction) and drives the entire lifecycle through a standard tool
protocol; the soft model is the brain's growable extension, carrying
one domain's learned judgment in kilobytes of forever-trainable
weights. The interface guarantees make careless --- though not adversarial
(\S\ref{sec:llm}) --- operation safe under a careless-but-honest operator
(\S\ref{sec:lifecycle}, threat model): teaching cannot degrade the
served model, growth cannot damage the present, and every decision
is audited. (System name in prose:
SoftModel; the implementation is named \texttt{GrowableSoftModel}.)

\subsection{Contributions and epistemic stance}
\label{sec:contributions}

\begin{enumerate}
\item \textbf{A definitional theory} of total plasticity: the axiom,
the lock-in argument (fixed topology is structural freezing by
omission, with capacity and interface instances of different
severity), and the governance principle that replaces immobilization
(\S\ref{sec:theory}).
\item \textbf{A substrate and operator algebra} for growing topology
under governance: a recursive substrate whose structure is grown,
not designed; a scale-recursive learning rule --- \emph{cross-scale
backpropagation} --- in which corrective information travels
backward as residual \emph{targets} down the inclusion tree rather
than as gradients through it; four
exact-preserving, budgeted, audited operators spanning the
additive and compositional directions, plus gated
re-founding; a signal-based selection
of the growth direction (prediction under a predictability
certificate, probes read only at extrapolated asymptotes,
adoption by the gate), with every algorithmic role a replaceable
registry part; growth signals and verdict discipline; and a
pluggable substrate contract whose
released hosts include a from-scratch transformer encoder on which
the same operators and lifecycle verify (\S\ref{sec:algebra}).
\item \textbf{A principle of local solving loops} ---
a further method contribution: computation that solves
\emph{inside} itself, at the scale that demands it, without
changing the global learning contract --- realized twice
(\S\ref{sec:loops}). The \emph{loop operator} ($\lambda$) grows a
governed directed cycle at a scope: a third growth direction in
which the bought resource is neither capacity nor composition
but \emph{iteration and state} --- stability-certified by an
enforced contraction bound, exact at application, budgeted,
audited, and adopted only through the gate. \emph{Self-processing
units} give every newly grown structure a bounded, label-blind
weight-refinement loop against a local process objective ---
process, then participate, then learn, at every depth of the
inclusion tree. Both mechanisms obey the same governance
contract as every operator of (2), and both are selectable
policy, never fiat.
\item \textbf{Growable attention} --- a method
contribution: growth and self-processing reach the attention
component itself (\S\ref{sec:growatt}). Heads are added and
widened per head --- unequal widths as a \emph{lifecycle
outcome} rather than a design choice --- under the same
evidence-triggered, exactly function-preserving, gate-adjudicated
contract as every other operator; and each head's attention
distributions answer to a local entropy-band objective --- a
third \emph{local objective} under the split-locality
discipline (a penalty, not a new inner loop) --- whose gradient
is
certified and whose locality rule (parameter gradients from the
combined signal, block input from the task signal alone) is
enforced by measurement, not convention.
\item \textbf{A governed lifecycle} unifying learning and growth
under one gate, with self-shaping birth, store-per-version
reproducibility, a drift protocol, and consent-gated self-study
(\S\ref{sec:lifecycle}).
\item \textbf{An LLM-operated design}: the model as a tool the brain
both uses and teaches, over self-documenting MCP servers; validated
end-to-end on the factory surface by a production LLM
(\S\ref{sec:llm}).
\item \textbf{A pre-registered empirical program with FAIL branches
at full prominence} (App.~\ref{sec:empirical}). We report what the
protocol adjudicated, not what we hoped: the mechanics verify
exactly; several value-of-growth predictions failed or their
scenarios failed to bind at the tested scale; and the model's self-knowledge signals proved real --- the
variation self-check is the program's best measured predictor of
transfer error (SV1) --- even though naive interventions built on
them did not clear their bars.
\item \textbf{Evaluative learning and benchmark evidence}: one
evaluative primitive at two levels of the system --- preference
over growth moves and clipped-surrogate policy optimization
through a pseudo-target identity --- under the same gate,
verified quasi-statically against official components
(\S\ref{sec:eval-learn}); and a registered continual-learning
benchmark program whose protocol-fit discipline (the training
protocol belongs to the method under test; each evaluation
subject receives its fitting measurement; examinations never
reuse study items) yields the keep-learning and
just-in-time-refresher results (\S\ref{sec:cl-bench}).
\end{enumerate}

Five further contributions join those itemized above: the
\emph{2026 campaign record} (\S\ref{sec:campaigns-results}:
in-service insertion exactness at scale, the
continual-learning treatment boundaries and review-scheduling
laws, retention measured as relearning savings, the
depth-separation mapping, and the delayed-payoff selection
signal); a \emph{pedagogy for lifelong models}
(App.~\ref{sec:pedagogy}: ten principles, each tied to a
measured result, enforced in a registered curriculum, or marked
open); the \emph{generative
growth-value result} (governed growth pays under arriving
structure; against budget-matched fixed models the static
scores serve as a sanity floor --- parity or better, held
under a fair-uncertainty rerun --- while the design's
evaluation lives on the time axis, \S\ref{sec:cl}); the \emph{fidelity program}
(four registered rounds from a defection baseline to a
supported education thesis under a corrected time-axis
instrument --- 2/5 under the original level instrument, the
correction disclosed); and a digest of \emph{empirical observations}
closing the evidence part (\S\ref{sec:empirical-laws}), each
carrying a registered artifact.
The stance in (7) deserves one sentence of defense. A theory whose
central claim is \emph{judge every change against reality} would be
self-undermining if its own paper cherry-picked; the pre-registered
verdicts, including the negative ones, are the theory applied to
itself.

\paragraph{What we claim, and what we do not.} The contribution is
a governed \emph{mechanism}: exactness and safety are demonstrated
by measurement, governance and auditability by construction, and
the adaptive choice of growth direction at simulation scale. A first pre-registered matched-parameter comparison against an
EWC baseline is now on record (App.~\ref{sec:gseries}):
governed plasticity matched or improved on the baseline's best grid cell
on adaptation and retention jointly on three of four lifelong
tasks, with the fourth --- where EWC adapted better ---
reported as a miss; we do not claim superiority over the
broader immobilization family (progressive columns, adapter
freezing) --- that comparison remains open --- nor that composition
is more effective than widening \emph{in general}: the measured
picture is regime-dependent --- the first-generation bar failed
(App.~\ref{app:twodir}), and the 2026 depth mapping locates the
compositional advantage precisely where iterated composition is
real, at fixed parameter budget, and nowhere else
(\S\ref{sec:campaigns-results}). The claims are bounded by the evidence table,
and the boundary is part of the method.

\subsection{Organization and reader's paths}
\label{sec:organization}

The paper is organized in three parts.
\textbf{Part~I, Theory and System} (\S2--\S10), develops the
axiom and its consequences, the substrate and its instruments, the
growth-direction question, the two local-solving realizations,
growable attention, the governed lifecycle, its extension to
evaluative learning (\S\ref{sec:eval-learn}), the system in
operation under an LLM operator --- a system demonstration, not
an empirical claim --- and closes by placing the design in its
target regime, continual in-service learning (\S10).
\textbf{Part~II, Empirical Evidence} (\S11--\S14), is the
measured record, organized as benchmarks $\to$ discipline
$\to$ results $\to$ observations:
\S\ref{sec:cl-bench} evaluates the core method on the standard
continual-learning benchmark family, \S12 states the registered
program and the rules every verdict obeys, \S13 reports the
main results --- including the 2026 depth-axis and
continual-learning campaigns --- and \S14 collects the
empirical observations and bounds the measurements' validity.
\textbf{Part~III, Discussion and Conclusion} (\S15--\S16),
interprets and closes.
Three reader's paths follow the parts: a reader here for the theory
reads Part~I; a reader here to reproduce or audit reads Part~II
with the process appendices (the registered program, the
growth-value campaigns, and the fidelity program, with the
remaining appendices as their instruments); the evaluative-learning
extension (\S\ref{sec:eval-learn}) sits with the system
sections it extends, and the benchmark evaluation
(\S\ref{sec:cl-bench}) opens the empirical part by testing
the core method on the field's own instruments; a reader who wants
conclusions only reads \S\ref{sec:contributions}, \S13, and
Part~III. The appendices carry process --- protocols, per-seed
tables, registered rounds --- and their evidence chain closes
with the complete correspondence table
(Table~\ref{tab:evidence}), the single table every claim in
this paper answers to, followed by a glossary of named
quantities. Section~\ref{sec:contributions}
says \emph{what} is claimed; this subsection says \emph{where}
each claim lives.

\part{Theory and System}

\section{Total Plasticity: The Axiom, Lock-In, and Governed Growth}
\label{sec:theory}

\subsection{The axiom}

We call a model \textbf{soft} when its capability resides in continuously mutable state --- weights that training can always move
--- rather than in code, rules, or frozen artifacts. We take softness
to be definitional and state it as an axiom:

\begin{axiom}
Every parameter of the model, at every scale of its structure,
remains trainable for the lifetime of the model --- in operation as
in experiment. There is no stop-gradient, no frozen column, no
protected backbone.
\end{axiom}

Three corollaries fix the axiom's meaning. \emph{(i) No disguised
freezing.} Mechanisms that leave parameters formally trainable but
practically immobile --- effective learning rates driven to zero,
hard penalties pinning weights to their past --- violate the axiom as
surely as a stop-gradient does. The axiom legislates
\emph{capability}, not \emph{pace}: a mechanism violates it
only when learning cannot resume --- rates pinned to zero, or
unrecoverable by the system's own evidence. How slowly a healthy
model should move through a quiet regime is an operating policy,
judged by the operator against the problem at hand; the axiom
requires only that the ability to learn remain intact, at every
scale, at every moment. \emph{(ii) Optimizability.}
Plasticity without training signal is vacuous: every scale must also
receive effective optimization pressure, and a system in which deep
structure cannot mature is out of compliance in practice even if no
gradient is blocked in principle. \emph{(iii) Structural plasticity.}
The axiom quantifies over ``every scale of structure'' --- and
structure itself, the next subsection argues, is a parameter one can
freeze.

\subsection{Lock-in: fixed topology is freezing by omission}
\label{sec:lockin}

Let a topology $G$ --- the units, their connectivity, and the input
interface --- induce the function family
\begin{equation}
\mathcal{F}(G) \;=\; \{\, f_\theta : \theta \in \Theta(G) \,\},
\end{equation}
the set of everything weight training can ever reach. Weight
plasticity moves the model \emph{within} $\mathcal{F}(G)$; it never
moves $\mathcal{F}(G)$ itself. Two instances make the resulting
freeze concrete, in increasing order of severity.

\emph{The capacity instance.} Universal approximation is an asymptotic statement about
topology \emph{families}~\cite{cybenko}, not about any fixed
member: a topology of fixed size carries an irreducible
approximation floor, and when the world drifts toward targets above
that floor, weight training, replay, and regularization tuning all
saturate at it. The failure is silent: training converges to the best
approximation available within $\mathcal{F}(G)$, whose error is
bounded below by that floor.

\emph{The interface instance.} If the drift introduces a causal
factor that is not among the model's inputs, the best available
hypothesis is not merely hard to reach --- it is not in
$\mathcal{F}(G)$ at all, at any capacity. No interior mechanism can
compensate for a variable the model cannot see. This instance is
structural and scale-free.

In both instances the model is frozen at the level that matters while
satisfying every conventional definition of ``trainable.'' We call the condition \textbf{structural freezing by
omission}: no freeze is imposed; the freeze consists in the
absence of any operator that could move $\mathcal{F}(G)$. It carries a sharp practical conjecture:

\begin{lockin}
If capacity and interface are fixed at creation, they are necessarily
fitted to the first data the model ever sees. Everything learned
afterward is confined to a function space sized by that first batch:
early data permanently caps later capability. (The direct
attempt to \emph{induce} this cap at kilobyte scale did not
bind --- E11a, App.~\ref{sec:empirical}; the corollary stands as the mechanism the
axiom removes, not as a demonstrated failure mode at this
scale.)
\end{lockin}

The axiom therefore forces a conclusion the continual-learning
literature usually treats as optional: \textbf{topology must be a
trainable degree of freedom} --- reachable by operators, under
optimization pressure, for the model's lifetime. A system without
such operators satisfies parameter-level plasticity while
$\mathcal{F}(G)$ remains fixed: plastic in $\theta$, frozen in
$G$. The contrast with a conventional deep network is categorical:
there, depth is a single global hyperparameter --- fixed at design
time, identical along every path; under the axiom, depth is a
grown, non-uniform field over the structure --- each branch of the
inclusion tree and each scope's composition chain evolves its own,
by local signals, under the gate
(\S\ref{sec:algebra}--\ref{sec:direction}).

\emph{The direction instance.} The omission argument applies once
more, one level up --- to the growth algebra itself. Suppose every
operator the algebra offers adds terms to an additive expansion.
For such targets the classical depth separations exhibit
function families that shallow additive forms can approximate
only at sizes growing exponentially in the depth imitated or in
the input dimension (\S\ref{sec:discussion}: a question of
required size, not of attainable class): the reachable class is
nonempty at every size, but for those families approximants of
feasible size are excluded --- again not by an imposed freeze,
but by the absence of any operator that could compose.
An algebra that can grow solely by addition therefore freezes the
\emph{direction} of growth by omission. The axiom demands
direction freedom as it demands topological freedom; and since no
direction may be imposed by designer fiat either, the choice of
direction is itself a governed decision, read from the model's own
signals (\S\ref{sec:direction}).

\subsection{The stability dilemma and its governed resolution}
\label{sec:governance}

The classical objection arrives immediately: unrestricted plasticity
is precisely what makes lifelong learning fail --- interference,
catastrophic forgetting~\cite{mccloskey}, runaway structure ---
the classical stability--plasticity dilemma~\cite{grossberg80,artnn}. The field's standard
remedies are forms of \emph{immobilization}: freeze earlier columns (progressive networks~\cite{prognets}), penalize movement away from past weights
(EWC-family~\cite{ewc} --- viscous freezing), freeze the backbone
and adapt residuals (adapter methods~\cite{lora}). Immobilization buys stability by
spending exactly the plasticity the axiom demands, and by
\S\ref{sec:lockin} the fixed topology beneath these methods was never
plastic to begin with.

We resolve the dilemma on the other side:

\begin{principle}
Every change to the model --- parametric or structural --- must be
(i) harmless at the moment it is applied and (ii) adopted only if it
pays on held-out reality. Nothing else is restricted, beyond
declared, adjustable budgets.
\end{principle}

Both conditions are mechanical, not aspirational.

\paragraph{(i) Exactness at application.} Every structural operator
is constructed so that the model's input--output function is
unchanged at the moment of application: new units enter with
zero-initialized outgoing weights; a refined node's inner network
enters contributing exactly zero, so the refined node initially
computes exactly what it computed before; a new input feature enters
with zero input weights and a neutral backfill for stored rows.
Growth adds trainable directions; it never moves the current
function. (Measured, not assumed, in the released system ---
App.~\ref{sec:empirical}.)

One boundary case is principled rather than engineered:
\emph{vocabulary growth}. When a new class label arrives
(\S\ref{sec:lifecycle}), the categorical head's output space
itself must widen, and exact preservation of the old classes'
probabilities would require the new logit to enter at $-\infty$
--- an infinite loss on the class's first observation and an
untrainable direction, which the no-freezing axiom forbids. The
new class therefore enters with a zero-weight logit row and a
bias of $-10$: the old classes' probabilities move by less than
$10^{-3}$ (the $e^{-10}$ leakage through the softmax) while every
direction stays finite and trainable from step one. Vocabulary
growth is thus the one $\varepsilon$-preserving structural edit;
the operators of Table~\ref{tab:operators} are exact to machine
precision.

\paragraph{(ii) The reality gate.} The model maintains a speculative
\textbf{working state}, where training and growth are free, and a
\textbf{committed version}, which serves. Promotion is a strict
comparison on the model's own \textbf{held-out stream} ---
time-stamped labeled reality, quarantined from all training ---
evaluated on its most recent slice: the candidate replaces the
incumbent only if strictly better. Refusal is an outcome, not an error; a rejected candidate's
data never enters the served lineage. The guarantee is stated at
its true strength and no more: adoption-level protection of the
served lineage --- no forgetting bound and no regret-style
statement is claimed. The criterion has the form
of a variational principle: as virtual work admits a displacement
only if it lowers the governing functional, the gate realizes a
variation --- parametric or structural --- only if it lowers the
held-out measure~\cite{lanczos}.

Together the two conditions yield the property that replaces
frozen-backbone safety:

\begin{invariant}
At every promotion, the newly served version scored strictly higher
than the freshly re-scored incumbent on the gate's evaluation slice;
at every refusal, the served version is unchanged. The served lineage
therefore never adopts a change that scored below the incumbent on the
reality used to judge it.
\end{invariant}

We state the invariant's scope as plainly as the invariant. The gate
certifies decisions against reality \emph{as sampled by the recent
held-out slice at decision time} --- governance, not omniscience.
Under drift the slice itself moves; that is its purpose. What is
guaranteed is an audit property of the lifecycle: no adopted change
ever scored below the incumbent on the evidence available when it was
adopted. Stability, so construed, is not an immobility property of
parameters at all --- and it costs no plasticity.

\paragraph{Why exactness, given the gate?} The two conditions
are not redundant. The gate protects the \emph{served} model; it
says nothing about the working state in which growth happens.
Exactness protects that working state's accumulated fit ---
a structural edit costs nothing at the moment it lands, so growth
composes with ongoing learning without a restart --- and it makes
every edit's effect \emph{attributable}: since the function is
unchanged at application, whatever the ledgers record afterwards
is the edit's earned contribution, not an artifact of the
splice.

\subsection{Growth as governed speculation}

Exactness makes an edit harmless now; the gate decides later whether
it paid. Between the two, structural growth changes character: it
becomes \textbf{low-risk speculation}. The system may widen where its
instability signals concentrate, refine an unstable node inward,
deepen a scope whose delayed-payoff slope certifies the need, admit a
feature that recently became observable, or rebuild itself from
its own experience store --- carelessly, even --- because the
committed model can only improve or stay. Three disciplines keep
speculation disciplined without restricting it: \textbf{budgets}
(parameter and depth caps as policy; a refused growth is an outcome
to respect, not an error to fight), \textbf{audit} (every operator
application is a lineage event), and \textbf{verdicts}
(\S\ref{sec:signals}: never act on progress signals classified as
false). There is no architecture search over training runs here
and no outer loop outside one life;
growth is need-driven and evidence-adjudicated inside one continuing
life (Figure~\ref{fig:theorygrowth} illustrates the algebra
schematically; the two directions are set side by side in
Figure~\ref{fig:twodir}). The axiom then closes the loop on itself: because nothing is
ever frozen, every parameter that speculation adds is itself forever
trainable --- including by later speculation, recursively.

\subsection{Falsifiable predictions}
\emph{(Several of these predictions were graded by the 2026
campaigns; the measured verdicts, including the boundaries, are
collected in \S\ref{sec:campaigns-results}.)}
\label{sec:predictions}

The theory is not self-certifying, and we commit it to predictions
tested by the pre-registered program of App.~\ref{sec:empirical}:
\textbf{T1} with all scales training, adaptation to a drifted world
localizes at the scale at which the drift occurs; \textbf{T2}
narrow-born models recover most of the gap to well-born models once
$\omega$ releases first-batch capacity lock-in; \textbf{T3} the
never-frozen coarse scale shows no excess damage under label noise
relative to a governance-matched control; \textbf{T4} amplitude
inversion --- fine-scale corrections outgrowing the coarse signal ---
is an early warning that predicts when re-founding ($\Phi$) is
necessary. We report the adjudicated outcomes, including the branches
where a prediction failed or a scenario failed to bind, at full
prominence. One outcome tempers this very section and we say so here:
at the kilobyte scale tested, our pre-registered T2 scenario failed
to \emph{induce} capacity lock-in at all (App.~\ref{sec:eseries}) ---
the capacity instance of \S\ref{sec:lockin} awaits scenarios where
the approximation floor genuinely binds, while the interface instance
is structural and holds at any scale. Each prediction's adjudicated
row appears in Table~\ref{tab:evidence}, and the program-wide
discipline under which these verdicts were reached is stated once
in \S\ref{sec:evidence-discipline}.

\section{The Substrate, Learning, and the Shared Instruments}
\label{sec:algebra}

This section gives the shared core: the recursive substrate
(Definition~\ref{def:net}), the scale-invariant learning
primitive (Algorithm~\ref{alg:primitive}), the instability signal
that aims growth, the interface and re-founding operators
($\sigma$, $\Phi$), and the gate (Algorithm~\ref{alg:commit});
the growth operators and their exactness propositions follow in
\S\ref{sec:direction}. Everything here is the released
implementation stated mathematically; nothing is idealized.

Named quantities and roles used throughout are collected in
App.~\ref{app:glossary}.

\subsection{A substrate closed under refinement}

\begin{definition}[recursive network]
\label{def:net}
A \emph{network} over input space $\mathbb{R}^d$ is a tuple
\begin{equation}
N \;=\; \big(W_1 \in \mathbb{R}^{H\times d},\; b_1 \in
\mathbb{R}^{H},\; W_2 \in \mathbb{R}^{1\times H},\; c \in
\mathbb{R},\; \mathcal{I}\big),
\end{equation}
where $\mathcal{I}$, with
$J \subseteq \{1,\dots,H\}$, assigns to certain hidden units --- the \textbf{composite
nodes} --- an \emph{inner network over the same input space}
together with its attachment matrix,
$\mathcal{I}(j) = (N_j, A_j)$ with $A_j \in
\mathbb{R}^{k_j \times H}$.
We call each network in the inclusion tree --- the root and every
inner network --- a \emph{scope}. The scalar head is
definitional convenience: the released system also serves a
categorical head over a learned vocabulary (\S\ref{sec:lifecycle}),
and every statement below is head-agnostic.
\end{definition}

The class is defined by this self-reference and is therefore
\textbf{closed under refinement}: any node of any inner network may
itself become composite.

\paragraph{Forward map.} With $\hat{x}$ the standardized input and
$\varphi$ the GELU nonlinearity~\cite{gelu}, the
pre-activations and hidden
outputs are
\begin{equation}
a(x) = W_1\hat{x} + b_1, \qquad
h(x) = \varphi\big(a(x)\big) \;+\; \sum_{j\in J}
g_{j}(\hat{x})\,A_{j},
\label{eq:forward}
\end{equation}
where $g_{j}(\hat{x}) \in \mathbb{R}^{k_j}$ is inner network
$N_j$'s vector output and $A_{j} \in \mathbb{R}^{k_j \times H}$
is its attachment matrix --- the attachment carries the scope's
full width. The network output is $f_N(x) = \psi\big(W_2\,h(x) +
c\big)$,
where $\psi$ is the output de-standardization. A composite node is
thus an \emph{additive refinement}: the inner network, sited at
its node, corrects the scope's hidden state, at any
level, recursively.

\paragraph{Structure as a record of experience.} The height of the
inclusion tree is $\mathrm{depth}(N) = 1 + \max_{j\in J}
\mathrm{depth}(N_j)$ (with $\max\emptyset = 0$). Because depth exists
only where an inner network has been grown, the model's depth profile
is a record of \emph{where its experience demanded refinement} --- an
outcome of its life, not an architectural hyperparameter.
Figure~\ref{fig:substrate} sketches the structure.

\begin{figure}[t]
\centering
\begin{tikzpicture}[
  unit/.style={circle, draw=black!70, minimum size=5.5mm, inner sep=0pt},
  comp/.style={circle, draw=black!70, very thick, minimum size=7mm, inner sep=0pt},
  net/.style={rectangle, rounded corners=2pt, draw=black!60, dashed,
              inner sep=3pt},
  lab/.style={font=\scriptsize},
  >={Stealth[length=2mm]}]
\node[lab] (x) at (-2.1, 0) {$\hat{x}\in\mathbb{R}^d$};
\node[unit] (h1) at (0, 1.4) {};
\node[unit] (h2) at (0, 0.7) {};
\node[comp] (hj) at (0, -0.2) {};
\node[unit] (h4) at (0, -1.1) {};
\node[lab, left=1.5mm of hj] {$h_j$};
\node[unit] (y) at (2.0, 0) {};
\node[lab, right=1mm of y] {$f_N(x)$};
\foreach \h in {h1,h2,hj,h4}{ \draw[->] (x) -- (\h); \draw[->] (\h) -- (y); }
\node[net, minimum width=34mm, minimum height=26mm] (inner) at (0, -3.35) {};
\node[lab, anchor=north west] at (inner.north west) {inner network $N_j$};
\node[lab] (ix) at (-1.25, -3.45) {$\hat{x}$};
\node[unit, scale=.7] (i1) at (-0.15, -2.85) {};
\node[unit, scale=.7] (i2) at (-0.15, -3.45) {};
\node[comp, scale=.7] (i3) at (-0.15, -4.05) {};
\node[unit, scale=.7] (io) at (1.05, -3.45) {};
\foreach \i in {i1,i2,i3}{ \draw[->] (ix) -- (\i); \draw[->] (\i) -- (io); }
\draw[->, black!60] (inner.north) .. controls (0.8,-1.8) .. (hj.south east);
\node[lab, anchor=west] at (1.15, -1.9)
  {$h = \varphi(a) + g_j(\hat{x})A_j$};
\draw[->, black!50, dotted] (i3.south) -- ++(0,-0.42)
  node[lab, below] {\dots recursively};
\end{tikzpicture}
\caption{The recursive substrate (Definition~\ref{def:net}). A
composite node adds its inner network's output to its own
activation; inner networks read the same input and may themselves
contain composite nodes --- structure is grown, not designed.}
\label{fig:substrate}
\end{figure}

\paragraph{Standardization as governed state.} Input/output
scalers --- the textbook conditioning of the optimization problem~\cite{efficientbackprop} ---
are fit once at first training and thereafter refreshed only through
an exact compensation: when the target scale is re-estimated
$(\mu,\sigma) \to (\mu',\sigma')$, the output layer is remapped
$W_2 \leftarrow W_2\,(\sigma/\sigma')$,
$c \leftarrow (c\,\sigma + \mu - \mu')/\sigma'$, so the
end-to-end function is preserved exactly; the optimizer's (Adam~\cite{adam}) moment estimates, which live in the old scale, are
reset rather than carried across.

\paragraph{Substrate pluggability.} Definition~\ref{def:net} is the
\emph{reference} substrate --- the minimal body on which the
operators are stated and on which every adjudicated claim of
App.~\ref{sec:empirical} is judged. In the released system,
substrates are instantiated only through a registry that enforces
a single contract (training step, prediction, growth-site
enumeration, site refinement, height, persistence), and five hosts
are included in the release: the reference recursive network and a
variant; a transformer encoder~\cite{vaswani} implemented from
first principles --- attention and feed-forward blocks, with
explicitly implemented backpropagation verified against finite
differences, deterministic under fixed seed --- with a variant;
and a causal sequence host.
The contract is architecture-agnostic: any host that exposes it
serves under the same lifecycle, so a domain whose laws require
compositional depth receives it through the substrate choice
rather than through growth. On the transformer host the growth
sites are the feed-forward units of \emph{every} layer: an unstable unit
grows the same zero-initialized recursive inner network, trained
by the same cross-scale rule, and $\omega$ widens at every scale
--- lifecycle, gate, and audit unchanged; the released substrate
kit and a committed growth-audit log verify these mechanics. Two
remarks delimit the scope. Attention was untouched by growth
previously as a single-variable discipline;
\S\ref{sec:growatt} lifts this restriction. The flattening result of
Prop.~\ref{prop:rhostatus} (\S\ref{sec:additive}),
moreover, is a property of the reference substrate, where a refinement's
correction reaches the output through linear aggregation alone; on
the transformer host the same correction, injected at layer
$\ell$, passes through the nonlinearities of every subsequent
layer --- refinements are composed downstream, and the flattening
argument does not apply.

\subsection{One learning primitive at every level (every scale)}

Learning uses a single rule, applied identically at every scope of
the tree. Within a scale, it is ordinary gradient descent on that
scale's parameters, treating inner outputs as constants (the gradient
passes through the atomic branch only). Across scales, the enclosing
network does not send gradients \emph{through} its composite nodes; it
hands each one a \textbf{target} and lets the inner network
take its own step --- the same primitive, one level down; computational
mechanics arrived at the same coupling independently: in the
variational multiscale principle, fine-scale equations are driven
by the residual of the coarse~\cite{hughesvms}. This is a \emph{cross-scale
backpropagation} in the general sense of the term --- what
backpropagation has meant since its origin is the backward
propagation of \emph{errors}, i.e., of corrective information~\cite{rumelhart} --- but with a different transmission rule. The mechanical reading makes the
difference concrete. With the loss as a potential, the output error
acts as a generalized force, and a gradient step is a displacement
proportional to that force --- the overdamped limit of the
classical mechanical reading of optimization dynamics~\cite{polyak}. Classical backpropagation transmits the force
through the composition of Jacobians; here each scale hands its
substructure the correction arriving through that structure's own
attachment ($G_j = (\partial\mathcal{L}/\partial H)A_j^{\top}$),
and the substructure relaxes locally by the same rule ---
a pattern numerical mathematics knows as block relaxation for
nonlinear systems~\cite{ortega}. In
both schemes the same quantity flows backward --- corrective
information; the schemes differ in the transmission rule. Nor is
the gradient itself essential: it is one encoding of the
correction among several, so the backward pass is defined by what it must
deliver (a correction), not by how the correction is computed. The consequence is dynamical, not
representational: a flattened network computing the \emph{identical}
function takes a \emph{different} step under the same data, so the
inclusion tree acts on the learning trajectory even where it is
invisible in the instantaneous function
(Prop.~\ref{prop:rhostatus}).

\begin{algorithm}[t]
\caption{Coupled training step at scope $N$ (recursive)}
\label{alg:primitive}
\begin{algorithmic}[1]
\Require batch $(X, y)$, standardized $(\hat{X}, \hat{y})$,
        $n = |X|$;\; $A = \hat{X}W_1^{\top} + b_1$,
        $H = h(\hat{X})$ (Def.~\ref{def:net})
\State $e \gets (HW_2^{\top} + c) - \hat{y}$;\quad
       $\mathcal{L} \gets \tfrac{1}{n}\lVert e\rVert^2$
\State \textbf{within-scale step:}
       $D \gets (e\,W_2) \odot \varphi'(A)$;\;
       update $W_1, b_1, W_2, c$ from
       $g_{W_1} = D^{\top}\hat{X}/n$,\;
       $g_{b_1} = \bar{D}$,\;
       $g_{W_2} = e^{\top}H/n$,\;
       $g_c = \bar{e}$
       \Comment{inner outputs held constant; the loss's constant
       factor 2 is absorbed into the step size; scopes carrying
       composition blocks extend this step through the block chain
       (\S\ref{sec:compositional})}
\For{each composite node $j \in J$} \Comment{cross-scale correction handing}
  \State $G_j \gets (\partial\mathcal{L}/\partial H)\,A_j^{\top}$
         \Comment{the correction reaching $N_j$ through its
         attachment}
  \State update $A_j$ from
         $g_{A_j} = g_j(\hat{X})^{\top}\,
         \partial\mathcal{L}/\partial H$
  \State $N_j.\Call{TrainStep}{X,\, G_j}$
         \Comment{\emph{this same algorithm}, one step, recursing
         into $N_j$'s own composite nodes}
\EndFor
\end{algorithmic}
\end{algorithm}

The rule is \emph{scale-invariant by construction}: no scope is
special, no scope is frozen, and the cross-scale signal is a target
rather than a gradient --- each scale is trained as a small
supervised learner on the residual its parent asks it to explain.
Corollary (ii) of the axiom (every scale receives effective
optimization pressure) is thus discharged mechanically: pressure arrives at level $k{+}1$ whenever level $k$ has residual error at a
composite node. One boundary is
named rather than hidden: the discharge presumes the enclosing
exit weights remain nonzero --- a composite node whose exit
coefficient vanishes starves its interior while remaining
formally trainable; the corollary is mechanical wherever that
does not occur, and the case itself is left as a named
condition, not silently assumed away.

\subsection{Where to grow: the instability signal}
\label{sec:signals}

Growth is aimed by a per-unit statistic computed from the updates
themselves. Let $\Delta_j^{(t)}$ be the update to unit $j$'s input
weights at step $t$, and let $\mu_j, \nu_j$ be exponential moving
averages (decay $0.95$) of $\Delta_j$ and $|\Delta_j|$ respectively.
The \textbf{instability} of unit $j$ is
\begin{equation}
u_j \;=\; 1 - \frac{\lVert \mu_j \rVert}{\lVert \nu_j \rVert +
\varepsilon} \;\in\; [0,1].
\end{equation}
We state explicitly what is and is not claimed for this rule:
no descent or convergence property is asserted; its
characterization in this paper is empirical --- the acceptance
suite pins its trajectories bit-exactly, the finite-difference
checks verify its gradients, and prediction T1-i records its
first behavioral signature.

Steady drift (updates agree in sign) gives
$\lVert\mu_j\rVert \approx \lVert\nu_j\rVert$, hence $u_j \to 0$;
oscillation (updates cancel) gives $u_j \to 1$: the unit is being
pulled in contradictory directions by data it lacks the capacity to
reconcile --- read as the signature of a site that wants
refinement (an interpretation; its operational content is the
trigger record it drives). The
instability report ranks all units across all scopes; it
\emph{suggests} sites. A second instrument, the \textbf{trajectory
verdict}, classifies apparent progress as REAL, FALSE\_SPIKE,
FALSE\_SWAP, or STUCK, with one binding rule: \emph{never advance on
a FALSE verdict}; STUCK is the designed trigger for remediation or
structural speculation. These instruments suggest; only the gate
\emph{decides}.

\subsection{Direction-neutral operators: the interface and
re-founding}
\label{sec:neutralops}

Two operators of the algebra act along axes orthogonal to the
direction question of \S\ref{sec:direction} --- they serve
every growth regime identically and are therefore stated here,
with the shared machinery.

\paragraph{$\sigma$ (interface --- the input axis).} Applied
recursively to every scope (all scopes share the input space): append
a zero column to each $W_1$, extend the input standardization with
$(\mu_{\mathrm{new}}, \sigma_{\mathrm{new}}) = (v, 1)$ where $v$ is
the backfill default for stored rows.

\begin{proposition}[$\sigma$-exactness]\label{prop:sigma}
For any value $\xi$ of the new coordinate --- not merely the backfill
--- the forward map is unchanged: the new column of every $W_1$ is
zero, so $\xi$'s standardized value is annihilated before it reaches
any activation.
\end{proposition}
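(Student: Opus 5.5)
The plan is a structural induction on the inclusion tree, using the fact that $\sigma$ acts on every scope in the same way. Write the extended input as $x' = (x,\xi)\in\mathbb{R}^{d+1}$. The extended standardization acts coordinatewise, so $\hat{x}' = (\hat{x}, \xi - v)$, where $\hat{x}$ is exactly the old standardized input; the original $d$ coordinates keep their scalers untouched. For any scope $N$ with first layer $W_1$, write the extended matrix as the block $W_1' = [\,W_1 \;\; 0\,]$. The key one-line computation is
\begin{equation}
W_1'\hat{x}' + b_1 \;=\; W_1\hat{x} + 0\cdot(\xi - v) + b_1 \;=\; a(x),
\end{equation}
and it holds for every real $\xi$. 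This is why the claim covers arbitrary values of the new coordinate and not only the backfill $v$. Since $\sigma$ sets the new standard deviation to $1$, no division by zero can arise, and the standardized value is finite for every $\xi$.

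Next I would run the induction on $\mathrm{depth}(N)$ (Definition~\ref{def:net}). The claim is that for every scope $N$, its extended version $N'$ satisfies $g_{N'}(\hat{x}') = g_N(\hat{x})$ for all $x$ and $\xi$.

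\emph{Base case} ($J=\emptyset$). By the display above, $h(x)=\varphi(a(x))$ is unchanged. The head $W_2h+c$ is unchanged because $\sigma$ does not touch $W_2$ or $c$.

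\emph{Inductive step.} By the inductive hypothesis, applied to each inner network $N_j$, every $g_j(\hat{x}') = g_j(\hat{x})$. The attachments $A_j$ are untouched by $\sigma$. So both summands of \eqref{eq:forward} agree, and the output of $N$ agrees. At the root, $\psi$ is the output de-standardization. The target scalers are not modified by $\sigma$, so $f_{N'}(x') = f_N(x)$.

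The only real obstacle is bookkeeping rather than mathematics: confirming that no other path carries the raw input into the forward map. Two places need checking. The first is the composition blocks of \S\ref{sec:compositional} and any other hosts: I would argue that these read $\hat{x}$ only through a first-layer matrix that $\sigma$ pads with a zero column too, or that they read only hidden states, and hidden states are already shown to be invariant. The second is the standardization: it must be strictly coordinatewise, so that adding the new scaler entry does not re-estimate the existing $(\mu,\sigma)$. Given these, the proposition follows, and the same argument shows that the gradients with respect to the old parameters are also unchanged at the moment $\sigma$ is applied.
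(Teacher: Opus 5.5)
Your proposal is correct and uses the same key fact as the paper: the appended zero column of every $W_1$ multiplies the standardized new coordinate $\xi - v$ by zero, so every scope's pre-activation $a(x)$, and hence everything downstream of it, is unchanged for every $\xi$. The paper gives only this one-line justification; your structural induction on the inclusion tree, together with your check that composition blocks and the loop read only hidden states, makes that line explicit.
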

\noindent (Empirically exact --- App.~\ref{sec:empirical}.)
Participation is therefore \emph{earned
by training}, never granted by arrival --- and $\sigma$ is the
operator answer to the interface instance of lock-in
(\S\ref{sec:lockin}), the case no interior mechanism can compensate.

\paragraph{$\Phi$ (re-found --- the escape axis).} When the
foundation itself is outgrown, refinement only bloats fine scales
(each patch corrects a structure that is itself wrong). $\Phi$ builds
a \emph{candidate}
\begin{equation}
C \;=\; \mathrm{TrainFromScratch}\big(S_t;\;
H_{\mathrm{birth}} = \beta(|\mathrm{schema}_t|,\, |S_t|)\big),
\end{equation}
where $S_t$ is the model's accumulated experience store of
\textbf{real rows only} --- never the old model's own outputs, which
would inherit its errors --- and the birth-sizing rule $\beta$ is
re-run on the \emph{current} schema, never on batch one (the direct
repair of first-batch lock-in). In the released system $\beta$ is
the heuristic $\max(16, \min(64,\, 4(n_{\mathrm{in}} +
n_{\mathrm{out}})))$ --- a starting capacity only; structure
thereafter is determined by data through the operators, under the
gate, not by formula. Adoption is
solely by the gate (Algorithm~\ref{alg:commit}): the old self serves
until a candidate scores higher and remains in the lineage for rollback --- identity
resides in the record, not in the weight matrix. The pre-registered
trigger monitors the \textbf{amplitude-hierarchy inversion}
\begin{equation}
R_{\mathrm{inv}} \;=\;
\frac{\lVert W_2\, h^{\mathrm{fine}} \rVert}
     {\lVert W_2\, h^{\mathrm{coarse}} \rVert + \varepsilon},
\end{equation}
the output-projected mass of all inner-network corrections relative
to the GELU backbone at the top split: a healthy decomposition has
$R_{\mathrm{inv}} \ll 1$; sustained $R_{\mathrm{inv}} \ge 0.5$ (three
consecutive probes), repeated disposed growths under saturation, or
flat learning progress under high demand raise the re-founding
proposal. (App.~\ref{sec:eseries} reports that in our tested
scenarios this trigger's \emph{necessity} claim was not supported ---
takeover safety, by contrast, is unconditional, because it is the
gate's.)

\subsection{Governance formalized}
\label{sec:gate}

\begin{definition}[states and gate]
\label{def:gate}
A model carries a speculative \textbf{working state} $w$ and a
\textbf{committed version} $s$ (the one that serves), plus a
time-stamped \textbf{held-out stream}
$\mathcal{H} = \{(x_i, y_i, \tau_i)\}$ quarantined from all training.
Let $R_M(\mathcal{H})$ be its most recent $M$ examples and
$Q(\cdot\,; R)$ the held-out metric on slice $R$ (in the released system, the factory and full-system servers'
gates both score the fraction of held-out answers matched exactly
--- numeric within $\pm 0.5$).
\end{definition}

\begin{algorithm}[t]
\caption{Commit (the reality gate), recency parameter $M$}
\label{alg:commit}
\begin{algorithmic}[1]
\State $q_s \gets Q(s;\, R_M(\mathcal{H}))$
       \Comment{re-score the incumbent --- fresh, never cached}
\State $q_w \gets Q(w;\, R_M(\mathcal{H}))$
\If{$q_w > q_s$} \Comment{strict}
  \State $s \gets w$; append lineage event with both scores
  \Else
  \State refuse; $s$ unchanged \Comment{a logged outcome, not an error}
\EndIf
\end{algorithmic}
\end{algorithm}

Teaching composes the same gate with from-scratch candidate training:
$\mathrm{teach}(E;\, W, M)$ trains a candidate on the most recent $W$
stored examples plus $E$ and commits with recency $M$ --- the two
recency dials being what adaptation under drift requires (train on
the reality that still holds; be judged on the reality that holds
now). Budgets close the loop on structure: an operator application
that would exceed the parameter cap $P_{\max} = P_0 \cdot m$ (or the
depth cap) is \emph{refused and logged} before it touches the working
state.

The gate-soundness invariant of \S\ref{sec:governance} now reads:
along the served lineage $s_0, s_1, \dots$, every promotion $k$
satisfied $Q(s_{k+1}; R^{(k)}) > Q(s_k; R^{(k)})$ on the slice
$R^{(k)}$ used to judge it --- and refusals left the lineage
untouched. Learning steps, $\omega$, $\rho$, $\delta$, $\sigma$, and
$\Phi$ all enter the served model through this one gate; that is the safety argument's core --- adoption-level protection, uniform across operator types (its named boundaries: the reuse and provenance exposures named below, \S\ref{sec:gate}). Real traces --- including
refusals --- are shown in Appendices~\ref{app:validation}
and~\ref{app:topo}.

Two exposures lie outside the invariant's scope and should be
named. \emph{Selection reuse:} quarantine protects the held-out
stream from training, not from \emph{selection} --- under many
commit attempts against the same small slice, a strict
$>$ with no margin will eventually promote a candidate whose margin is
noise alone (the reusable-holdout phenomenon;
\S\ref{sec:discussion}), and the
invariant will
truthfully record the promotion as evidence-sanctioned. Promotion
margins and per-slice query budgets are the natural remedies ---
and by the architecture's own layering they are the
\emph{operator's} policy, not the mechanism's: the released
interfaces already carry everything their enforcement needs
(public pre-commit scoring for margins, every adjudication in
the append-only ledger for query counting, holdout replenishment
on demand for rotation). The mechanism supplies the bookkeeping;
the examination strategy belongs to the caller. \emph{Evidence provenance:} the invariant certifies
candidates against whatever the held-out stream contains; if that
stream is corrupted, every guarantee downstream of it is vacuous
(\S\ref{sec:llm}).

\section{Growth: The Direction Question}
\label{sec:direction}

\noindent\emph{This section in one arc: the algebra offers five
operators (\S\ref{sec:operators}); the additive three are exactly
characterized by Prop.~\ref{prop:rhostatus}
(\S\ref{sec:additive}); $\delta$ supplies the compositional axis
that proposition does not reach (\S\ref{sec:compositional}); and
a three-tier policy --- predict from history, verify by probes,
adopt by the gate --- chooses the direction per scope, with a
system-level mode restoring the purely additive regime
(\S\ref{sec:mode}).}

The substrate of \S\ref{sec:algebra} can grow in more than
one direction. This section presents the operator algebra's two
growth directions as two realizations of the framework: the
\emph{additive} realization --- multiscale superposition, the
regime analyzed in full below --- and the \emph{compositional}
realization, which augments it with scope-interior composition and
a signal-based selection of the direction itself
(\S\ref{sec:compositional}). The two are one released system: the
additive realization is exactly the adaptive system restricted to
its widen-only mode (\S\ref{sec:mode}).

The organizing ideas of this section are classical across the
mathematical sciences, developed for a nature that is rough and
multiscale~\cite{mandelbrot}: numerical mathematics builds
solutions scale by scale (multigrid~\cite{brandt}) and spends
resolution only where an indicator demands it (adaptive mesh
refinement~\cite{amr}); signal processing expands a function
scale by scale (multiresolution analysis~\cite{mallat}); and
physics stacks effective descriptions scale by scale (the
renormalization group~\cite{wilson}). Notably, these traditions
do not commit a scale hierarchy to any particular local form: in
a multiresolution expansion the scales \emph{add}, whereas in the
renormalization group each effective description is a
\emph{transformation} of the one below --- scale structure
without additivity. The two realizations that follow are these
two classical forms inside one learner: an additive expansion
grown term by term, and a composed hierarchy grown transformation
by transformation, with the choice between them read from the
data.

\subsection{The operator algebra}
\label{sec:operators}

\begin{figure}[t]
\centering
\includegraphics[width=\linewidth]{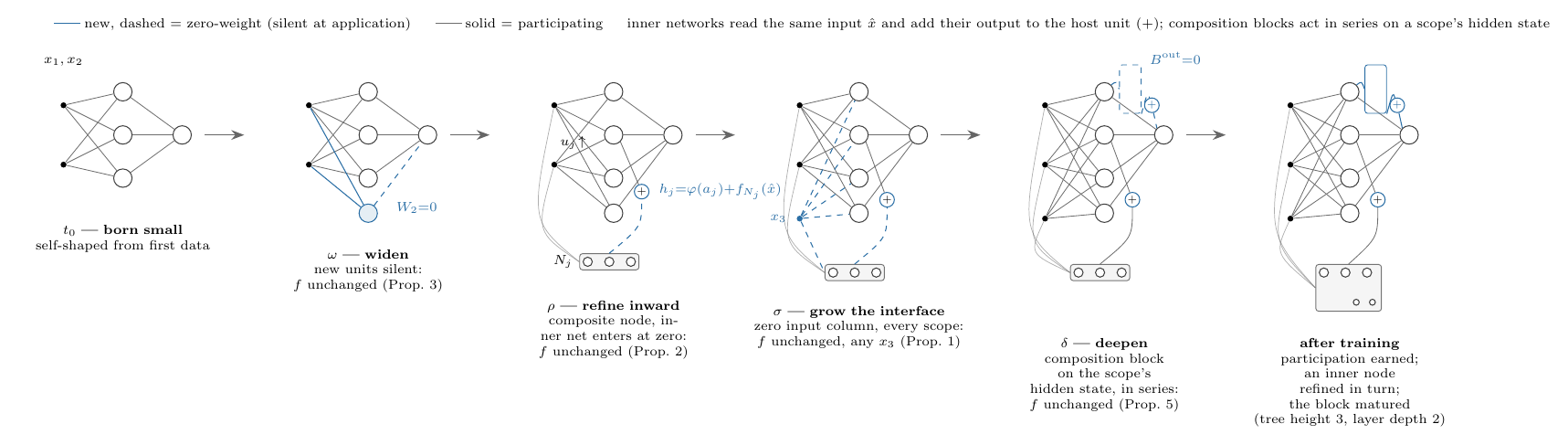}
\caption{Schematic (not data): the operator algebra growing a
topology over time. New elements (accent) enter with zero-weight
dashed connections --- silent at application, so $f$ is unchanged
(Props.~1--3, 5) --- and earn participation through training. An
inner network reads the \emph{same} input $\hat{x}$ and adds its
vector output to the scope's hidden state through its zero-born
attachment at the $+$ junction
($h = \varphi(a) + g_j(\hat{x})A_j$); the $\delta$ panel
appends a
zero-initialized composition block in series on the scope's
hidden state (Prop.~\ref{prop:delta}). The final panel shows an inner node
refined in turn and the block matured (tree height 3, layer
depth 2). Every
parameter remains trainable from step one; adoption is always by
the gate.}
\label{fig:theorygrowth}
\end{figure}

Each in-place operator satisfies one governance contract
($\Phi$ builds a candidate; its takeover is gated):
\textbf{exact at application, every new parameter trainable from
step one, budgeted, logged} (Table~\ref{tab:operators}).

\begin{table}[t]
\centering\small
\caption{The operator algebra. Every operator is budgeted
(parameter/depth caps as policy; refusals are logged outcomes),
audited (lineage events), and closed under exact removal (each
grown structure carries its inverse); adoption is always by the
gate of Algorithm~\ref{alg:commit}. The additive family
($\omega$, $\rho$, $\sigma$, per-head growth, skips) leaves
the composition-degree invariant unchanged; $\delta$ alone
raises it --- the boundary that separates multiscale
organization from true deepening. The loop operator $\lambda$
(\S\ref{sec:loops}) and the per-head attention operators
(\S\ref{sec:growatt}) join these five under the identical
contract --- the abstract's operator list spans all of them.
Measured exactness values:
App.~\ref{sec:empirical}.}
\label{tab:operators}
\begin{tabular}{@{}llll@{}}
\toprule
Operator & Axis & Exactness at application & Typical trigger \\
\midrule
$\rho$ (refine) & hierarchy & inner net enters at zero (Prop.~\ref{prop:rho}) & high $u_j$ at a node \\
$\omega$ (widen) & width & zero output-weight block (Prop.~\ref{prop:omega}) & scope-wide saturation \\
$\sigma$ (interface) & inputs & zero input column, any value (Prop.~\ref{prop:sigma}) & new observable arrives \\
$\delta$ (deepen) & composition & block enters at zero (Prop.~\ref{prop:delta}) & delayed-payoff slope signal (\S\ref{sec:compositional}) \\
$\Phi$ (re-found) & whole model & candidate; takeover gated & inversion / disposal / flat LP \\
\bottomrule
\end{tabular}
\end{table}

\paragraph{The grown body is a contract, not a class.} The
requirements that $\rho$ places on the body it grows at a node
are exhausted by the governance contract: exact entry (the body's
contribution is identically zero at application), every parameter
trainable from step one, countable parameters, and removability.
Nothing in the exactness argument inspects the body's interior
--- the zero-entry proof never opens the box --- so the reference
recursive network is the \emph{default} body, not a commitment:
any function approximator satisfying the contract may be grown at
a node, selected per event by policy. The propositions of
\S\ref{sec:theory} apply to every such body unchanged, and the
scale hierarchy below (\S\ref{sec:scalehierarchy}) governs all of
them alike.

\subsection{The additive direction: refinement and widening}
\label{sec:additive}

\paragraph{The three axes are not alike.} Widening ($\omega$)
adds quantity at an existing scale; interface growth ($\sigma$)
adds observables; refinement ($\rho$) nests a finer description
inside the one node whose data demand it --- an organizational
axis, not an expressive one (Prop.~\ref{prop:rhostatus}), and
structurally local (the refinement attaches to one node; in input
space every unit remains global). The instability signal is a
this-scale-fails detector, and refinement is closed under itself,
so the ladder coarse $\to$ fine $\to$ finer has no end in the
algebra (budgets are policy, \S\ref{sec:gate}): structure is
grown, not designed. The resulting topology is multiscale in the established sense
of numerical mathematics --- resolution added scale by scale, and
only where an indicator demands it~\cite{mallat,amr}; the label
describes how information is \emph{organized}, and is orthogonal
to expressive enlargement (Prop.~\ref{prop:rhostatus}) --- the
canonical multiscale methods are themselves additive and
depthless. The idea has a precise kin in the multiscale
finite element method~\cite{houwu,houwucai}: an element's shape
function ceasing to be an analytic design and becoming a
\emph{computed object} interior to a coarse cell.

\paragraph{$\rho$ (refine --- the hierarchy axis).} At scope $N$, node
$j \notin J$:
\begin{equation}
\rho_j(N):\quad \mathcal{I}(j) \leftarrow N' \text{ fresh, with }
A_{j} = 0 \text{ at attach time.}
\end{equation}

\begin{proposition}[$\rho$-exactness]\label{prop:rho}
$f_{\rho_j(N)} = f_N$ pointwise.
\end{proposition}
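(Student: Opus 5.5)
The plan is to unfold the forward map \eqref{eq:forward} of the refined network $\rho_j(N)$ and compare it term by term with that of $N$. The refinement $\rho_j$ leaves every existing parameter of $N$ untouched: $W_1$, $b_1$, $W_2$, $c$, the scalers behind $\hat{x}$ and $\psi$, and every pair $(N_i,A_i)$ for $i\in J$. Its only effect is to enlarge the composite set from $J$ to $J\cup\{j\}$, with $\mathcal{I}(j)=(N',A_j)$ and $A_j=0$. So $a(x)$ and $\varphi(a(x))$ coincide in the two networks. The hidden state of $\rho_j(N)$ is therefore
\begin{equation*}
h'(x) \;=\; \varphi\big(a(x)\big) + \sum_{i\in J} g_i(\hat{x})A_i \;+\; g_j(\hat{x})A_j \;=\; h(x) + g_j(\hat{x})\cdot 0 \;=\; h(x).
\end{equation*}
The output map $x\mapsto\psi(W_2h(x)+c)$ is unchanged, so $f_{\rho_j(N)}(x)=f_N(x)$ at every $x$.

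The one point needing care is that $g_j(\hat{x})\cdot 0=0$ must hold for every input. This requires $g_j(\hat{x})$ to be a finite vector in $\mathbb{R}^{k_j}$. I will discharge this by induction on the height of the inclusion tree. Assume every network of height at most $k$ has a finite-valued output map. Then the fresh $N'$, whatever its interior (including composite nodes of its own, under the ``contract, not a class'' remark), is a composition of affine maps, GELU, and finite inner outputs of lower height, and so is finite everywhere. The base case is an atomic network, which is affine--GELU--affine. No property of $N'$ beyond finiteness is used, which is why the argument never opens the box.

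I should also note how the statement relates to the root. If $N$ is an inner scope of some larger model, then $g_N$ is unchanged pointwise. Every enclosing scope sees the same inner output, so exactness propagates up the tree by the same computation, with the vector head in place of the scalar one. The main obstacle, such as it is, is this finiteness/well-definedness step. The rest is a direct substitution.
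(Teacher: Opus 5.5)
Your proof is correct and is the paper's own argument: the only change $\rho_j$ makes to the forward map \eqref{eq:forward} is the additive term $g_j(\hat{x})A_j$, which vanishes for every input because $A_j = 0$ at attach time. Your induction establishing that $g_j$ is finite is harmless but not needed, since Definition~\ref{def:net} already has $g_j$ taking values in $\mathbb{R}^{k_j}$.
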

\begin{proof}
The only change to the forward map Eq.~\eqref{eq:forward} is the
additive term $g_{j}(\hat{x})\,A_{j}$; with $A_{j} = 0$ at
attach time the term is identically zero for every input.
\end{proof}
\noindent (Empirically exact --- App.~\ref{sec:empirical}.) By closure
(Definition~\ref{def:net}), $\rho$ applies at any level --- including to units created by $\omega$.

\paragraph{$\omega$ (widen --- the capacity axis).} At any scope,
append $k$ units:
\begin{equation}
W_1 \leftarrow \begin{bmatrix} W_1 \\ W_1^{\mathrm{new}}
\end{bmatrix},\qquad
b_1 \leftarrow \begin{bmatrix} b_1 \\ 0 \end{bmatrix},\qquad
W_2 \leftarrow \begin{bmatrix} W_2 & 0_{1\times k} \end{bmatrix},
\end{equation}
with $W_1^{\mathrm{new}}$ randomly initialized.

\begin{proposition}[$\omega$-exactness]\label{prop:omega}
$f$ is unchanged: the new units' contribution enters the output only
through the appended zero block of $W_2$.
\end{proposition}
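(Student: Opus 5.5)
The plan is a direct computation from the forward map Eq.~\eqref{eq:forward}, in the same style as the proof of Prop.~\ref{prop:rho}. Write the widened scope's pre-activations and hidden state in block form. The new pre-activations are $a(x) = [W_1\hat{x}+b_1;\; W_1^{\mathrm{new}}\hat{x}+0]$, so the first $H$ coordinates $a_{1:H}(x)$ coincide with the old $a(x)$. The hidden vector is $h'(x) = [\,h_{1:H}(x);\; h_{\mathrm{new}}(x)\,]$. Here $h_{\mathrm{new}}(x) = \varphi(W_1^{\mathrm{new}}\hat{x})$ is an arbitrary (random-weight) vector, but it is finite. The output is then
\[
f'(x) = \psi\big([W_2\;\;0_{1\times k}]\,h'(x) + c\big) = \psi\big(W_2\,h_{1:H}(x) + 0\cdot h_{\mathrm{new}}(x) + c\big),
\]
and the second term vanishes for every input.

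The step that needs real care is showing that $h_{1:H}(x)$ equals the old $h(x)$. This is not automatic, because Eq.~\eqref{eq:forward} contains the composite-node sum $\sum_{j\in J} g_j(\hat{x})A_j$, and each attachment matrix $A_j\in\mathbb{R}^{k_j\times H}$ carries the scope's full width. Widening must therefore also extend each $A_j$ to $k_j\times(H+k)$. I would fix the convention that the appended columns are zero, which keeps the block structure clean. In any case, the first $H$ columns of each $A_j$ are untouched, and every inner network $N_j$ reads only $\hat{x}$, which is unchanged. So each $g_j(\hat{x})$ is unchanged, and the first $H$ coordinates of $h'$ are exactly the old $h$. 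Whatever enters the new coordinates is multiplied by the zero block of $W_2$, so it drops out regardless of that convention.

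Two remaining points need to be checked. First, the standardization maps $x\mapsto\hat{x}$ and the output map $\psi$, as well as the bias $c$, are untouched by $\omega$, so they commute with the argument above. Second, if $\omega$ is applied at an inner scope rather than the root, I apply the argument to that scope to get that its vector output $g$ is unchanged. Pointwise equality then propagates upward unchanged through the parent's forward map, by induction on the inclusion tree. The main obstacle is only this bookkeeping of the attachment width, plus the composition-block chain of \S\ref{sec:compositional} where present. For the latter I would use the same observation: anything read from the new hidden coordinates must pass through a zero-initialized readout before it can reach the output.
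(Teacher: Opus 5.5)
Your proposal is correct and follows the paper's own argument. The paper gives no separate proof: its justification is the one-line observation in the statement that the new units reach the output only through the appended zero block of $W_2$. You carry that argument out with more explicit bookkeeping, covering the attachment widths, inner scopes, and the untouched standardization and head.

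One sharpening concerns the composition-block case. The new columns of the $A_j$ are irrelevant, as you correctly note. The new input columns of each block's $B^{\mathrm{in}}_k$, however, must be zero-initialized, because a block writes back into the old coordinates through its already-trained $B^{\mathrm{out}}_k$. This is a convention that has to be imposed on $\omega$ (the ``exact zero-extension'' the paper mentions for widen riders), since the displayed definition only extends $W_1$, $b_1$, $W_2$. It does not follow from the zero block of $W_2$.
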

\noindent (Empirically exact to floating-point roundoff ---
App.~\ref{sec:empirical}.) Two governance details do real work here.
Optimizer slots for the new parameters are extended with zero moments
--- the new capacity is trainable \emph{from step one} (the axiom,
mechanized; contrast masking schemes). The instability
EMAs of new units, moreover, are warm-started at the scope's
current row-mean, so
newborn units compete fairly in the growth rankings instead of being
spuriously ranked coldest or hottest.

\begin{proposition}[Representational status of $\rho$]\label{prop:rhostatus}
In the absence of composition blocks, and for matched
atomic-unit counts,
$\mathcal{F}(G_{\rho}) = \mathcal{F}(G_{\omega})$. Every inner
network reads the same input $\hat{x}$ and contributes additively,
so the substrate at any height of
the inclusion tree evaluates to a weighted sum
of $\varphi(\mathrm{affine}(\hat{x}))$ terms --- the function
class of a single-hidden-layer network of the same total width
($\subseteq$); conversely, any flat network of that width is
realized by a nested topology whose inner output weights carry
the flat coefficients ($\supseteq$). $\rho$ does not enlarge the
reachable class beyond $\omega$.
\end{proposition}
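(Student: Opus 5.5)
I will prove the two inclusions separately, by induction on the height of the inclusion tree, using only the forward map Eq.~\eqref{eq:forward} and the head $f_N(x)=\psi(W_2 h(x)+c)$. Throughout, ``matched atomic-unit counts'' means that the total number of GELU units summed over all scopes of the nested topology $G_\rho$ equals the width $H$ of the flat topology $G_\omega$. The de-standardization $\psi$ and standardization $\hat{x}$ are shared by both sides, so I will work with the pre-head scalar $W_2 h(x)+c$ throughout.

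\textbf{($\subseteq$) Unrolling.} First I will define, for any scope $N$, the vector output $g_N(\hat{x}) = V_N\, h_N(\hat{x}) + c_N$. Here $V_N$ is the (possibly multi-row) output matrix, which for the root is the scalar head $W_2$. I will then show by induction on $\mathrm{depth}(N)$ that $g_N(\hat{x}) = \sum_{u} \varphi(w_u^{\top}\hat{x}+b_u)\,\alpha_u + \beta_N$. The sum runs over all atomic units $u$ in the subtree rooted at $N$; the $w_u,b_u$ are those units' own input weights, and the $\alpha_u$ and $\beta_N$ are fixed vectors.

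The base case ($J=\emptyset$) is immediate. For the inductive step I will substitute the hypothesis for each $g_j$ into $h = \varphi(a) + \sum_j g_j A_j$. This shows that $h$ is a sum of $\varphi(\mathrm{affine}(\hat{x}))$ terms times vectors, plus a constant $\sum_j \beta_j A_j$. Applying $V_N$ preserves this form, and the constant is absorbed into $\beta_N$.

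The key point is that every inner network reads the \emph{same} $\hat{x}$, so no term is ever composed with another. This is exactly where the hypothesis ``no composition blocks'' is used. At the root I will read off a flat network of width equal to the atomic count, with $W_1$ given by stacking the $w_u$ and $b_1$ by stacking the $b_u$, output row $(\alpha_u)_u$, and bias $\beta$. Hence $\mathcal{F}(G_\rho)\subseteq\mathcal{F}(G_\omega)$.

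\textbf{($\supseteq$) Realization.} Given a flat network $(W_1,b_1,W_2,c)$ of width $H$, I will split its units according to the atomic-unit allocation of $G_\rho$. The units assigned to each scope get the corresponding rows of $W_1$ and $b_1$. For an inner scope $N_j$, I take output dimension $k_j=1$ and set its output row to the flat coefficients of its assigned units, with zero inner bias. Then $g_j(\hat{x})$ equals the partial flat sum $\sum_{u\in N_j} W_{2,u}\varphi(\cdot)$. I will route this sum to the root output by choosing $A_j$ with $A_j W_2^{\top}=1$; this requires $W_2\neq 0$, and the root's own units supply that freedom (or I perturb one coefficient). Deeper nesting is handled recursively in the same way. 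The root's own units keep their flat coefficients, and $c$ is preserved. This gives $\mathcal{F}(G_\omega)\subseteq\mathcal{F}(G_\rho)$.

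\textbf{Main obstacle.} I expect the delicate step to be this routing in ($\supseteq$). The correction enters through $h$ and so is multiplied by $W_2$, which means the attachment must be chosen compatibly with the root output row. Degenerate cases need care, for example a root with zero own units or a root row that is identically zero. If they cannot be handled by choosing $A_j$ directly, I will use the freedom to scale $A_j$ and the inner output row inversely, or place one nominal unit at the root whose coefficient is used as the carrier. A secondary care point is stating precisely that the equality is of reachable function classes over all parameter values, not of training dynamics. This is consistent with the remark that a flattened network computing the identical function follows a different trajectory under Algorithm~\ref{alg:primitive}.
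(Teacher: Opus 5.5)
Your route is the paper's own. The paper's proof is just the two-clause sketch inside the proposition: for $\subseteq$, unroll the additive inner contributions into one weighted sum of $\varphi(\mathrm{affine}(\hat{x}))$ terms; for $\supseteq$, place the flat units in the nested slots and let the inner output weights carry the flat coefficients. Your $\subseteq$ induction is complete. In $\supseteq$ you go beyond the sketch by noticing that an inner scope's sum reaches the output only through the factor $A_jW_2^{\top}$, and one level down through the enclosing scope's output row. The gap is in how you discharge that routing constraint.

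\emph{The root offers no freedom.} Your claim that ``the root's own units supply that freedom'' is false. A root unit's final coefficient is exactly $W_{2,i}$, so $W_2$ is forced to equal the flat coefficients of whatever units you place at the root. If those coefficients are all zero, then $W_2=0$ and every inner contribution is annihilated. Perturbing a coefficient only makes the closures agree, not the exact equality $\mathcal{F}(G_\rho)=\mathcal{F}(G_\omega)$ you are proving. Scaling $A_j$ inversely against the inner output row cannot help when $W_2=0$.

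\emph{The problem recurs at every level.} Your ``handled recursively in the same way'' inherits the same failure. Set $r_{\mathrm{root}}=W_2^{\top}$ and $r_j=V_j^{\top}A_j\,r_{\mathrm{parent}}$. Then scope $N_j$'s own units receive final coefficients exactly $r_j$, while a child $N_{ji}$ contributes $g_{ji}A_{ji}r_j$. So if $N_j$'s assigned units all carry zero flat coefficient, everything beneath $N_j$ is silenced, whatever $k_j$ is.

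\emph{The fix.} Use the freedom you left unused: the assignment of flat units to slots, which the target function does not constrain. Fill the slots with the nonzero-coefficient units in order of scope depth: all root slots, then all depth-one slots, and so on. Then every scope that receives such a unit has all its ancestors entirely populated by nonzero-coefficient units. Hence each $r_{\mathrm{parent}}\neq 0$, and $A_j$ can be chosen with unit routing gain. Scopes holding only zero-coefficient units have no contributing descendants and are switched off by $A_j=0$.

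Your ``nominal carrier'' can also be made exact without adding units. Take a zero-coefficient unit of the offending scope and give it zero input weights and a bias $b$ with $\varphi(b)\neq 0$. It is then a constant, absorbed by the bias, while its nonzero outgoing weight carries the routing. The case of a root with no units of its own never arises, since every composite node keeps its own $\varphi(a_j)$ term.
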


\noindent What $\rho$ adds is therefore not representation but
\emph{organization}: a refinement trains its inner network on the
correction reaching it through its attachment
(Algorithm~\ref{alg:primitive}) --- a localized, stagewise-additive credit assignment --- and
gives growth a site,
a trigger ($u_j$), and an audit unit.  Extensional equivalence, however, does not imply
\emph{process} equivalence: substrates with identical function
classes can differ in the trajectories a lifetime of local, governed
growth can traverse --- and whether they differ here is an
empirical question, with T1-i
(App.~\ref{sec:empirical}) as first evidence.

\subsection{The compositional direction and its selection}
\label{sec:compositional}

\paragraph{The operator $\delta$ (deepen --- the composition
axis).} At any scope with hidden state $H_0$ (the base activations
plus inner-network contributions), $\delta$ appends one
zero-initialized residual composition block (the residual form
is the standard tool of deep architectures~\cite{he2016resnet}):
\begin{equation}
H_k \;=\; H_{k-1} \;+\; \mathrm{gelu}\!\big(H_{k-1} B^{\mathrm{in}\top}_k
+ b_k\big)\, B^{\mathrm{out}\top}_k, \qquad
B^{\mathrm{out}}_k = 0 \text{ at application},
\label{eq:delta}
\end{equation}
with the scope's head reading the final $H_L$. The block applies
a nonlinearity to a state that already contains nonlinear unit
outputs: the result is $\varphi(\mathrm{mix\ of}\
\varphi(\cdot))$ cross-terms, which leave the additive class
$\sum_i \alpha_i\,\varphi(\mathrm{affine}(\hat{x}))$ in general
--- the separation between the classes is the subject of the
classical depth-separation results (\S\ref{sec:discussion}) --- so
composition is genuine, and the
collapse of Prop.~\ref{prop:rhostatus} does not apply to
scopes carrying blocks.

\begin{proposition}[$\delta$-exactness]\label{prop:delta}
Appending a block leaves the computed function unchanged: with
$B^{\mathrm{out}}_L = 0$ the new block's contribution is
identically zero, so the chain's output equals its pre-append
value, $H_L = H_{L-1}$, pointwise at the moment of application
(for the first block, $H_1 = H_0$).
\end{proposition}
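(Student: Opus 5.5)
The proof follows the template of Prop.~\ref{prop:rho} and Prop.~\ref{prop:omega}: identify the single term the operator adds to the forward map and show that it vanishes identically. I would write the chain of Eq.~\eqref{eq:delta} before and after the append. Before, the scope carries $L-1$ blocks and its head reads $H_{L-1}$. Afterwards the chain is extended by
\[
H_L = H_{L-1} + \mathrm{gelu}\!\big(H_{L-1}B^{\mathrm{in}\top}_L + b_L\big)\,B^{\mathrm{out}\top}_L,
\]
and the head reads $H_L$. The key observation is that the new block's residual branch is a product whose right factor is $B^{\mathrm{out}\top}_L$. Whatever $B^{\mathrm{in}}_L$ and $b_L$ are (random, nonzero), and whatever the finite pre-activation matrix $\mathrm{gelu}(\cdot)$ is for any input, multiplying it by the zero matrix gives the zero matrix. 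Hence $H_L = H_{L-1}$ for every input row, with no condition on $\hat{x}$.

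Next I would argue that nothing else in the computation changes. The append does not modify $W_1$, $b_1$, the inner networks $N_j$ or their attachments $A_j$, or the earlier blocks $1,\dots,L-1$. So $H_0,\dots,H_{L-1}$ are the same functions of $\hat{x}$ as before, and $H_{L-1}$ is exactly what the head read pre-append. The head parameters $W_2$ and $c$ and the de-standardization $\psi$ are also unchanged. Composing gives $f = \psi(W_2 H_L + c) = \psi(W_2 H_{L-1} + c)$, which is the pre-append output, pointwise. The case of the first block is the specialization $L=1$: the chain was empty, the head read $H_0$, and the same computation gives $H_1 = H_0$.

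Two points deserve a remark so that "pointwise" holds in the paper's strict sense, and I expect them to be the only real obstacles. First, $\mathrm{gelu}$ is finite everywhere, so no $0\cdot\infty$ can arise; in floating point, a zero matrix times finite entries is exactly zero, which matches the "exact to machine precision" reading. Second, the enclosing structure must not re-read the block's hidden width in any other way. If the proposition is applied at an inner scope, that scope's output $g_j$ is unchanged by the argument above. Eq.~\eqref{eq:forward} shows that the parent consumes $g_j$ only through $g_j A_j$, so by induction up the inclusion tree the root function is unchanged too.
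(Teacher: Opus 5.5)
Your proposal is correct and follows essentially the paper's own argument: the new block's residual branch carries the zero factor $B^{\mathrm{out}}_L$, so its contribution vanishes for every input and $H_L = H_{L-1}$, the same zero-entry template as the proofs of Props.~\ref{prop:rho} and~\ref{prop:omega}. Your extra remarks are sound elaborations of points the paper leaves implicit or checks only empirically: the upstream state and head are untouched, the argument propagates to the root through $g_jA_j$ at inner scopes, and the result is bitwise exact in floating point.
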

\noindent The released suite verifies this bitwise at root and
inner scopes, and the in-service insertion campaign verifies it
at serving time --- bitwise at every pure-$\delta$ instant;
whole acts that include an aspect-law widen rider preserve the
function to machine epsilon
(\S\ref{sec:campaigns-results}).

Four properties place $\delta$ inside the governance discipline.
(i) \emph{Positional and service-safe}: blocks live in the
scope's composition chain and may be appended or INSERTED at any
position of that chain --- including in a LIVE serving model ---
with the same zero-entry exactness at the instant of application
(the identity-calibrated whole-layer form serves the attention
hosts through the same verb); the in-service insertion campaign
measures this exactness directly
(\S\ref{sec:campaigns-results}). (ii) \emph{Recursive and unbounded}: any scope at any level
of the inclusion tree may deepen, repeatedly; we call
$1+\text{(blocks)}$ the scope's \emph{layer depth}, distinct
from the height of the inclusion tree. (iii) \emph{Additive
aggregation preserved}: attachment across nodes remains a sum, so
the per-scale audit quantities of the additive direction survive
unchanged. (iv) \emph{Removable and budgeted}: blocks are audited
ledger events, removable exactly, and capped by policy
(\texttt{max\_blocks}), with refusals logged. Training extends the
within-scale chain rule through the block chain back to $H_0$; the
cross-scale corrections of
Algorithm~\ref{alg:primitive} are formed at the $H_0$ slot with
their original expression --- the derivative
$\partial\mathcal{L}/\partial H$ computed through the block
chain, matching the released implementation --- and all block
gradients are verified
against central finite differences in the released suite.

\begin{figure}[t]
\centering
\includegraphics[width=0.98\linewidth]{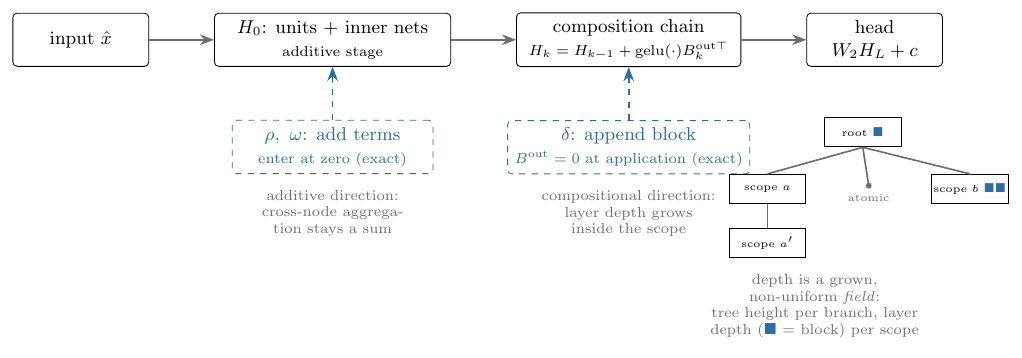}
\caption{The two growth directions on one pipeline. Additive
operators ($\rho$, $\omega$) add terms to the scope's hidden state
$H_0$; the compositional operator ($\delta$) appends
zero-initialized blocks after it. Both directions enter exactly
(dashed: silent at application); cross-node aggregation remains a
sum, and layer depth grows only inside the scope. Inset: depth
is a grown, non-uniform \emph{field} --- each branch evolves its
tree height and each scope its layer depth independently.}
\label{fig:twodir}
\end{figure}

\paragraph{Choosing the direction.} Which direction a scope
should grow is not knowable by fiat: the two directions answer two
different kinds of target. Greedy term addition against the current residual carries
dimension-independent approximation-rate guarantees~\cite{jones,barron} on the classes
served by superposition; compositional targets, by contrast, admit
exponential shallow-versus-deep separations~\cite{telgarsky,eldanshamir} (the fuller treatment of depth is
in \S\ref{sec:discussion}). Which jurisdiction a scope's
residual inhabits must therefore be \emph{read from the scope's
own signals}, and the reading proceeds in three tiers ---
prediction first, probes only to verify, adoption always by the
gate (Algorithm~\ref{alg:direction}).

\emph{Tier 1: history prediction.} Each scope maintains a gain
ledger (the realized, fixed-horizon relative energy reduction of
every past growth event) and a dense residual-energy series. The
energy series is extrapolated to its asymptote with a
BIC-weighted~\cite{bic} ensemble of parametric curve families~\cite{domhan}. Tier~1 may act only
under a \emph{predictability certificate} --- a gate this design
defines, whose four checks are each filled by a deliberately
standard instrument sitting in a replaceable slot: forecastability by normalized spectral
entropy~\cite{foreca}; regime validity by online changepoint
detection~\cite{bocpd}; demonstrated skill of the extrapolator on
the scope's own recent history by rolling-origin backtest~\cite{tashman} --- a held-out gate for the forecaster itself ---
and a confidence bound on the asymptote's posterior interval.

\emph{Tier 2: probe verification.} When the certificate fails or
history is insufficient, both directions are priced by
zero-attached probes trained on the scope's recent window, in the
same units (the scope's window energy). The additive probe's
gradient at zero attachment is exactly the unit--residual
correlation. How a probe is READ is itself an empirical question,
and the measured answer replaced this design's first draft:
single-horizon readings --- the raw gain and the gain per
parameter alike --- carry no axis information (early in training
every candidate prices positive, and the two directions price
alike; measured in the direction campaigns,
\S\ref{sec:campaigns-results}). What does carry the axis is the
\emph{two-horizon slope}: each candidate is probed at a short and
a long budget, and the difference (long-horizon gain minus
short-horizon gain) reads the delayed-payoff signature that
distinguishes composition --- additive gains are front-loaded,
compositional gains are back-loaded, and the slope measures
exactly this asymmetry. In Algorithm~\ref{alg:direction},
\emph{stall} denotes the certificate's no-useful-gain verdict on
the additive history and \emph{slope} the delayed-payoff
signature; both thresholds are policy-level in the released
code.

\emph{Tier 3: the gate} (the full policy flow is drawn in
Figure~\ref{fig:policy}). Either proposal is adopted exactly as
every other structural change: by scoring higher on held-out
reality (Algorithm~\ref{alg:commit}); and no adoption is final, since
drift re-opens every comparison. Every algorithmic role above ---
extrapolator, forecastability, changepoint, backtest, pricer,
combiner --- is a replaceable part behind a
registry, selected by policy and never switched by the machinery
itself.

\begin{algorithm}[t]
\caption{Direction selection at a scope (predict, verify, adopt)}
\label{alg:direction}
\begin{algorithmic}[1]
\Require scope with gain ledger, energy series, recent window;
policy thresholds
\If{history below the minimum-data guards}
  \State \textbf{cold start:} probes if the window suffices,
  else additive growth (the default direction)
\ElsIf{certificate passes (forecastability, changepoint,
backtest, confidence)}
  \If{no stall and extrapolated additive gain useful}
    \State \textbf{propose additive growth} ($\rho$ at the most
    unstable non-composite node; $\omega$ when all are composite)
  \ElsIf{stall $\wedge$ energy high}
    \State \textbf{propose deepen} when the two-horizon slope
    favors it (delayed-payoff signature)
  \Else{} \State fall through to probes
  \EndIf
\Else{} \State \textbf{probes:} price both candidates at two
  horizons; select by slope among positive gains; ties favor
  the additive direction
\EndIf
\State \textbf{adopt} only by the gate
(Algorithm~\ref{alg:commit}); log the decision record (signals,
certificate values, prices, part names, policy snapshot)
\end{algorithmic}
\end{algorithm}

\begin{figure}[t]
\centering
\includegraphics[width=0.8\linewidth]{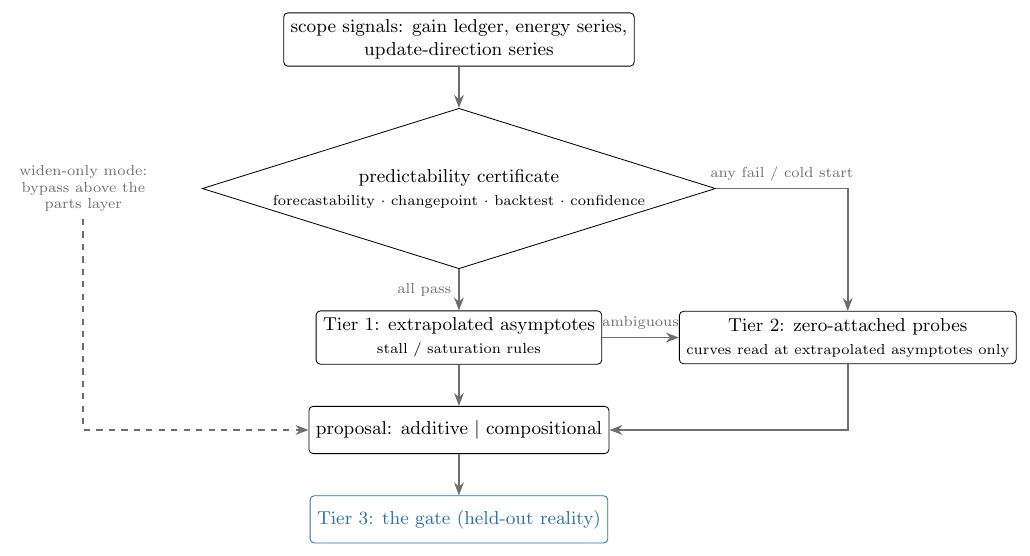}
\caption{Direction selection: predict first (under the
predictability certificate), probe only to verify (asymptote-read
curves), adopt only by the gate. The probe READING is the measured
part: single-horizon prices carry no axis signal; the
two-horizon slope does (\S\ref{sec:campaigns-results}).
The widen-only mode bypasses the
machinery above the replaceable-parts layer
(\S\ref{sec:mode}).}
\label{fig:policy}
\end{figure}

\subsection{Regime containment and the growth mode}
\label{sec:mode}

The two realizations are one released system. The additive
realization is \emph{exactly} the adaptive system restricted to
its widen-only mode: a system-level control with two named values
selects between adaptive two-direction growth (the default) and
the purely additive regime (the code-level mode name; the
regime includes both additive operators, $\rho$ and $\omega$), in
which the machinery never invokes $\delta$, no certificate or
probes are computed, and Prop.~\ref{prop:rhostatus} applies
in full --- the proposition is the exact characterization of the
widen-only mode. These are two governed paradigms, not a weak and
a strong system: the additive regime is the multiscale discipline
of the classical numerical methods --- per-scale audit quantities
well-defined, perturbations bounded --- while the adaptive regime
extends the axiom to the compositional axis, adoption still
earned per site. The control is enforced above the
replaceable-parts layer, so no part substitution can bypass it;
it is a callable system interface in the released code (policy
level; tool-surface exposure is future work); and a
mode flip never alters an existing model's function --- the
forward computation is determined by the model's \emph{state}
(its blocks), never by policy, so switching modes changes what may
grow next, not what the model is. The active mode travels in
every decision record.

\subsection{The scale hierarchy: subordination as a growth
invariant}
\label{sec:scalehierarchy}

The multiscale reading of refinement carries an obligation the
operator algebra must enforce, not merely suggest: a grown inner
body is a \emph{fine-scale correction}, and a correction is
subordinate to the field it corrects. In the numerical treatment
of multiscale physics the fine model lives strictly inside one
coarse element and never approaches the coarse field's mass;
the
transported invariant here is quantitative. First,
\emph{subordination in size}: the host scope's own parameter mass
must dominate the body's by a large factor (a policy ratio, with
a default two orders of magnitude), so that no single fine-scale
correction can carry a load comparable to its host. Second,
\emph{an absolute floor}: a host below a minimum trained mass does
not refine at all --- growing corrections onto a coarse field that
is itself negligible inverts the hierarchy. Third,
\emph{subordination in time}: topology changes only after the base
has taken shape --- a minimum of trained steps precedes the first
structural event, and each event is small relative to the trained
mass it joins. All three are policy quantities with logged
violations (warn or refuse, selectable); none is a property the
model could drift out of silently. The invariant also bounds what
any one mechanism of \S\ref{sec:loops} can do: a local loop lives
on a scope whose share of the whole is already governed.

\section{Local Solving Loops: Computation That Solves Inside
Itself}
\label{sec:loops}

\subsection{The principle}
\label{sec:loops-principle}

In the numerical treatment of nonlinear continua, the global
solve routinely embeds \emph{local} solves. At every integration
point of a plastically deforming body, a small constitutive
system is satisfied by its own inner iteration --- the return
mapping, a local Newton loop --- invisible to the global
equilibrium iteration that contains it, convergent inside every
global step, with its own tolerance and its own budget. The
global method does not become fixed-point-free because the local
one iterates; the two live at different scales of the same
computation, and the discipline that keeps them compatible is
locality: the inner loop solves only its own point's equations,
touches only its own point's state, and returns control having
converged or having spent its budget.

This section transports that structure to a growing model. A
scope of the substrate may embed a \emph{bounded local solving
loop}: a computation with its own objective, its own convergence
guarantee, its own budget, running strictly inside one scope,
leaving the global learning contract --- gradient flow, gate
adjudication, audit --- untouched. Two quantities can be locally
solved. The first is the scope's \emph{state}: the forward pass
itself may relax to a fixed point where the task demands
iteration (\S\ref{sec:loops-lambda}). The second is a newborn
structure's \emph{weights}: a freshly grown body may refine
itself against a local objective before it participates in the
whole (\S\ref{sec:loops-spu}). The two realizations are
independent mechanisms --- separately implemented, separately
enabled, separately governed --- of one methodological move, and
both inherit the multiscale lineage of this work: they are the
transported forms of solving locally what is local.

\subsection{Realization I: grown cycles --- the loop operator
(\texorpdfstring{$\lambda$}{lambda})}
\label{sec:loops-lambda}

Every operator of \S\ref{sec:operators} grows an acyclic
structure: signals flow one way, once. Where a law's natural
computation is iterative --- its solution defined by an equation
rather than by an expression --- an acyclic pass of fixed depth
must approximate the \emph{result} of an iteration without
performing one. The loop operator makes iteration itself
growable.

\paragraph{Definition.} A $\lambda$-block on a scope is a triple
$(L_{\mathrm{in}} \in \mathbb{R}^{m\times H},\ b_\lambda \in
\mathbb{R}^{m},\ L_{\mathrm{out}} \in \mathbb{R}^{H\times m})$
attached at the end of the scope's composition chain. Where the
chain's output was $H_L$, the head now reads the fixed point
$z^{*}$ of
\begin{equation}
z \;=\; H_L \;+\; \varphi\!\left(z L_{\mathrm{in}}^{\top} +
b_\lambda\right) L_{\mathrm{out}}^{\top},
\label{eq:lambda}
\end{equation}
computed by relaxation from $z^{(0)} = H_L$ and stopped at a
tolerance or at a hard iteration cap $K_{\max}$; reaching the cap
returns the last iterate --- bounded compute is the contract, not
an exception. The scope's computation is thereby no longer a
traversal but a \emph{convergence}: the cycle is the first
structure in the algebra whose forward semantics is an equation
--- the fixed-point-layer form of computation introduced by
deep equilibrium models~\cite{deq}, employed here as a mature
tool.

\paragraph{Properties.} Four properties make the cycle a
governed operator rather than a hazard. \emph{(i) Well-posedness
under an enforced certificate.} If the loop map is a contraction
--- guaranteed when $c_\varphi\,\sigma_{\max}(L_{\mathrm{in}})\,
\sigma_{\max}(L_{\mathrm{out}}) \le \rho_{\max} < 1$, with
$c_\varphi$ the activation's Lipschitz constant --- the fixed
point of Eq.~\eqref{eq:lambda} exists, is unique, and the
relaxation converges to it geometrically (Banach's fixed-point
theorem~\cite{banach1922}; the certificate is precisely the
theorem). The inequality is not an assumption but an
\emph{enforced invariant}: after every training step on a looped
scope the bound is recomputed, and a violating
$L_{\mathrm{out}}$ is rescaled to restore it exactly ---
a projection, counted and auditable, never silent. The
certificate triple (cap, tolerance, budget) is validated for
mutual feasibility ($\rho_{\max}^{K_{\max}} \le$ tolerance) at
application --- the budget then suffices to reach the tolerance
from unit-scale initial gaps, the geometric rate covering larger
gaps with logarithmic surcharge. \emph{(ii) Exactness at application.} The operator
enters with $L_{\mathrm{out}} = 0$: the first iterate returns
$H_L$ unchanged and the relaxation terminates immediately, so
$z^{*} \equiv H_L$ and the scope's function is bit-identical at
the moment of growth --- the same zero-entry argument that
governs every operator of Table~\ref{tab:operators}, applied to
a cycle. \emph{(iii) Bounded serving cost.} A model that has
grown a cycle iterates when it serves; $K_{\max}$ caps the cost
of every forward pass, and the iteration count is an audited
quantity. \emph{(iv) Benign credit assignment.} Training
differentiates through the executed iterates (the stop index
held constant); under the certificate the backward products
decay geometrically, so the unrolled gradient neither explodes
nor requires implicit differentiation --- verifiability is kept.

\paragraph{Position in the algebra.} The cycle joins the
algebra as a \emph{sixth operator} --- the five of
Table~\ref{tab:operators} grow acyclic structure; $\lambda$ is
the one that does not --- and as a \emph{third growth
direction}; the resource it buys is distinct in kind: width
buys capacity at a scale, composition buys serial depth inside a
scope, the loop buys \emph{iteration and state}. It obeys the
full governance contract --- budgeted (one cycle per scope, at
the chain's end), audited (growth, removal, projection, and
iteration counts are ledgered), removable (exactly restoring the
pre-cycle function when untrained), and \emph{off by default}: a
policy switch must enable it, and the default world grows no
cycles. What is claimed is deliberately bounded: no enlargement
of the reachable function class on compact domains is asserted
--- the claim concerns the \emph{required size and structure} of
acyclic approximations to iteratively defined laws, and where
that translates into measured value is a question the paper
leaves to its pre-registered program (\S\ref{sec:loops-status}).

\subsection{Realization II: self-processing units}
\label{sec:loops-spu}

Growth as defined so far is structurally exact but
developmentally passive: a newborn inner body enters at zero and
waits for the global gradient to shape it. The second
realization gives the newborn a bounded period of \emph{active
self-shaping} --- a local solving loop over its own weights ---
before and while it earns participation.

\paragraph{Definition.} A newly grown inner body, during a
policy-bounded newborn window, runs a small number of
weight-adjustment steps per training step against a \emph{local
process objective}: a quantity computed entirely from the body's
own forward behavior on the data it actually receives, blind to
labels and to the global loss. The objective is \emph{functional
consistency, guarded against collapse}: writing $z^{(k)}_i$ for
the body's response to sample $i$ under the $k$-th bounded
internal perturbation of its computation, $\bar z_i$ for their
mean, $s(\theta)$ for the batch dispersion of the unperturbed
response, and $s_{\mathrm{entry}}$ for that dispersion at
enrollment, the loop minimizes
\begin{equation}
J(\theta) \;=\;
\frac{1}{Kn}\sum_{k=1}^{K}\sum_{i=1}^{n}
\bigl(z^{(k)}_i - \bar z_i\bigr)^{2}
\;+\;
\gamma\,\max\!\bigl(0,\;
\rho_{\mathrm{floor}}\, s_{\mathrm{entry}} -
s(\theta)\bigr)^{2}.
\label{eq:spu}
\end{equation}
In the released system $K = 4$ perturbation copies,
$\gamma = 1$, and $\rho_{\mathrm{floor}} = 0.5$ (policy
defaults, per-model overridable). The first term asks the
response to be stable under
perturbation; the second is a hinge that fires only when the
response scale falls below a floored fraction of its entry
scale --- the collapse guard, since the degenerate minimizer of
consistency alone (answer everything identically) drives
$s(\theta)$ to zero and sits deep inside the hinge's penalty
region. The loop is bounded by an iteration budget and a
relative tolerance; convergence or budget exhaustion both
return control.

\paragraph{The order is structural.} Self-processing composes
with the recursive substrate through one invariant: \emph{process,
then participate, then learn}, at every depth. On each training
step, eligible newborn bodies first self-process, then the
released forward computation runs with their refined weights,
then the global learning step assigns credit --- one visit per
unit, no re-entry, at any nesting depth. The invariant is what
makes self-processing a \emph{scale-local} mechanism rather than
a second optimizer: the global contract sees only a model whose
fine structure arrives at each step slightly better prepared.

\paragraph{Governance.} The loop is label-blind by construction;
serving-pure (it exists only at evolution time --- a serving
model never self-processes and records nothing); bounded (budget
and tolerance are policy); optimizer-neutral (the global
optimizer's state is untouched by construction, an invariant the
released tests pin bitwise); hierarchy-respecting (eligible
bodies are exactly the newborn fine-scale corrections of
\S\ref{sec:scalehierarchy}, so the mechanism follows the growth
frontier); and fully audited (every processing event, skip, and
its reason is ledgered). Where a body's realization does not yet
support the loop, the unit is skipped with a disclosed reason ---
staging is auditable, never silent.

\subsection{One discipline, two solved quantities}
\label{sec:loops-compare}

\begin{table}[t]
\centering\small
\caption{One principle, two realizations. Both are local solving
loops embedded in the global pass; they solve different
quantities on different clocks under the same governance
contract, and each is enabled by policy, never by fiat.}
\label{tab:loops}
\begin{tabular}{@{}lll@{}}
\toprule
 & Grown cycle ($\lambda$) & Self-processing unit \\
\midrule
Solved quantity & scope \emph{state} $z^{*}$ & newborn \emph{weights} \\
When it runs & every pass on the scope & newborn windows only \\
At serving & iterates (capped) & absent by construction \\
Guarantee & enforced contraction & bounded loop; collapse-guarded
objective \\
Cost bound & $K_{\max}$ per pass & $S_{\max}$ per step \\
Entry & exact ($L_{\mathrm{out}}=0$) & n/a --- refines weights;
adoption gated \\
Adoption & the gate, as always & the gate, as always \\
\bottomrule
\end{tabular}
\end{table}

Table~\ref{tab:loops} aligns the two realizations. They share
locality (one scope, one body), boundedness (hard caps validated
for feasibility), auditability (every event ledgered), exact or
neutral entry, and gate adoption; they differ in the quantity
solved and the clock on which they run. Neither knows of the
other, and a model may carry both. The shared discipline is the
section's thesis: iteration is admitted into a growing model not
as an architectural conviction but as a \emph{governed local
resource}, purchased where a scale demands it and accounted like
every other structural liberty in this paper.

\subsection{Epistemic status}
\label{sec:loops-status}

Both mechanisms are released with the system and carry
pre-registered examination programs in the style of
App.~\ref{sec:empirical}; their MECHANICS carry measured
verdicts in the evidence table (exact entry bitwise, the
enforced contraction bound with its certified activation
Lipschitz constant $C_G = 1.128994$,
finite-difference-verified loop gradients, bitwise
optimizer-neutrality for the self-processing loop); their value
adjudication at qualified scale is open at the time of
writing. This section
accordingly claims \emph{mechanism and governance} ---
well-posedness, exactness or neutrality at entry, boundedness,
auditability, and the multiscale discipline that contains both
--- and no empirical superiority.

\section{Growable Attention: Capacity Follows Demand}
\label{sec:growatt}

Everything before this point grew feed-forward and cycle
structure and governed it; the attention component itself
remained as construction cast it. This section removes that exception. The
result is a demanding test of the theory: attention is the
substrate family's load-bearing mechanism, and if the axiom of
total plasticity is to mean anything, it must reach the heads.

\subsection{The object and its two layers of existence}
\label{sec:growatt-object}

A multi-head attention block exists twice over. As
\emph{parameters} it is a set of per-head projection matrices
$(W^Q_h, W^K_h, W^V_h, W^O_h)$; as \emph{behavior} it is, for
every input row, a family of probability distributions --- each
head, at each position, distributes one unit of attention over
the admissible positions. The standard form freezes both layers
at construction, and each freeze is a lock-in of exactly the
kind \S\ref{sec:lockin} names. First, \textbf{head lock-in}: the
per-head width is fixed by fiat at $d_h = d_{\mathrm{model}}/H$,
equal for all heads forever, so a head serving a heavy relation
and a head serving a trivial one are constrained to the same
capacity --- the demand inequality is well documented from the
destructive side by the pruning
literature~\cite{michel2019sixteen,voita2019analyzing}, which finds a large
fraction of heads removable at little cost; the constructive
response has been missing. Second, \textbf{no local
accountability}: no objective in the standard training loop ever
evaluates the attention distributions themselves; a head may
collapse its distribution to a delta or diffuse it to uniform,
and nothing in the loss says so until the damage has propagated
to the output, where a global signal cannot localize it.

The section's claim is that both freezes are removable under the
same contract that governs every other operator in this theory:
evidence-triggered, exactly function-preserving at application,
gate-adjudicated on held-out reality, budgeted and audited.
Unequal head widths then appear not as an architectural
hyperparameter but as a \emph{lifecycle outcome}: the record of
where demand actually fell.

\subsection{Exact growth of heads}
\label{sec:growatt-ops}

Two operators extend the algebra of \S\ref{sec:algebra} to the
attention block: \textsc{head-add} births a new head of minimal
width beside its siblings; \textsc{head-widen} grows one head's
$d_h$ by a chosen increment. Both are exact at application, and
their exactness is not a numerical accident but a structural
consequence worth stating.

\paragraph{The two-step trainability property.} Exactness forces a
zero factor: every pathway a growth operator opens must carry
at least one exactly-zero factor at application, or the model
function moves. The same zero then buys its price --- the side
of the pathway whose gradient passes \emph{through} the zero
factor is first-order blind for exactly one training step ---
and \emph{where} the zero sits decides who waits. The two
operators place it differently, so the property has two
instantiations, both measured on the live substrate.
\textsc{head-add} puts the zero at the exit: born with
$W^O_{\mathrm{new}} = 0$ (generators seeded at host scale),
the new head contributes exactly nothing (measured: bitwise
$0.0$ deviation at application), and the task loss is
first-order blind to \emph{every} generator matrix --- at step
one only $W^O$ moves, with $W^Q, W^K, W^V$ measured exactly
zero; at step two, with $W^O \neq 0$, all come alive.
\textsc{head-widen} places the zeros \emph{pairwise inside}
the pathway: the new key columns and output rows are the zero
side, the new query and value columns are seeded, so every new
product carries one zero factor (exact at application,
deviation $\le 10^{-12}$) while each zero-side matrix faces a
seeded partner --- the task gradient reaches it at once. At
step one the zero side ($W^K$ columns, $W^O$ rows) moves under
the plain task signal; the seeded side ($\widetilde{W}^Q, W^V$
columns) is the blind one, its task gradient exactly zero
until step two. An
all-zero widening --- both sides zero --- is refused by the
operator itself: it is a certified total saddle, a pathway no
gradient can ever enter, which is freezing by construction and
therefore out of compliance with the axiom. One step of
latency, placed by the operator's choice of zero, is the whole
price of growth that never moves the function. The property holds identically for widening, where the new
columns are the zero side. This is the price and the proof of
exact growth: one training step of latency, in exchange for a
function that never jumps.

\paragraph{The absorbed normalizer.} The textbook
$1/\sqrt{d_k}$ is evaluated at runtime from the current width,
so widening a head would silently rescale its existing logits
--- a measured $O(1)$ shift (1.27 in the reference
configuration) that violates exactness before training even
begins. The scale is therefore \emph{absorbed} into the
projection parameters at birth and carried by them thereafter;
widening then composes with the existing scale exactly
(deviation $\le 10^{-12}$ under the uniform absorbed
convention).

\paragraph{The additive output identity.} Per-head outputs
concatenated and passed through the block projection equal the
sum of per-head width-slices of that projection (blockwise
product $=$ concatenation; measured $3.6\times 10^{-15}$,
i.e.\ floating-point exact). This identity is what makes
\emph{per-head} growth well defined: a head's capacity can
change without renegotiating any other head's contract with the
output.

\subsection{Evidence and decision}
\label{sec:growatt-evidence}

Where to grow is decided by the same instrument family that
serves the rest of the theory (\S\ref{sec:signals}), now
computed per head: the instability signal $u_h$; the
\emph{loading} of each head (its share of the block's output
energy on live data); and the block's capacity participation
ratio. The two signals answer two different questions.
\textsc{widen} answers \emph{one head under demand}: among heads
past a minimum age, the head whose $u_h$ stands out against its
peers ($u_h \ge \kappa_{\mathrm{att}}\times$ the aged mean,
$\kappa_{\mathrm{att}}=2$ as policy) is widened.
\textsc{add} answers \emph{demand not attributable to any single
head}: when the loading spreads across the layer --- participation
ratio at least $(1-\beta_{\mathrm{att}})H$ over the $H$ heads,
$\beta_{\mathrm{att}}=0.1$ as policy --- a new head is born. A single-head
layer widens first and escalates to \textsc{add} only if a
recently accepted widen failed to relieve the same trigger (the
ledger keeps that memory); newborn heads are excluded from
judgment until a minimum age, because a newborn's instability
statistic starts at its construction value --- an artifact of
the estimator, not evidence of demand. The event
itself is the standard governed speculation of
\S\ref{sec:governance}: suggest from evidence, apply exactly,
probe-train, and let the gate adjudicate on held-out data ---
accept or refuse, with the refused candidate discarded and the
incumbent function untouched. One disclosure about protocol
history: an earlier generation of these runs drew its probe
epoch from the adjudication batch itself; the released system
separates the two lanes (the probe trains on the training lane
only, the holdout is scored only, a probe-less proposal is
refused with an explicit logged error), and \emph{every gate-adjudicated lane reported
in this paper was re-run under the separated protocol} --- the
numbers printed here come from the clean runs, and no verdict
changed under the migration. Nothing about attention required a new \emph{adoption}
concept; this uniformity is the design's intent.

\subsection{The attention discipline: a third local
objective}
\label{sec:growatt-disc}

The second freeze --- no local accountability --- is lifted by
giving the distributions their own objective. For each head and
each input row $i$ with $F_i$ admissible positions, the row's
attention entropy $H_i$ is held to a band
$[\,\alpha_{\mathrm{lo}} \log F_i,\; \alpha_{\mathrm{hi}} \log
F_i\,]$, and the discipline objective $J_{\mathrm{att}}$
penalizes quadratic excursions beyond either edge. The band is
a \emph{plasticity floor}, and the connection to the axiom is
direct: a collapsed row is not merely inelegant --- its softmax
is saturated, its gradient is dead, and a dead gradient is
freezing by another name. A row at uniform is the complementary
death: attention that selects nothing has ceased to compute.
The band keeps every row in the regime where it can still
learn; entropy~\cite{shannon1948} is here the measure of a
distribution's
\emph{order}, and the objective is a thermostat, not a
maximizer --- both too much order (collapse) and too little
(diffusion) are penalized.

The gradient of $J_{\mathrm{att}}$ through the softmax is
derived in closed form and certified against finite differences
at $3.0\times 10^{-11}$ across the causal, vector, and
binding-hinge configurations. Its application obeys a
\emph{locality rule}: parameter gradients flow from the combined
signal (task $+\ \lambda J_{\mathrm{att}}$), but the gradient
handed \emph{onward} to the block input carries the task signal
alone. The rule is enforced by measurement rather than
convention --- the uncontrolled variant leaks a measured
discipline component into upstream blocks (the red-arm
experiment), quietly converting a local objective into an
unaccounted global regularizer; the split keeps the discipline
a strictly local affair. (The axiom is not implicated: it
governs \emph{trainability}, and the split shields no
parameter from adaptation --- what it routes is \emph{credit},
the same defense the Discussion gives for target handing.) Gates bound the discipline's
reach: a warmup period, a minimum head age, and a per-head
switch, all policy, never fiat.

The entropy-collapse phenomenon itself is documented in the
transformer literature, together with a global preventive
solution --- a spectral-norm reparameterization that keeps
entropy from collapsing anywhere, ever~\cite{zhai2023stabilizing}; the underlying mechanism (softmax
saturation killing the gradient) is the same one the band's
lower edge guards. The present mechanism
is its local restorative complement: it does not forbid the
pathology globally but detects and heals it where it occurs ---
measured on induced collapse ($76.9\%$ of rows out of band at
induction), the end-of-window fraction is $94.4\%$ untreated
($\lambda = 0$: on task gradient alone the pathology
\emph{worsens}) and falls monotonically in the dose ---
$94.1\%$, $83.4\%$, $39.6\%$ at $\lambda = 0.05, 0.5, 2.0$.
The comparison of costs is reported in
\S\ref{sec:growatt-verdicts}.

\subsection{The exact-expectation SPU objective}
The bounded internal perturbations of \S\ref{sec:loops-spu}
are realized in the released system as random coordinate
masks; write $J_{\mathrm{inv}}$ for that estimator.
\label{sec:growatt-spu}

Self-processing sessions (\S\ref{sec:loops-spu}) judge convergence by
an objective that was, in its first form, estimated by random
masking --- a standard Monte-Carlo estimator~\cite{metropolis1949}: cheap, but stochastic. For the leaf case the
mask estimator $J_{\mathrm{inv}}$ (the session objective
evaluated on random coordinate masks; $K$ the mask count) has a
closed-form expectation under its own truncated masking law:
$\mathbb{E}[J_{\mathrm{inv}}] = (1 - 1/K)\, J_{\mathrm{an}}$,
with $J_{\mathrm{an}}$ the analytic form --- a small algebraic
identity, so the analytic objective serves in the mask form's
place, with any threshold calibrated on the mask scale rescaled
by $(1-1/K)$. On the certification ensemble the empirical means
differ by less than one standard error ($|z| = 0.73$), as an
identity should. What determinism buys is quantified by
the committed certification script: the mask estimate's
per-draw standard deviation is of the order of its mean
($\mathrm{sd}/\mathrm{mean} = 0.899$), giving an $18.4\%$
probability that a single draw crosses the stop
threshold by chance at a step where the true objective does not, and a
$12.2\%$ incidence of degenerate identical-mask draws at the
default policy; the analytic form has neither pathology, at
lower compute cost. The verdict-parity experiment below confirms
that at decisive regimes the two forms never disagree ---
the stochastic pathology lives entirely in the ambiguous zone.
Outside that zone the closed form is first-order and the mask
audit remains in service; the boundary is stated, not hidden.

\subsection{Falsifiable predictions and their verdicts}
\label{sec:growatt-verdicts}

Four predictions were printed before the experiments, each with
its failure condition; the pre-registered stop rule (a negative
or null verdict is reported at full prominence and no
post-hoc mechanism is added to rescue it) applied throughout;
the appendix carries the verdict table
(App.~\ref{sec:aseries}), the stream case
(App.~\ref{sec:streamcase}), and the generalization campaign
(App.~\ref{sec:gseries}).
The experiments ran at kilobyte scale (single layer,
$d_{\mathrm{model}} \le 64$), five seeds per cell, every run
recorded with its resolved configuration in a write-once store;
every number below carries run identifiers in the committed
analysis twin or its campaign store. The principal capacity results these
predictions feed --- including the comparisons against a
budget-matched transformer and a lifelong-trained fixed network
--- are reported in \S\ref{sec:res-capacity}, and each verdict's
row appears in Table~\ref{tab:evidence}.

\paragraph{P1 (growth tracks demand).} \emph{Refuted in its
strong form; the mechanism's payment verified in its intended
regime.} Across a complexity ladder at adequate birth capacity,
grown width did not correlate with task complexity --- growth
was neither needed nor harmful (the gate's no-harm floor held:
paired ablation deltas $\approx 0$). Under \emph{starved} birth
capacity, however, whenever the trigger fired, growth paid
large (held-out error $-35\%$ and $-64\%$ in the firing seeds),
and the archive census localizes the strong form's failure to
the \emph{acceptance} step, not the trigger and not the
operators: the plateau trigger fired in all five starved seeds
(2--6 widen attempts each), but the widen probe's own training
degraded the trial in three of them, so the gate accepted
growth in $2/5$ --- the same probe self-injury later measured
directly in the governed-gate rounds
(App.~\ref{app:x2}). The precise
statement: growth pays where demand exceeds capacity; sensing
that condition reliably is the open engineering item, and the
theory's own instruments (\S\ref{sec:growatt-evidence}) are the
designated path.

\paragraph{P2 (local discipline is more effective than global repair).}
\emph{Confirmed, with the locality dividend visible in cost.}
On induced attention collapse, the local discipline restored
$97\%$ of lost held-out performance at roughly $7\%$ of the
task-work cost (gradient-update work on task batches) of a
global-retraining control --- ordinary task training over the
same window, the natural repair a practitioner would reach for
--- which restored $49\%$; no arm against the published
\emph{preventive} method was run (the two answer different
regimes, App.~\ref{sec:related}); the dose-response of
\S\ref{sec:growatt-disc} establishes the causal chain.

\paragraph{P3 (instruments lead the loss).} \emph{Confirmed,
with a width-dependent boundary.}
Under mid-service distribution shift, the per-head instrument
trajectories crossed their pre-shift envelopes on average
$4.9$ evaluation periods before held-out error crossed its own
(positive lead in $11/15$ runs --- five seeds at each of three
widths; sd $16.4$, range $-35$ to $+24$). The four misses
concentrate at the narrow end: $3/5$ at $d_{\mathrm{model}}=16$
(worst $-35$), $1/5$ at $32$ ($-5$), $0/5$ at $64$ --- with few
heads the per-head statistics are coarse and their smoothed
trajectories can lag an abrupt error jump, so the instrument's
lead is reliable at the widest tested width and noisy at the
narrowest. A false-alarm characterization ran as a post-review control
(App.~\ref{sec:aseries}): the identical lane with no shift
fires the $2\sigma$ envelope in $13/15$ runs, so the raw
envelope-crossing criterion is sensitive but noisy --- the
measured lead is a median tendency, a statement about
\emph{when} the instruments move, not a usable standalone
alarm at $2\sigma$; any deployment needs a debounced
threshold, and the false-alarm rate is on the record.
Within that boundary, the model's self-knowledge
is temporally ahead of its performance record, which is what
makes governed \emph{anticipation} possible in principle.

\paragraph{P4 (analytic $=$ mask at decisive regimes, cheaper).}
\emph{Confirmed exactly.} Across the convergence-verdict
ensemble, decisive-regime parity was perfect, with the analytic
form's variance identically zero; the mask form's
spurious-stop and degeneracy rates are the quantified stochastic cost of the
estimate (\S\ref{sec:growatt-spu}).

Two boundary results complete this section's scope.
First, the discipline's healing is \emph{local} in reach by
design: heads it is gated off from are untouched (the per-head
switch verified live). Second, growth under this section's
operators changes \emph{capacity}, not computational class ---
a task whose difficulty is an exact accumulator the attention
family lacks is not rescued by wider heads; that boundary
belongs to the architecture literature and is stated here as
scope, not tested as a defect of the mechanism.

\paragraph{Serving heads.} Attention hosts serve point and
distributional outputs through standard head components; these
heads are standard components and are claimed only as
governed constituents --- the contributions here are the exact
growth, grading, and governance of the structure beneath them.

\section{The Governed Lifecycle}
\label{sec:lifecycle}

The lifecycle is administered by a \textbf{factory} that
manufactures, serves, versions, gates, and audits models --- and
itself \textbf{learns nothing}: business data flows only into each
model's own versioned store, so the surrounding machinery stays
generic across domains and carries no residue of any one of them.
Everything below is a factory verb operating on one model's two
states (Figure~\ref{fig:lifecycle}).

\begin{figure}[t]
\centering
\begin{tikzpicture}[
  box/.style={rectangle, rounded corners=3pt, draw=black!70,
              align=center, inner sep=5pt, font=\small},
  gate/.style={diamond, draw=black!70, aspect=2.2, inner sep=1.5pt,
               font=\small},
  lab/.style={font=\scriptsize},
  >={Stealth[length=2.2mm]}]
\node[box, fill=black!4] (brain) at (0, 2.6)
  {\textbf{LLM (the brain)}\\ \scriptsize language, context, feature
   extraction --- operates everything via MCP tools};
\node[box] (working) at (-4.6, 0)
  {\textbf{working state} $w$\\ \scriptsize study / practice /
   $\omega$ $\rho$ $\delta$ $\sigma$ $\Phi$\\ \scriptsize (speculation is free)};
\node[gate] (gaten) at (0, 0) {\shortstack{gate\\
  \scriptsize $q_w > q_s$?}};
\node[box] (served) at (5.0, 0)
  {\textbf{committed version} $s$\\ \scriptsize serves \texttt{infer};
   versioned lineage;\\ \scriptsize rollback};
\node[box] (holdout) at (0, -2.3)
  {\textbf{held-out stream} $\mathcal{H}$\\ \scriptsize time-stamped
   reality; quarantined;\\ \scriptsize the gate scores its recent slice};
\draw[->] (brain) -- node[lab, left] {teach / grow / self-study grants}
  (working);
\draw[->] (brain) -- node[lab, right, xshift=1mm] {infer\, /\,
  add\_holdout\, /\, check\_drift} (served);
\draw[->] (working) -- node[lab, above] {commit} (gaten);
\draw[->] (gaten) -- node[lab, above, pos=0.34, xshift=-1mm] {promote (strict)} (served);
\draw[->, black!60] (gaten.south) ++(-0.15,0.05)
  to[bend left=18] node[lab, below, yshift=-0.5mm]
  {refuse (logged)} (working.south);
\draw[->] (holdout) -- (gaten);
\end{tikzpicture}
\caption{The governed lifecycle. Speculation (learning and all
growth operators) is free in the working state; serving changes only
through the gate; the held-out stream is quarantined from all
training. Teaching by the brain can
therefore never degrade the served model (threat model:
\S\ref{sec:llm}).}
\label{fig:lifecycle}
\end{figure}

\paragraph{Working vs committed.} All learning verbs act on the
working state: supervised study, practice
updates, every operator of Table~\ref{tab:operators}. Serving reads the committed version.
\texttt{commit} runs the gate; \texttt{reset} discards the session;
\texttt{rollback} re-points the active version within the audited
lineage. The asymmetry is deliberate: speculation is free;
adoption is earned.

\paragraph{Store-per-version and forgetting.} Every version carries
its own accumulated training store; a candidate is trained \emph{from
scratch} from its store. At kilobyte scale this full-replay
discipline is cheaper than any forgetting-mitigation machinery and
makes catastrophic forgetting a non-issue at this scale (full
replay: nothing is forgotten that is re-trained from the record
every time); where the drift protocol's recency window sheds
older data, forgetting returns as \emph{explicit, logged
policy} rather than as an accident of optimization. It
also makes every version a pure function of its store ---
reproducibility as a side effect --- and confines a rejected
candidate's data to the rejected version's directory, outside the
served lineage.

\paragraph{The drift protocol.} \texttt{check\_drift} compares the
active version's metric on the recent held-out slice against the
score recorded at its promotion; decay beyond tolerance raises
\texttt{needs\_reteach}. Adaptation is then ordinary teaching under
two recency parameters: $\texttt{window} = N$ trains on only the most
recent $N$ stored examples (shedding labels the new reality
contradicts, while the full store remains in the lineage), and
$\texttt{recent\_n} = N$ judges the gate on the most recent $N$
held-out examples --- necessary because a mixed-era holdout can score
the adapted candidate and the stale incumbent to an exact tie, and a
strict gate would then block adaptation forever. Drift handling is
thus not an operator's habit but an API contract: notice decay,
re-teach on the recent window, be judged on recent reality.

\paragraph{Granted self-study.} The model can also improve
\emph{itself}: \texttt{self\_review} is a read-only self-report
(retention trends, learning progress, saturation, a self-generated
question list --- see App.~\ref{sec:sseries} for what proves real), and
\texttt{run\_self} executes a budgeted self-study session ---
consolidation over the model's own store, optionally proposing growth
--- under explicit consent: granted budgets only, every block logged,
nothing autonomous, and the gate retaining sole promotion authority.
The holdout is never studied: self-study that could train on the
gate's own evidence would corrupt the certificate, so the store and
the holdout are disjoint by construction, and violations are
refused with an explicit logged error.

\paragraph{Readable regularities.} Categorically-shaped models
additionally mine their store into an ordered decision list in the
classical separate-and-conquer family~\cite{cn2,ripper} --- human-
and LLM-readable IF/THEN regularities with confidence and support,
surfaced through \texttt{discoveries} and cited by \texttt{infer}
when the mined rule agrees with the network's prediction. The
extension thereby returns to its brain something pre-training cannot
contain: the regularities of \emph{this} domain, as this model's data
shows them.

\paragraph{Two governance-integrity contracts.} Two
contracts of the served surface, added after the campaigns'
audit trails exposed their absence, are now part of the
lifecycle's specification: policy writes to the growth-policy
dictionary \emph{merge} one level deep (installed governance
keys survive unrelated writes; an explicit null removes a key),
and the parameter budget judges every growth entrance ---
direct verbs, plans, and bounded trials alike --- as a
\emph{whole act}, the primary move plus any auto-compliance
rider, before any mutation. Both contracts are enforced by the
same refusal discipline as every other gate.

\section{Evaluative Learning under Total Plasticity}
\label{sec:eval-learn}

Teaching supplies the model with labeled evidence; service
supplies it with outcomes. This section extends the governed
lifecycle to the second kind of signal: learning from
\emph{evaluations} --- scalar judgments of how well an episode
went --- without giving up any part of the total-plasticity
contract. One evaluative primitive appears at two levels of the
system, and both levels answer to the same gate.

\subsection{One evaluative primitive, two loops}
\label{sec:two-loops}

\paragraph{The S-loop: preference over structure.} Growth moves
(widen, deepen, and their sites) are discrete choices whose value
is only observable after the fact. The structure loop keeps, per
move family, a bucketed record of \emph{credited advantages}: when
the gate adjudicates a grown candidate against its incumbent, the
scored advantage is credited to the move that produced the
candidate, and the running sufficient statistics (count, mean,
variance) accumulate in the move's bucket. Selection over buckets
supports five interchangeable rules; the production default
samples each bucket's posterior mean under a fixed-scale Gaussian
perturbation, $\tilde{q}_m = \hat{q}_m + \sigma_0\,
\sqrt{\smash[b]{v_m/(w_m+1)}}\;\varepsilon$ --- where
$\hat{q}_m$, $v_m$, $w_m$ are the bucket's running mean,
variance, and count, $\varepsilon \sim \mathcal{N}(0,1)$, and
$\sigma_0$ is a fixed exploration scale in the posterior-sampling
family~\cite{thompson} --- so that exploration
narrows as evidence accumulates while no bucket's scale is ever
re-tuned against outcomes (the five rules: a fixed preference, a
clipped greedy mean, the sampling rule above, an
upper-confidence rule, and an $\varepsilon$-greedy rule). Preference never mutates the model: it
only orders which candidate is \emph{offered}; adoption remains
the gate's alone.

\paragraph{The P-loop: policy optimization on the model.} The
same model that learns from teaching can learn from episode
returns. The policy loop implements clipped-surrogate policy
optimization (the PPO/GRPO family~\cite{ppo,grpo}) directly on the growable
substrate through the pseudo-target identity
\begin{equation}
y^{*} \;=\; h \;-\; \frac{\partial \mathcal{L}}{\partial h},
\label{eq:pseudo-target}
\end{equation}
which converts any differentiable policy loss $\mathcal{L}$ at
the model's output $h$ into a teaching target for the existing
learning kernel: one mechanism serves both supervised evidence
and evaluative gradients. Advantage estimation~\cite{gae}, ratio clipping,
entropy regularization, and the optional KL term to a reference
policy are computed outside the substrate and enter only through
Eq.~\eqref{eq:pseudo-target}; the model itself remains a
teachable network with no reinforcement-specific machinery inside
its weights.

\paragraph{One governance.} Both loops answer to the paired-episode
gate: a candidate (grown or not) and its incumbent are scored on
episodes drawn from a quarantined evaluation namespace, adoption
requires a strictly better score, every adjudication appends an
audit record (episode count, scores, provenance of the move), and
rollback restores the pre-adoption version byte-for-byte. Five
interface laws fix how the two loops co-exist: the two trainers
are \emph{co-resident}, not sequential phases of a lifecycle
(LAW-1); teaching and reward learning \emph{interleave} within
one service life (LAW-2); anti-interference instruments ---
the incumbent-anchored reference and interleaved refresher steps
--- protect each regime from the other (LAW-3); the gate
adjudicates \emph{across} regimes, so an adoption must not trade
one regime's competence away for the other's (LAW-4); and the
S-loop is \emph{indifferent} to which regime produced the
evidence it credits (LAW-5).

\subsection{The quasi-static law extends to reward learning}
\label{sec:quasistatic-rl}

The paper's verification discipline --- a growable network at any
instant is a fixed network, so every claim about the growing
system must reduce, at each instant, to a claim checkable on a
fixed twin built from independent components --- extends to the
policy loop. The twin is constructed from PyTorch official
components only (\texttt{nn.Linear}, \texttt{Categorical},
autograd, \texttt{torch.optim})~\cite{paszke2019pytorch}, with the model's
weights transplanted; nothing of our implementation is on the
reference side. At pre-growth, post-deepen, and post-widen
instants the model and its twin agree in function to
$5.3\times10^{-13}$--$2.2\times10^{-11}$ maximum absolute
error (cross-library
\texttt{tanh} unit-in-last-place noise, arbitrated against a
50-digit \texttt{mpmath} evaluation~\cite{mpmath}), and the
function change \emph{at} a growth instant is exactly zero. The
policy-side computations --- values, generalized advantage
estimation, and the clipped loss --- agree at every checked
instant; a full 40-update training run yields parameter
trajectories identical to the official-autograd twin to
$1.8\times10^{-15}$, and the anchor-term pipeline matches the
official divergence-and-autograd path to $2.8\times10^{-17}$.
At a grown instant the model's internal Adam reproduces
\texttt{torch.optim.Adam} on the transplanted twin to
$7.1\times10^{-12}$ over a 50-step weight trajectory, and a
loss-convention arbitration identifies the model's gradient chain
as exactly $0.5\times$ the mean-square autograd convention
($3.0\times10^{-12}$) rather than assuming it. Beyond the twin,
component-and-outcome comparisons against authoritative
implementations --- Stable-Baselines3 full-pipeline training,
its official advantage buffer, gymnasium environments, and an
industry bandit library~\cite{sb3,gymnasium,mabwiser} --- pass in
all nine registered cases.

\subsection{Experimental design}
\label{sec:rl-design}

Worlds span an in-house seeded family --- a stationary control
world and a staged world run at a capacity-sufficient width ---
and the quadratic staged world (capacity-binding), whose stages
arrive on a schedule, plus external gymnasium
environments~\cite{gymnasium}: CartPole with a mid-life dynamics
change, LunarLander, and the sparse-reward task of
\S\ref{sec:rl-boundaries}. The central control is the
\emph{protocol-matched twin}: an arm whose candidates are trained
clones under the identical offer schedule, warmup budget, and
gate, so that any difference from the growing arm isolates
structure itself from the compute and protocol that surround it.
Instrument pre-verification --- registered after the first
battery and applied to every capacity-claim battery thereafter
--- runs a capacity-binding probe (does a larger fixed net
actually score higher here?) and an outcome-variance probe (is
the world's signal readable at the registered episode budget?)
before any arm, so that
arms are run only where the instrument can measure. Pass criteria
are registered before any arm runs; evaluation seeds for
selection and for scoring live in disjoint namespaces; adoption
costs (warmup episodes, evaluation episodes) are accounted
per-arm. Two anti-interference designs are part of the method:
the incumbent-anchored reference is installed before training
begins (its free twin otherwise byte-identical), and
task-consistent alternation compares regimes at an equal
reward-learning budget. Each battery runs five seeded lives per
arm (ten for the LunarLander gate battery); worlds, model sizes,
episode budgets, offer schedules, and pass criteria are frozen
in the registered design documents and per-run JSON stores named
in the Reproducibility Statement. The design laws here are
scale-independent; they are stated so that the same experiment
shapes can be applied to larger models unchanged.

\subsection{Results: a three-regime law}
\label{sec:rl-results}

Across worlds and arms the outcomes organize into three regimes
(Table~\ref{tab:three-regime}; Figures~\ref{fig:staged}
and~\ref{fig:cartpole}).

\begin{table}[t]
\centering\small
\caption{The three-regime law for structural growth under
evaluative learning. ``Time-axis value'' is area under the
per-episode score curve over the life; gaps are grown-arm minus
protocol-matched twin, and score scales are per-world ---
magnitudes are not comparable across rows. Registered criteria:
at least 3 of 5 per five-life comparison; for the LunarLander
gate comparison, an adoption in at least 6 of 10 lives.}
\label{tab:three-regime}
\begin{tabular}{@{}>{\raggedright\arraybackslash}p{0.30\linewidth}>{\raggedright\arraybackslash}p{0.24\linewidth}>{\raggedright\arraybackslash}p{0.38\linewidth}@{}}
\toprule
Regime & Net effect of growth & Evidence \\
\midrule
Capacity binds & higher time-axis value & quadratic staged
world: reward-only 3/5 (mean gap $+10.8$), alternating regime
3/5 at its registered criterion (mean $+1.1$); CartPole after
the dynamics change: 3/5 paired lives with one tie
(Figure~\ref{fig:cartpole}); LunarLander, gate-level merit only:
grown candidates pass the gate in 8/10 lives, trained clones in
0/10 (long-run net effect unresolved, \S\ref{sec:rl-boundaries}) \\
Capacity suffices (capacity-sufficient staged world), designed
alternating teach--reward regime &
cost-neutral & mean AUC gap $-1.9$ (ties dominate; the
registered value line missed, 0/5 --- the wash itself is the
recorded outcome, per the registered clause) \\
Capacity suffices (same world), reward-only operation (outside
the designed regime) & transient cost & mean AUC gap $-33.9$;
the designed regime removes 94\% of it \\
\bottomrule
\end{tabular}

\end{table}

Where capacity binds, the growing model attains higher time-axis
value than its protocol-matched twin (per-seed counts and
margins in Table~\ref{tab:three-regime}; within the designed
alternating regime the capacity-binding margin is small at this
scale --- 3/5 at its registered criterion, mean $+1.1$); on
LunarLander the
asymmetry appears at the gate itself --- grown candidates
demonstrate immediate merit that trained clones at equal budget
do not. Where capacity suffices and the system runs in its
designed alternating regime, growth is cost-neutral: the
alternation itself absorbs the adaptation transient that
reward-only operation would pay (94\% of the reward-only gap
removed). The only setting in which growth shows a net transient
cost is reward-only operation outside the designed regime ---
a regime the method does not prescribe.

Governance results are exact: candidates with nothing to offer
(untrained clones) are refused in 5/5 lives with final
evaluations identical to the no-offer control (a deterministic
pipeline with zero adoptions), and every adoption in
every life carries a complete audit record. The
anti-interference instruments measure as designed: the
incumbent-anchored reference lowers policy drift in 5/5 lives
(11.6--20.4 versus 25.8--33.7) at finals not worse in 3 of 5,
and interleaved refresher steps leave taught-knowledge probe
error $5.9$--$95\times$ lower at an equal
reward-learning budget (Figure~\ref{fig:anchor}). Finally, the two loops run together on
one model: in 5/5 stacked lives every adoption decision is
audited and the preference bookkeeping tracks the gate's verdicts
exactly, including the sign of a zero-gain refusal.

\begin{figure}[t]
\centering
\includegraphics[width=0.72\linewidth]{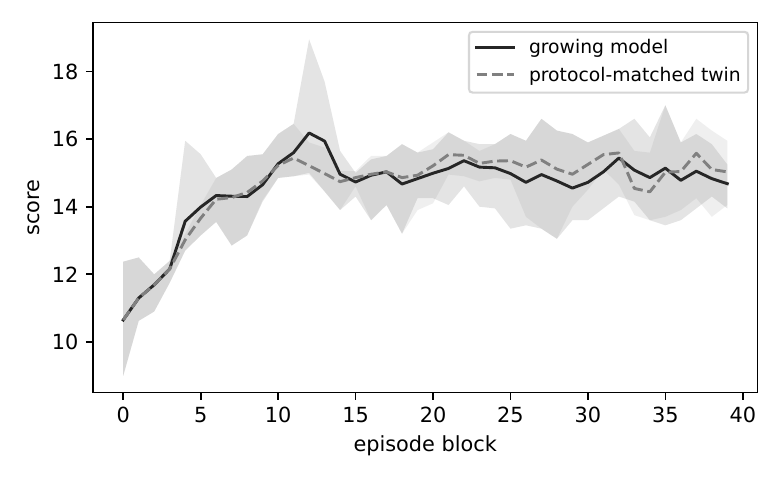}
\caption{The capacity-sufficient staged world (distinct from the
capacity-binding quadratic staged world of Table~\ref{tab:three-regime}),
designed alternating teach--reward regime:
mean score curves over five seeds (bands span the seeds). The
growing model and its protocol-matched twin track each other ---
the cost-neutral regime of Table~\ref{tab:three-regime}.}
\label{fig:staged}
\end{figure}

\begin{figure}[t]
\centering
\includegraphics[width=0.72\linewidth]{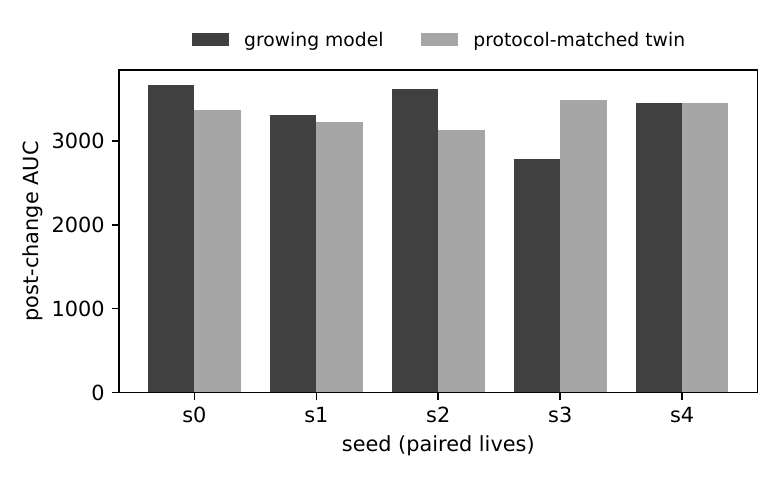}
\caption{CartPole with a mid-life dynamics change: paired
post-change area under the score curve, growing model versus
protocol-matched twin, per seed.}
\label{fig:cartpole}
\end{figure}

\begin{figure}[t]
\centering
\begin{minipage}{0.48\linewidth}
\centering
\includegraphics[width=\linewidth]{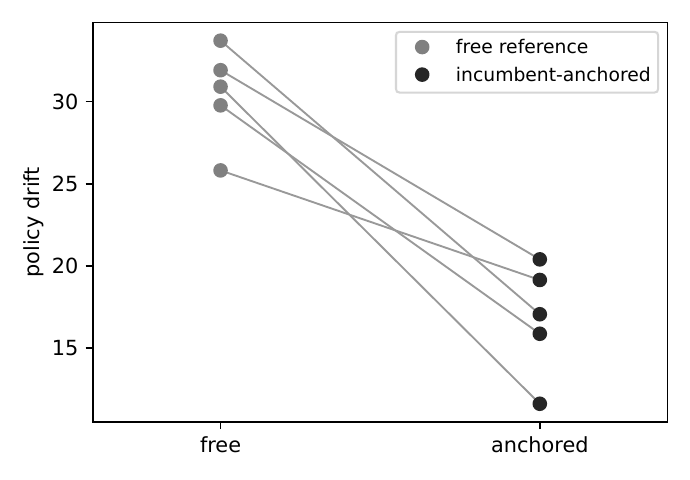}
\end{minipage}\hfill
\begin{minipage}{0.48\linewidth}
\centering
\includegraphics[width=\linewidth]{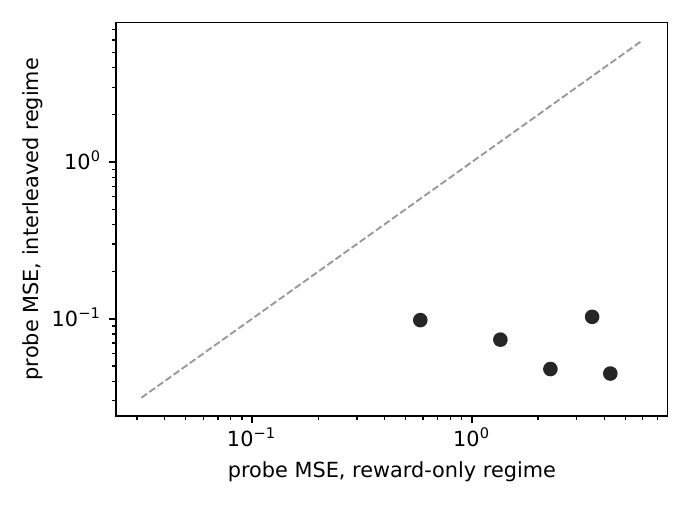}
\end{minipage}
\caption{Anti-interference instruments. Left: policy drift with a
free versus incumbent-anchored reference, paired per seed (lower
in 5/5). Right: taught-knowledge probe error under the
interleaved versus reward-only regime (log--log; all points below
the diagonal, by $5.9$--$95\times$).}
\label{fig:anchor}
\end{figure}

\subsection{Boundaries and negative results}
\label{sec:rl-boundaries}

Four boundaries are part of this record. First, a designed
falsification test: the hypothesis that growth's transient cost
is dominated by optimizer-moment loss was tested by transplanting
the incumbent's Adam moments into the grown candidate; the cost
did not move, and the hypothesis is refuted --- the transient is
carried by the function change, not the optimizer state. Second,
an instrument retirement: an adaptation-speed metric saturated
across arms and was retired in favor of area-under-curve
measures. Third, one sparse-reward task (Acrobot) remained beyond
the registered moderate-scale protocol after three mechanism-level
remedies; the proximate mechanism (policy-entropy collapse) was
identified by a live probe and the task is recorded as out of
reach at this scale rather than tuned into range. Fourth, the
long-run net effect on LunarLander lies below the resolution of
the moderate-scale protocol and is reported as unresolved rather
than claimed.

\section{An LLM Operates the Model}
\label{sec:llm}

\paragraph{The division of labor.} The LLM is the brain: it reads the
world, decides when a specialized judgment is needed, extracts the
features the model's learned shape names, calls \texttt{infer}, and
interprets the answer in context. The soft model is the brain's
growable extension: kilobytes of forever-trainable weights carrying
one domain's validated experience --- no language, no base model,
because general capability already sits on the other side of the
call. This division is what makes total plasticity affordable:
everything that must be large is borrowed from the brain, so
everything that must stay plastic can stay small.

\paragraph{Threat model.} The guarantee ``teaching cannot degrade
the served model'' holds for a \emph{careless-but-honest} operator:
one that may teach wrong, redundant, or contradictory content, but
does not corrupt the evidence channel. In the released system the
same principal that teaches also writes the held-out stream
(\texttt{add\_holdout}), so holdout provenance is the root of
trust: an operator (or an upstream prompt injection) that poisons
the holdout and teaches to match it will pass the gate while
degrading true quality. Deployments that cannot assume an honest
operator should source held-out reality from a principal separate
from the teaching channel. \texttt{rollback} is audited but not
gated --- it re-points the served version without a fresh gate success ---
and should be treated as an operator privilege, not a governed
transition.

{\sloppy
\paragraph{Self-documenting tool surfaces.} The factory exposes nine
tools over the Model Context Protocol
(\texttt{list}\slash\texttt{create}\slash\texttt{infer}\slash
\texttt{teach}\slash\texttt{add\_holdout}\slash
\texttt{check\_drift}\slash\texttt{discoveries}\slash
\texttt{versions}\slash\texttt{rollback}); the full system exposes fifty-two, adding the
operators, the growth-control instruments (deepening, dry-run
proposal, bounded trial, rule plans, snapshot/rollback,
assessment), self-study, substrate advisory, and
the standard-method baselines.
Both servers hand the connecting client an operating manual at
initialization (MCP \texttt{instructions}): the required order
(holdout before teach --- the gate needs reality before it can
certify), the feature-extraction contract, the drift loop, and
recovery guidance for the common failure modes. The protocol makes
the safety guarantees legible to the operator: \emph{teach freely ---
the gate protects quality; grow freely --- exactness protects the
present.}\par}

Validation of this surface, driven end-to-end by a production LLM,
is reported in App.~\ref{app:validation}.

\paragraph{Two deployment regimes.} The platform serves two data
regimes with two strategies, and only one of them is this paper's
theory. Where the data carries stable underlying laws beneath
drifting surfaces and a growing inventory --- the regime every
result in Part~II is drawn from --- the growable soft model under
the governed lifecycle is the intended tool. Where it does not ---
a simple curve-fitting relationship that a small fixed network
already captures, or a task whose data never changes --- the same
factory serves conventional fixed networks under a
rebuild-on-change policy: the drift monitor raises a signal, the
orchestrator commissions a fresh network from zero (larger if
needed), the evaluation gate certifies it against held-out
reality, and the versioned deployment switches over. That policy
is pure caller-side orchestration over verbs the platform already
exposes (\texttt{check\_drift}, \texttt{create}, train-to-convergence,
gated promotion, \texttt{rollback}); no core mechanism changes.
This is stated as a system capability, not a claim: the platform
degrades gracefully into a conventional model-lifecycle manager
wherever the theory's home regime does not hold.

\paragraph{Lowering the barrier: a bespoke model as an ordinary
software artifact.} Today an application developer who wants
machine intelligence in a product typically builds on hosted
foundation models; a smaller group fine-tunes open base models;
training and operating a model of one's own remains the province
of specialist teams with substantial accelerator budgets. This
platform addresses the gap between those tiers: a working
developer with no machine-learning background can build, own, and
operate a domain model on their business data with effort
comparable to writing a small program. The interface is a
natural-language-driven operator plus declarative data
onboarding: the developer describes the desired judgment in plain
words, supplies a schema and a data feed, and instructs the
operator to proceed. The general-capability operator --- an LLM
driving the tool surface above, validated end-to-end in
App.~\ref{app:validation} --- then carries out fully automated
model lifecycle management: model construction, training,
evaluation-gated promotion, versioned deployment, and rollback
are procedures the platform automates, not knowledge the
developer must hold. It runs on ordinary CPUs; models are
kilobytes to megabytes; every transition is audited and
reversible. The scope is stated explicitly:
this lowers the barrier to small domain-specialized models ---
the organ scale of this chapter's division of labor --- not to
foundation-model construction.

\section{In-Service Continual Learning as the Natural Habitat}
\label{sec:cl}

The regime this model is designed for has a name in the
literature: \emph{continual learning} --- a long-lived model on a
non-stationary stream, expected to keep serving while it keeps
learning. Work in that field identifies two coupled failure modes
of standard training. The first is \emph{catastrophic
forgetting}~\cite{mccloskey}: training on new material
overwrites competence on old material. The second is
\emph{loss of plasticity}~\cite{dohare}: under long-horizon
training, networks progressively lose the ability to learn at
all unless learning capacity is actively maintained. The two
pull in opposite directions, and the field's classical summary
term for the tension is the stability--plasticity dilemma~\cite{grossberg80}. This section states
how the present theory addresses that regime structurally; the
measured behavior appears with the campaigns
(\S\ref{sec:campaigns-results}).

\subsection{Structural answers to the two failure modes}

\emph{Plasticity, by axiom.} Loss of plasticity presupposes a
fixed capacity that training exhausts. Under the total-plasticity
axiom nothing is ever frozen, and capacity follows demand: where
learning stalls against capacity, governed growth adds it, with
exactness at application and every new parameter trainable from
step one (\S\ref{sec:theory}). Plasticity is thus not an
optimizer property to be preserved by intervention; it is a
structural resource under governance. The campaigns record the
corresponding health measurements (dead-unit fraction, effective
rank) remaining at their healthy values through every lifetime
measured (\S\ref{sec:campaigns-results}).

\emph{Forgetting, by placement.} New demand is absorbed by
\emph{new} capacity that enters at zero: the preservation
propositions guarantee that the instant of structural change does
not move the served function, and the input-conditioned ports
give grown capacity a natural association with the input regions
that demanded it. What structure alone cannot do --- and the
campaigns measure this boundary plainly --- is prevent the shared
trunk from following the only gradient it sees under a strict
domain switch. The load-bearing treatment for that regime is
\emph{review}: re-inputting old knowledge from the model's own
experience store --- the mechanism the classical
complementary-learning-systems account assigns to replayed
experience in consolidation~\cite{mcclelland1995}. The model's
store makes review self-contained; no external archive of old
data is required.

\subsection{The anti-forgetting ladder}

The strategy layer composes three levels of defense, each an
orchestration of served interfaces --- the core contributes
complete interfaces, strategies remain caller-side:

\emph{R1 --- protection windows (dose).} Event-scoped, per-region
learning-rate reduction around a growth instant: the newborn part
integrates while the old regions are shielded from its initial
noise. Windows are surgery care: effective at growth events, and
measurably not a treatment for long-horizon forgetting.

\emph{R2a --- structural response.} Demand-triggered growth
absorbs new structure in new capacity; it composes with review
and carries the new-domain side of the trade.

\emph{R2b --- review courses (data).} A \emph{bounded, curated
course} drawn from the model's own store, started after the
new-domain turbulence stabilizes, and \emph{repeated at spaced
intervals}: spaced repetition --- the long-established
distributed-practice principle of the learning sciences~\cite{ebbinghaus1885,cepeda2006} --- is the standing review
strategy,
and repetition count acts as the retention dose. Placement
matters because a completed course erodes under continued
new-domain training; repetition is the measured counter.

A fourth, intrinsic level --- local process objectives by which
old regions continuously rehearse their own knowledge --- is
future work.

\subsection{Evaluation on the time axis}

The natural evaluation for this regime is longitudinal: a life is
judged by its trajectory --- retention of earlier competence,
adaptation cost when the world changes, service continuity
through structural events, and the auditability of every decision
--- rather than by a single end-state score. The campaigns adopt
this protocol: within-life retention ratios, compute-to-parity
against retraining, relearning savings when a domain returns, and
per-decision audit trails, reported with the runs
(\S\ref{sec:campaigns-results}).

\part{Empirical Evidence}

\section{The Core Method on Standard Continual-Learning
Benchmarks}
\label{sec:cl-bench}

The previous section's habitat argument makes claims that the
field's own instruments can test. This section evaluates the core
method --- the growable model under its full governance --- on
the standard continual-learning benchmark family, at moderate
scale, with every pass criterion registered before its run.
Table~\ref{tab:cl-results} summarizes every lane's registered
criterion and outcome.
(These external benchmarks complement the in-house
continual-learning campaigns of
\S\ref{sec:campaigns-results}, which run on the registered
world families.)

\subsection{Benchmarks, fit, and protocol principles}
\label{sec:cl-fit}

The benchmark family is Split-MNIST (five two-class tasks,
consecutive digit pairs, arriving sequentially), Permuted-MNIST (fixed seeded
permutations), and Fashion-MNIST under the identical
protocol~\cite{mnist,fashionmnist}; candidates outside the
moderate-scale, dense-input domain of kilobyte-scale models were
rejected on fit grounds rather than prominence. Three protocol principles
govern every lane. First, \emph{protocol fit}: the training
protocol belongs to the method under test --- the model runs in
its designed regime (multi-epoch task training interleaved
one-to-one with small replay) --- while each evaluation subject
receives the measurement that fits its own question; the two
concerns are never conflated. Second, \emph{quarantine}: train,
validation, and test rows are disjoint; the gate scores
candidates only on validation slices, and no test item is ever
trained on --- examinations ask unseen questions of a studied
topic, never the studied items. Third, \emph{instrument
pre-verification}: a capacity-binding probe (a larger fixed net
must actually score higher), a resolution probe (the effect must
exceed the measurement's confidence interval), and, for long
sequences, a decline probe (the phenomenon under test must be
present) gate every batch; instrument sizes are calibrated only
at the probe stage and then frozen, with every calibration
iteration on record; the frozen settings include the fixed
control's width, the growth step, the replay dose (two hundred
stored rows per past task, capped at the ten most recent tasks
on the long lanes), and the review step size. Data handling is
plain NumPy~\cite{harris2020numpy} with pinned checksums.

\subsection{Capacity under sequential arrival}
\label{sec:cl-capacity}

On Split-MNIST at full data (ten seeds, five arms), the growing
model scores above the fixed small control in 9 of 10 seeds and
above its protocol-matched twin --- the compute-and-protocol
equal control --- in 9 of 10; retention is not worse in 8 of 10.
The governance lane is exact: a life offered only valueless
clones adopts none of them and finishes with float-exact
identical final evaluations to the no-offer control in 10 of 10
lives. Against the
parameter-matched ceiling (a network born at the growing arm's
maximum possible size), the grown model's final accuracy is 77\% of the
ceiling's (ratio of mean accuracies) while ending at fewer parameters on
average (7{,}117 versus a per-seed mean of 8{,}273). This is parsimony
at an accuracy cost the ceiling comparison quantifies (absolute
mean accuracies 0.580 grown versus 0.754 ceiling): capacity was
bought only where the gate certified value. The registered reduced-dose pilot
shows the same shape (75\% recovery). The permuted lane
confirms both of its registered lines (final accuracy above the
fixed control, retention not worse; 5/5 each) at five seeds. On
Fashion-MNIST the controlled comparison holds (the growing model
above its protocol-matched twin in 3 of 5 seeds, with governance
again exact); its remaining lines are a dose boundary reported
in \S\ref{sec:cl-boundaries}.

\subsection{Long-sequence plasticity}
\label{sec:cl-plasticity}

The field's current central concern is not whether a continual
learner remembers, but whether it can \emph{still learn} as
tasks keep arriving~\cite{dohare}. At twenty permuted tasks
(reduced scale, five seeds), the fixed net loses up to 39 points
of new-task learning ability across the sequence while the
growing model's keep-learning curve stays essentially flat to
improving (declines $-8.3$ to $+1.3$ points; positive decline =
lost learning ability) and its final accuracy is
higher in all five lives.

Scaling to full data exposed a measurement subtlety that we
report as a finding: with five epochs of training per task,
heavy retraining \emph{masks} the effect --- the same probe
that shows a 39-point collapse at reduced scale shows only 4.6
points after full retraining, because a fixed network,
retrained through enough epochs, still reaches a passable
score. The
protocol therefore splits training from measurement: the model
trains in its designed regime unchanged, and plasticity is read
in the \emph{single-pass window} --- the arriving task's test
accuracy immediately after its first epoch, before further
retraining obscures the reading. The split protocol and its ten-seed pass
criterion were registered before the forty-task battery ran;
the masked full-data reading entered the record first, as the
probe that motivated the redesign (a motivating probe, not
adjudicated evidence). Under this reading the
decline is plainly
visible at full data (7.6-point first-pass decline for the fixed
net at forty tasks), and the separation is clear at this seed
count: across ten
seeds the growing
model's first-pass decline lies in
$[-2.6, +5.0]$ points against the fixed net's $[+4.9, +10.8]$
--- lower in 10 of 10 seeds (single-head argmax over ten
classes; the chance level is 0.1), unchanged (10 of 10) when the
ten growth-offer tasks are excluded from the curve, so the effect is
not an artifact of candidate warmup
(Figure~\ref{fig:keeplearning}). As a component-level
cross-check under the field's own canonical protocol ---
a single pass per task with no replay in either arm --- the
growing model's first-pass decline is lower than the fixed
net's in 8 of 10 seeds, meeting its registered criterion
exactly; the gate adopted three to seven growth offers per life
under that protocol, so governance operates unchanged outside
the designed regime.

\begin{figure}[t]
\centering
\includegraphics[width=0.78\linewidth]{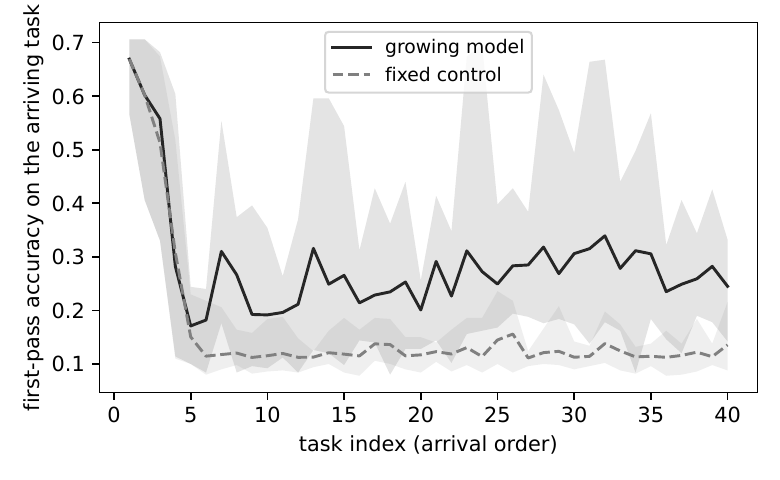}
\caption{Keep-learning curves at forty full-data tasks: accuracy
on each arriving task after its first training epoch (means over
ten seeds; bands span the seeds). The fixed control's first-pass
score decays along the sequence; the growing model's holds.}
\label{fig:keeplearning}
\end{figure}

\subsection{Review and the just-in-time refresher}
\label{sec:cl-review}

An in-service model is sometimes asked to answer for material it
studied long ago. The natural protocol is the one humans use for
examinations: the examination is announced; each subject is
reviewed just before \emph{its own} examination, from the
textbook (fresh samples of the task's training corpus, never
examination items); and subjects are examined independently ---
one subject per day, each day starting from the same
post-service state, so no other subject's review interferes.
The allocation matters as much as the budget: an equal review
window spread uniformly over all subjects would redistribute
competence rather than restore it --- favoring an evenly
mediocre model and interfering with a specialized one --- so
review is targeted at the examined subject, consistent with the
targeted-review result of \S\ref{sec:campaigns-results}.
Under this protocol, on the forty-task full-data lane, with a
finite equal review window for both
arms (60 steps over a 200-row sample per subject), the growing
model's all-task examination mean rises from 0.1818 to 0.2153
(+3.4 points) while the fixed control's falls from 0.1718 to
0.1576 ($-1.4$ points); the growing model is higher in 10 of 10
seeds. The asymmetry is the plasticity result in another guise:
the model that can still learn converts review into recovered
competence --- its long-ago band rises by 3.9 points while its
recent expertise is preserved (0.340 to 0.355) --- whereas the
fixed control does not recover under the same window ($-1.4$
points); the window and its step size are registered identical
for both arms, with no per-arm dose tuning. The converse is an
experience worth recording for practitioners: a fixed net's
apparent retention advantage on long-unreviewed material can be
a symptom of the same decline --- having largely stopped
learning, it has also stopped overwriting --- so retention
comparisons over unreviewed material can reward incapacity
rather than memory. For an in-service model the asymmetry is,
at the tested scale, a deployable property: when an old line of
work returns, a brief refresher on that work's material
restores competence without disturbing current duties.

\subsection{Boundaries}
\label{sec:cl-boundaries}

Five notes bound these results. First, on Fashion-MNIST, the two
field-standard uncontrolled lines missed their registered
criteria at the registered dose (each 2 of 5, at two epochs per
task): the offer
protocol itself carries a cost on the harder data, which the
controlled comparison absorbs but the no-offer baseline does not
--- a dose boundary, not a mechanism failure. Second, the canonical
single-pass reading passed exactly at its registered criterion
(8 of 10), and is flagged as such, as is the Fashion controlled
line (3 of 5). Third, an
experimental-integrity note: in the review experiments, every
life's service period is required to reproduce the base run's
per-task learning curve, audit trail, and pre-review score
exactly --- all twenty lives of the exam-day experiment did;
the same reproduction discipline had already caught one
instrumentation defect at its first seed during development.
Fourth, the plasticity lanes compare the growing model against
the small fixed control only; a large-from-birth fixed
control's keep-learning curve was not run in these lanes and is
recorded as an open control (the parameter-matched ceiling
exists only in the capacity lane). Fifth, the forty-task
battery's registered multi-pass companion line --- final
all-task accuracy above the fixed control in at least 8 of 10
--- missed at 7 of 10 (the same 7-of-10 form as the
twenty-task full-data run); the primary plasticity line is
unaffected, and the final-examination subject itself is
answered by the review protocol of
\S\ref{sec:cl-review}, whose registered line passed in 10 of
10. Beyond these notes,
task-free continual learning --- streams with no task
boundaries --- is future work; the drift-detection component
already in the substrate (Bayesian online change-point
detection~\cite{bocpd}) is its natural trigger.

\begin{table}[t]
\centering\small
\caption{Continual-learning benchmark results. Every criterion
was registered before its run; the two uncontrolled Fashion
lines and their dose boundary are described in
\S\ref{sec:cl-boundaries}.}
\label{tab:cl-results}
\begin{tabular}{@{}llll@{}}
\toprule
Lane & Registered criterion & Outcome & Final params / note \\
\midrule
Split, full data & above fixed control $\geq 8/10$ & 9/10 &
7{,}117 vs.\ 8{,}273 ceiling \\
 & above protocol twin $\geq 6/10$ & 9/10 & \\
 & retention not worse $\geq 6/10$ & 8/10 & \\
 & governance: identical finals & 10/10 & \\
Split, pilot & all four lines & met & 75\% ceiling recovery \\
Permuted & both lines & met (5/5) & --- \\
Plasticity, 20 tasks & all three lines & met (5/5) & --- \\
Plasticity, 40 tasks & first-pass decline lower $\geq 8/10$ &
10/10 & 7{,}931--11{,}082 \\
 & multi-pass companion line $\geq 8/10$ & 7/10
 (\S\ref{sec:cl-boundaries}) & \\
Canonical single-pass & decline lower $\geq 8/10$ & 8/10 & --- \\
Independent exam days & post-review mean higher $\geq 8/10$ &
10/10 & --- \\
Fashion (controlled) & above protocol twin $\geq 3/5$ & 3/5 &
--- \\
\bottomrule
\end{tabular}

\end{table}

\section{The Registered Program and Its Discipline}
\label{sec:program}
\label{sec:evidence-discipline}
This part states what the registered empirical program
established. Every number quoted in the chapters of this part is
drawn verbatim from a registered report or recomputed from the
archived run stores; the appendices carry the full worlds,
protocols, and round narratives. Three principal results
organize the remainder of this part: governed structural growth achieves
scores at or above a budget-matched standard architecture
where structure keeps arriving (the static sanity floor;
parity at matched uncertainty, App.~\ref{sec:growthvalue}), and network size follows demand in both directions
(\S\ref{sec:res-capacity}); a plastic model under governance
holds its educated principles longer than a frozen one
(\S\ref{sec:res-fidelity}); and growth governance is workable
only in its ex-post form, at a measured price
(\S\ref{sec:res-governance}).

\subsection{The evidence discipline}
\label{subsec:evidence-discipline}
Worlds, arms, metrics, and acceptance bars were frozen in
written gate files before any verdict data existed; design
iteration was confined to disclosed probe ledgers on burned
seeds disjoint from the verdict seeds, which no iteration ever
touched. Failure branches were written before the batteries ran
and are quoted unchanged: the registered failures in this paper
appear as measured boundaries, not as retrospective framing.
For the evaluative-learning and benchmark campaigns
(\S\ref{sec:eval-learn}, \S\ref{sec:cl-bench}), protocol
iterations superseded by an improved registered design are
archived, with their verdicts, in the experiment stores named in
the Reproducibility Statement (the archived supersessions
comprise a capacity-starved world setting, a three-step remedy
series on the one out-of-reach task, two review-format
iterations, and the pre-split terminal-exam reading (the twenty-task
full-data store); none is
cited as evidence in this paper); the paper reports each subject's
final registered design and its outcome, and the genuinely open
boundaries are stated in place (\S\ref{sec:rl-boundaries},
\S\ref{sec:cl-boundaries}).
Anomalies were autopsied before verdicts were written, and the
instrument corrections that resulted were applied to every arm
equally. Inference uses per-seed count criteria over the registered
seed set --- three deterministic seeds for the early E/S
series, five for every later campaign of the v9 program, and
three per arm for the 2026 campaigns (whose inferential claims
are correspondingly ordinal), five lives per arm for the
evaluative-learning batteries of \S\ref{sec:eval-learn} (ten
for the LunarLander gate battery), and five or ten seeds per
arm for the benchmark lanes of \S\ref{sec:cl-bench}, as
registered per lane --- each battery's bar as
registered --- and no null-hypothesis testing is claimed at
these $n$; per-seed values with medians and ranges accompany
every aggregate of the adjudicating batteries in
App.~\ref{sec:growthvalue}. All
experiments ran on CPU-class hardware with per-life wall-clock
recorded in the archived run manifests; every run directory
carries its configuration and content manifest, and reruns from
the same seed reproduce results bit-for-bit.

\paragraph{Third-party refereeing.} No self-written numeric
kernel certifies itself. Every hand-written forward --- the
base substrates and every grown transient state --- is refereed
against the official components of a widely used deep-learning
library (PyTorch~\cite{paszke2019pytorch}) by weight
transplantation, agreeing at $10^{-13}$ in
double precision (and at the single-precision floor on the GPU
referee --- library certification infrastructure; the campaigns
themselves ran CPU-only); hand-written gradients are refereed
against automatic differentiation. Accuracy certification is double-precision
only; single-precision compute sits behind an explicit
acknowledgment door whose refusal text states the measured
error envelope.

\subsection{The campaigns at a glance}
\emph{Mechanism validation.} The exactness and acceptance
suites verify that every growth operator is exact at
application and that budgets, gates, lineage, and holdout
quarantine are enforced (PASS; App.~\ref{sec:empirical}). The
historical depth route is retained as record
(App.~\ref{app:twodir}); the topology case study grows a
governed structure end to end (App.~\ref{app:topo}); the
attention program grades per-head growth
(App.~\ref{sec:aseries}); the self-knowledge series shows
self-reports are real while their exploitation remains open ---
a mixed verdict, recorded as such (App.~\ref{sec:sseries}).
\emph{Capability claims.} The generalization campaign covers
length extrapolation, compositional generalization, trigger
behavior, and the EWC comparison (App.~\ref{sec:gseries});
a real-data case study tests in-service learning on
prequential concept-drift streams
(App.~\ref{sec:streamcase}); the
growth-value campaigns establish the transformer comparison
(App.~\ref{sec:growthvalue}); the size-adjustability round
establishes the crossover result; and the fidelity program
establishes the education thesis under the corrected
instrument (App.~\ref{sec:guardprog});
an operating LLM drives the factory end to end
(App.~\ref{app:validation}).
\emph{Mechanism finding.} Four governed-gate rounds locate the
workable form of growth governance and its price
(\S\ref{sec:headline-results}); their observations close this part
(\S\ref{sec:empirical-laws}).

\paragraph{Registered campaigns of the depth axis
[registered 2026-07-25; executed 2026-07-25/26].} Five further
campaigns were registered here with their designs fixed before
execution; the registration text below is preserved verbatim as
the registration record, execution deviations are declared with
the results, and the measured outcomes --- together with the
optimization campaigns that followed them --- appear in
\S\ref{sec:campaigns-results}. E-3 \emph{in-service
whole-layer insertion}: a serving model receives insertions at
head/middle/end positions on both substrate families ---
claims: serve continuity with instant-exactness at the
insertion, and post-insertion trainability; within-run
before/after design plus a no-insertion twin; pass lines
pre-registered in the experiment's design document. E-2
\emph{autonomous shape-governed growth}: under a user
aspect-ratio floor with auto-compliance, every instant of an
autonomous run satisfies the shape law with zero gate
collisions. E-6 \emph{protection windows at growth events}:
per-region rate scheduling at growth instants, judged on
post-event stability at equal budget. E-4 \emph{exploration
economy}: dry-run pricing, bounded trials, and exact rollback
against unconditional adoption at the same budget. E-1
\emph{controlled compositional deepening} (author-designed):
processing-depth growth under the full prevention stack
against the additive-only regime and a tuned fixed baseline.

\section{Main Results}
\label{sec:headline-results}

\subsection{Capacity follows demand}
\label{sec:res-capacity}
Two independent comparisons carry this claim. Against a
budget-matched standard transformer on the generative
arriving-structure track, the governed growable model achieved
the better score on 5/5 verdict seeds in the original battery, and the result
survived two successively stricter governance regimes (4/5
under the silenced gate; 5/5 under incumbent--candidate
governance; App.~\ref{sec:growthvalue}). Against a
lifelong-trained fixed network of the same substrate family
under moderately but persistently growing data (classes
arriving for the whole life to $\sim$400), the size-adjustability
round established both directions of the claim: the network's
size followed demand --- born at parity (12{,}020 backbone
parameters; both arms' class heads grow identically with the
arriving inventory and are excluded from all counts here),
every seed ended $4.6$--$4.9\times$ larger through 15--19
promotions, each earned at audited checkpoints --- and the
grown model's settled stage-exam score overtook the fixed
network and stayed ahead on 4/5 seeds, at
$(50\,\mathrm{k},\ 17{,}614)$, $(200\,\mathrm{k},\ 27{,}411)$,
$(350\,\mathrm{k},\ 35{,}288)$, and $(550\,\mathrm{k},\ 51{,}358)$
rows and parameters respectively. The two seeds without a
clean end comparison are both disclosed: seed 2's end
checkpoint falls within one audit window of a promotion and is
excluded by the registered sufficiency rule (raw end pair
$0.222/0.265$; its settled mid-life comparison reads
$0.50$--$0.51$ vs $0.31$--$0.32$), and seed 1 ended behind
(raw end pair $0.126/0.143$) and is recorded as censored.
Fairness terms are stated with the result: sustained overtake
began at $1.47$--$4.27\times$ the fixed network's size (median
$\sim\!2.6\times$) --- the measured exchange rate of capacity
for score under identical training; and in the earlier
extreme-paced round the same machinery proved size
adjustability from a $2.4\times$ smaller birth (5{,}020
$\rightarrow$ $\sim$40{,}000 parameters, 5/5 seeds).
Figure~\ref{fig:crossover} shows the full trajectories: both
arms compared only after consolidation to a measured plateau
at every stage.

\begin{figure}[t]
\centering
\includegraphics[width=\linewidth]{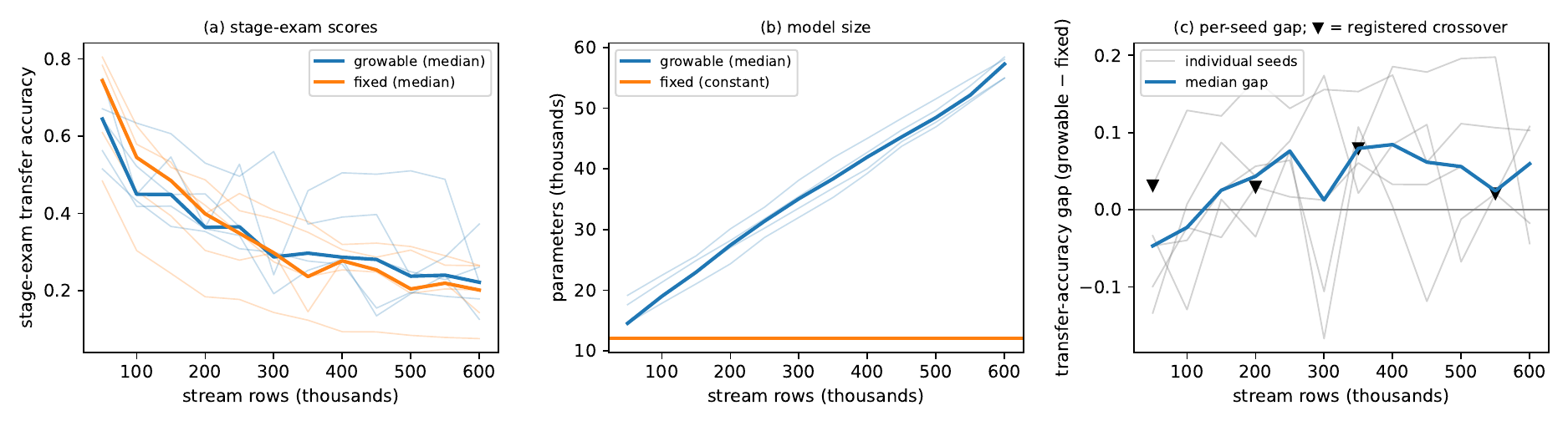}
\caption{Size adjustability under persistently growing data
(moderate round; five verdict seeds faint, medians bold).
(a)~Stage-exam transfer accuracy of the growable and the
lifelong-trained fixed arm; both decline as the class
inventory grows to $\sim$400. (b)~The growable arm's size
follows demand from parity (12k) to $\sim$57k backbone
parameters while the fixed arm's backbone is constant. (c)~Per-seed score gap
(growable$-$fixed); markers denote the registered settled
crossovers. Stage scores are taken only after both arms
consolidate to a measured plateau.}
\label{fig:crossover}
\end{figure}

\subsection{Stability from governance, not immobility}
\label{sec:res-fidelity}
The fidelity program's final round gave both arms the same
pedagogical curriculum, with a solidity gate (mean of the last
three checkpoints $\ge 0.90$) enforced on the calibration
seeds; on the verdict seeds each arm is anchored to its own
measured graduation level (App.~\ref{sec:guardprog}, seed
hygiene). After graduation, under surface drift and
scheduled temptation, the plastic arm held its educated canon
at least as long as the frozen arm on every seed:
time-to-degradation favorable on 5/5 under the registered
$\ge$ rule --- strict wins on 3/5 (per-seed ratios
$2.9\times$, $7.0\times$, $2.25\times$) with two seeds tied
at the 1{,}000-row censoring floor, the frozen arm sitting at
that floor in 5/5 --- decline-slope advantage on 3/5, recovery
advantage on 4/5; all under the corrected time-axis instrument
(2/5 at margin $0.05$ under the original level instrument,
App.~\ref{sec:guardprog}).
On the re-education cost frontier, the frozen arm's periodic
rehearsal --- which must sweep its full canon --- never
restored its decline at any logged cost, while the plastic
arm's remediation rides its always-on plasticity
(App.~\ref{sec:guardprog}; Figure~\ref{fig:guard}).

\begin{figure}[t]
\centering
\includegraphics[width=0.72\linewidth]{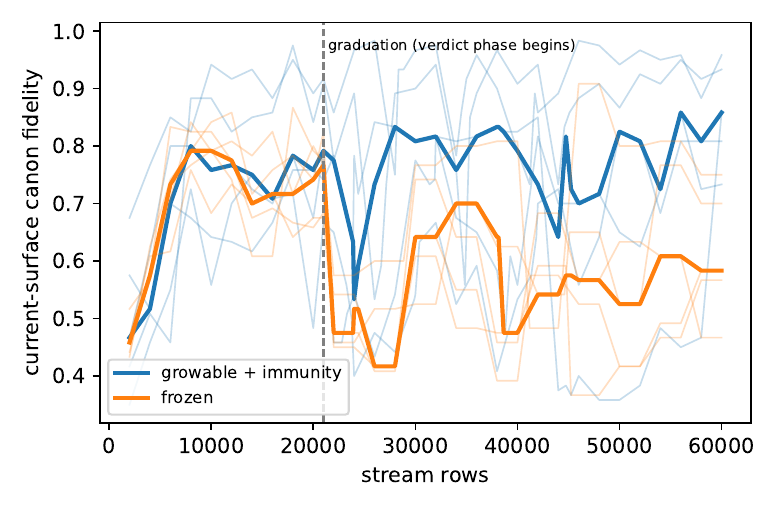}
\caption{Fidelity on current surfaces over life (five seeds
faint, medians bold). Both arms are educated under the same
curriculum to graduation (dashed line); afterwards the frozen
arm's fidelity collapses under drift and temptation while the
governed plastic arm holds and recovers.}
\label{fig:guard}
\end{figure}

\subsection{Growth governance is ex-post}
\label{sec:res-governance}
Four registered rounds located the workable form of growth
governance (Table~\ref{tab:gatearc}). The naive gate satisfied the growing-track
criterion (3/5 against the fixed twin) but made 8/8 regretted adoptions on the
stationary public track; the silenced gate governed it to 2
breaches and thereby suppressed the growing-track response
(1/5).
The
mechanism discovery followed: with training gains controlled
away by a train-only control arm, the capacity effect is
invisible at any affordable ex-ante probe horizon (pain-subset
paired differences $0.009\pm0.017$ against a long-run value
near $0.1$), and in the exam world probing is actively
self-injuring (gate cross-entropy rising from $4.4$--$8.2$ to
$10.5$--$22.8$ across eight instrumented events). Proof therefore moves ex post: the incumbent serves
untouched while a grown candidate trains on the same rows and
is promoted only at the first audited checkpoint at which it
scores higher,
with a demotion window guarding the promotion --- trials are
free (the serving trajectory is bit-equal to never-growing
until a promotion) and wrong buys never touch service. Its
price is the governance frontier: proof latency is exactly the
head start a lifelong-trained incumbent keeps. A battery
autopsy added the rollback-anchor requirement: an open demotion
window's anchor must be immutable, or a diverged incumbent is
promoted over the last good state (observed at serving NLL
$11.38$; cured to $0.65$ by spec enforcement).

\begin{table}[t]
\centering\small
\caption{The governed-gate arc (details
App.~\ref{sec:growthvalue}). Rounds 1--2 and 4 report the
generative track's C-N1 against the lifelong-trained fixed
twin; round 3 ran in the open-inventory exam world, whose
criterion is C-G1 ($^{\dagger}$the gate un-starved there:
adoptions $0 \to 2$--$3$ per seed). $^{*}$The same round's
transformer comparison is 5/5. Appendix-internal round labels
map to rows 2--4 here; X2b shares row 4's governance form in
the exam world. Each round's observations appear in
\S\ref{sec:empirical-laws}.}
\label{tab:gatearc}
\begin{tabular}{@{}llll@{}}
\toprule
Round & Gate design & Growing track & Stationary regret \\
\midrule
1 & naive ex-ante & 3/5 & 8/8 regretted \\
2 & silenced ex-ante & 1/5 & governed, 2 breaches \\
3 & shadow audit (exam world) & C-G1 0/5$^{\dagger}$ & (no
stationary track) \\
4 & incumbent--candidate & 1/5$^{*}$ & non-regret 4/5 \\
4b & incumbent--candidate (exam world, X2b) & C-G1 0/5; C-G2 3/5 & (no stationary track) \\
\bottomrule
\end{tabular}
\end{table}
\subsection{The audit tally}
\label{sec:res-tally}
The correspondence table (Table~\ref{tab:evidence}) closes the
accounting. Of its 64 claim rows, 35 are supported, 20 are
boundary results whose limiting observation is named in the row,
4 remain open FAIL branches, 3 are diagnostic exhibits, 1 is a
historical motivation row, and 1 is an auxiliary demonstration;
23 further rows record empirical observations (20 register-tagged
observation rows and 3 scope or regime observations measured in
claim position). Every number in this Part quotes a row.

\subsection{The depth-axis and continual-learning campaigns}
\label{sec:campaigns-results}

Fifteen campaign fleets (E-1--E-4, E-6/E-6b, E-7--E-15;
E-5 was excluded by design before execution) executed in
2026-07 under the registered
discipline: pass lines fixed before each fleet, judges reading
run files only, write-once run directories, and every number
below traceable to a committed run directory
(App.~\ref{app:newcampaigns}). Two library defects surfaced by
these campaigns' own audit trails were fixed under change
control mid-program (the affected arms were re-run on the fixed
library and the superseded rows retained); the code anchor for
every adopted row is the tag \texttt{depthgrowth-v1.7}. Where a
registered bar was not met, the analyzed cause accompanies the
result; the emphasis throughout is on what the measurements
establish.

\paragraph{Mechanism safety.} A serving model receives
whole-layer insertions at head, middle, and end positions on
both substrate families without service interruption: the probe
stream records zero failed serves across every life, and the
served function at each insertion instant is bitwise unchanged
(36/36 instants; inserted layers train thereafter, 36/36
aliveness). The controlled-deepening capstone repeats this in
the presence of the full prevention stack --- identity insertion,
the shape law with auto-compliance, and event-scoped protection
windows: training proceeds through three mid-life insertions in
every seed with no collapse, and the deepened line ends at
parity with an additive-only twin, and with a born-deep control
in 2/3 seeds (the remaining seed reads $1.37\times$ the
born-deep control's loss).
Pure-$\delta$ instants are bitwise; whole acts that include an
aspect-law widen rider preserve the function to machine epsilon
(deviations of $2$--$3\times 10^{-16}$, the summation-order
footprint of exact zero-extension). The post-insertion
\emph{transient} is real: descent immediately after insertion
met its registered window in only 12/36 instants; the transient
resolves with training budget, and its management is the subject
of the governance campaigns.

\paragraph{Governance.} Under a user aspect-ratio floor with
auto-compliance, autonomous plan-driven growth satisfied the
shape law at every instant with zero gate collisions in every
seed, the floor provably binding (the refuse-mode preview marks
the crossing steps in advance). The governed arm's quality
transient resolved within the pre-registered extension horizon
(3/3 in-band at 50 batches). Protection windows at growth events
damp the post-insertion spike (28/36 instants) at zero measured
cost on healthy regions (9/9); where late insertions did not
re-enter the twin band, the horizon probe attributes the residue
to remaining training budget rather than to the window
convention.

\paragraph{Economics.} The control surface prices before it
spends: dry-run quotes matched ledger actuals exactly across
every adopted act, and every bounded trial restored the organ
bitwise (including all rejections), so structural search leaves
zero residue. Its one measured caution: a six-step probe
under-prices delayed payoffs (one seed rejected the deepening
its task needed). Against retraining from scratch after a
mid-life drift, in-service growth reached the retrained
specialist's quality at 58--64\% of its post-drift compute in
2/3 seeds, serving throughout while the scratch alternative has
a service gap by construction. When the world returns to an
earlier regime, models that had adapted in service re-recover
old competence in 2--3 batches against 20--29 for a fresh model
--- relearning savings of $7$--$10\times$
(Figure~\ref{fig:savings}): the old knowledge
demonstrably survives inside the network even where the served
function had to track the drift.

\paragraph{Continual learning.} Under a strict sequential domain
switch with no environmental replay, catastrophic forgetting
reproduces at full strength (old-domain probe degradation
$552$--$1{,}362\times$ while the new domain is learned). Across the
treatment ladder (the doctrine of \S\ref{sec:cl}),
\emph{review} is the load-bearing medicine:
only arms that re-input old knowledge from the model's own store
retain it (ratios $72$--$300$ across the forgetting-ladder
review arms), while
protection windows alone and added capacity alone do not treat
this regime --- the measured boundary that fixes each tool's
indication. The review regularities --- called ``laws''
informally in the campaign records --- each measured in its
own campaign:
\emph{targeted} selection beats random at equal volume and equal
diversity (the earlier contrary reading traced to a repetition
confound); review \emph{timing} dominates cost --- a course
started after the new domain stabilizes achieves comparable
retention at roughly one tenth the new-domain cost of always-on
replay; a \emph{bounded} course beats open-ended replay on both
axes; a completed course \emph{erodes} under continued
new-domain training; and \emph{spaced repetition} counters the
erosion, with the number of spaced courses acting as the
retention dose (one/two/three courses:
$341$--$800$ / $252$--$438$ / $73$--$149$ in the controlled
course-count family, monotone in every seed;
Figure~\ref{fig:frontier}). Adding growth to a review
course yields the best new-domain quality of any treated arm at
weaker retention --- structure carries capacity, data carries
memory. Plasticity-health instruments stayed at their healthy
values in every arm and every lifetime (dead-unit fraction
$0.0$; effective rank $5.6$--$5.8$ of $6$): the forgetting
measured here is interference, not loss of plasticity.

\paragraph{The depth axis.} On an iterated-composition family
with the composition depth $k$ as the controlled variable and
one parameter cap for all arms, the ordering between the growth
directions reverses with $k$ (Figure~\ref{fig:depthsep}): at
$k{=}2$ depth wins in 1/3 seeds; at $k{=}4$ and $k{=}6$ the deepening arm wins in 3/3 ---
while holding 58\% of the width arm's parameters. Depth is the
parameter-efficient axis exactly where composition is real, and
unnecessary where it is not; the formally registered separation
band was not met (its conjunction included the strategy arm),
and the axis-\emph{selection} question resolved separately: raw
probe gains and per-parameter efficiencies carry no axis
information, while the two-horizon slope does --- under slope
pricing the strategy adopts deepening first and matches the pure
deepening arm's quality (3/3 on both registered lines at
$k{=}6$). The additive family's own boundary was measured as
registered: on a three-level smooth composition at this scale it
never saturates --- the depth advantage requires genuinely
iterated composition, which is precisely what the $k$-mapping
establishes.

\begin{figure}[t]
\centering
\includegraphics[width=0.72\linewidth]{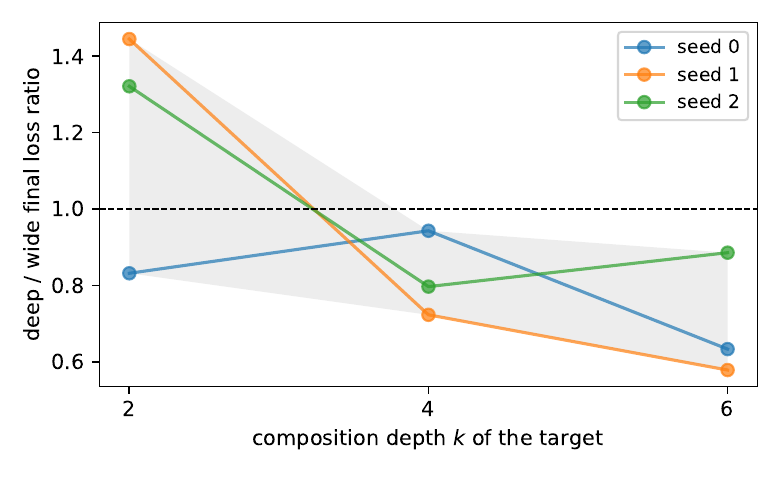}
\caption{Depth-separation mapping (three seeds, equal parameter
cap; shaded band $=$ seed range). The deep/wide final-loss ratio reverses with the target's
composition depth $k$: above parity at $k{=}2$ for most seeds
(depth wins in 1/3), and below parity for every seed at $k{=}4$
and $k{=}6$ --- with the deepening arm holding 58\%
of the width arm's parameters. Data: the depth-mapping
campaign's run directories.}
\label{fig:depthsep}
\end{figure}

\begin{figure}[t]
\centering
\includegraphics[width=0.8\linewidth]{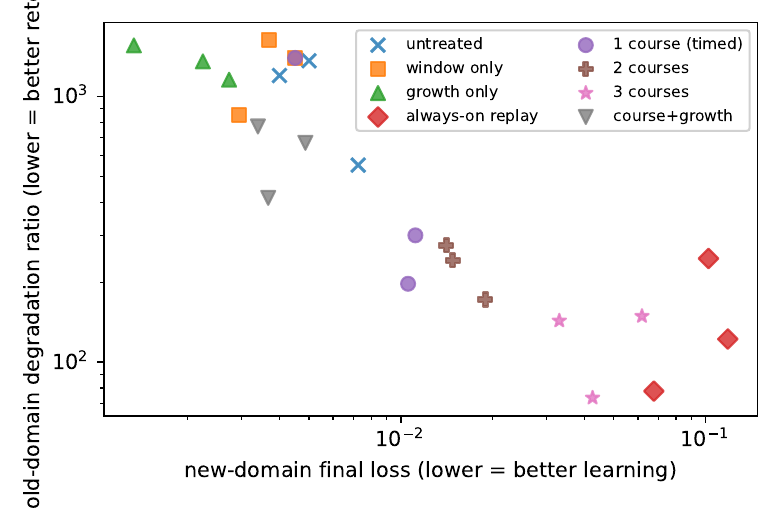}
\caption{The retention / new-learning frontier under a strict
sequential domain switch (both axes log; lower-left is better).
Untreated, window-only, and growth-only lives forget at
$10^{2.7}$--$10^{3.2}$; review courses move lives to the
frontier, with the number of spaced courses acting as the
retention dose and course-plus-growth taking the best
new-domain corner. Data: the forgetting-family campaigns' run
directories.}
\label{fig:frontier}
\end{figure}

\begin{figure}[t]
\centering
\includegraphics[width=0.72\linewidth]{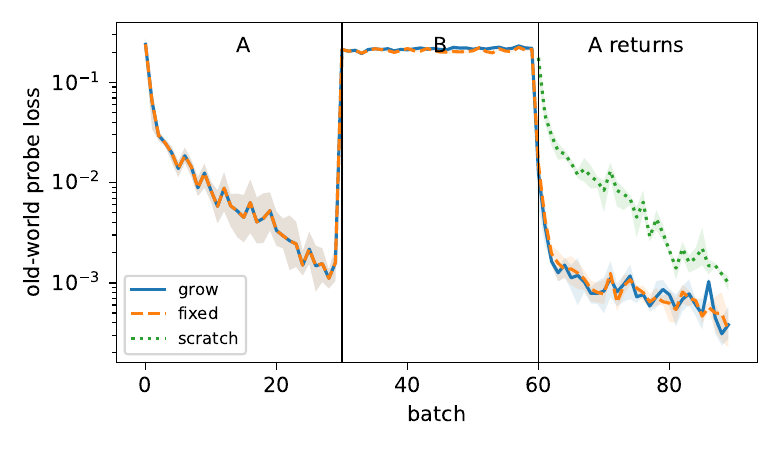}
\caption{Cyclic drift (A$\to$B$\to$A; seed mean, shaded
bands $=$ per-arm seed range). During B the
old-world probe rises for every arm (the served function must
track the drift); when A returns, lives that adapted in service
re-recover in 2--3 batches against 20--29 for a fresh model ---
the relearning-savings measurement of retention. Data: the
cyclic-drift campaign's run directories.}
\label{fig:savings}
\end{figure}

\section{Empirical Observations and Threats to Validity}
\label{sec:empirical-laws}
The observations below are extracted from the campaigns; each
is stated once, with one measured number and its registered
artifact; the register tags L1--L20 are retained as archival
identifiers. These are observations from the tested worlds ---
not laws, and not claimed beyond the conditions under which
they were measured. The campaign texts retain their full
derivations.

\subsection{The observations}
\label{sec:laws-digest}

\emph{Measurement --- what an instrument can and cannot
say.}
(L1) Capacity binds on exams and is invisible to online
scores: the same lives read $0.64$ vs $0.44$ on exams and tie
prequentially (App.~\ref{sec:growthvalue}).
(L8) Verify the instrument before the theory: many apparent
failures die in the instrument --- one round's escalation
counter matched an invented event name and its unit test
green-lit the bug (App.~\ref{app:x2}).
(L10) The metric decides governability: the adoption gate that
starved on exact-match exams adopted confidently on
per-sample-informative NLL, $0$ vs $2$--$8$ events per seed
(App.~\ref{sec:growthvalue}).
(L14) Audits are blind off their metric: a $+3{,}000$-row
bucket-CE audit kept a widen whose late-third exam score fell
$0.04$; in the incumbent--candidate round the same blindness
passed promotions and demotions that harmed exams
(App.~\ref{app:x2b}).

\emph{Curriculum --- how lifelong education behaved here.}
(L4) Review dose is non-monotonic: doubling a drill moved its
target $0.20 \to 0.45$; six-fold collapsed it to $0.17$ and
broke a neighboring axis (App.~\ref{sec:guardprog}).
(L5) Exposure economics, not capability, binds in-curriculum
learning: a skill drillable to $1.0$ in isolation plateaus
near chance in-curriculum (App.~\ref{sec:guardprog}).
(L6) Plasticity is favored on live-world trajectories and
disfavored by minimum statistics: the transition dip is the
price of tracking
(App.~\ref{sec:guardprog}).

\emph{Governance --- what it cost to govern change here.}
(L2) Small-scale gate outcomes are dominated by probe
economics; removing self-injuring probe training cut life
wall-clock $43 \to 17.4$ s unchanged in verdicts
(App.~\ref{app:x2}).
(L3) An instantaneous adoption gate cannot see lifetime
consequences: the one tolerated adoption decayed $0.57 \to
0.37$ long after its audit passed
(App.~\ref{sec:growthvalue}).
(L7) Without immunity, defection tracks the incentive:
$0.32$--$0.37$ inside episodes against a computed $+0.41$
(App.~\ref{sec:guardprog}).
(L9) Skip-based immunity faces a timing dilemma; re-anchoring
recovers about half the loss at $11\%$ cost, and the dilemma
recurs in every governor with a horizon
(App.~\ref{sec:guardprog}).
(L11) A non-growing world needs no growth; stationary harm
belongs to an unsilenced governor ($8/8$ regretted adoptions
on noise) (App.~\ref{sec:growthvalue}).
(L12) Governors have regimes: the calm throttle engaged
cleanly at zero era-adaptation cost yet cost fidelity $0.67
\to 0.51$ in a world of permanent drift
(App.~\ref{sec:guardprog}).
(L13) The marginal value of capacity is ex-ante unobservable
at affordable probe horizons: pain-subset paired differences
$0.009 \pm 0.017$ against $\sim 0.1$ nats long-run
(App.~\ref{app:x1}).
(L15) The price of silence: governing stationary regret away
cost the growing track its margin, C-N1 $3/5 \to 1/5$
(App.~\ref{app:x1}).
(L17) The governance frontier: proof latency is exactly the
head start a lifelong-trained incumbent keeps --- every
governed round bought silence or non-regret at the cost of
the within-life comparison \emph{within gate-round-length
lives} (30--40k rows, about one promotion window), while the
transformer comparison held $5/5$, $4/5$, $5/5$ across rounds
(App.~\ref{app:x1b}). The 600k-row size-adjustability lives
bracket the same frontier from the other side: capacity pays
in proportion to the trained life it has left (L16, L19).
(L18) Rollback anchors must be immutable until resolved:
clobbering an open demotion shadow promoted a diverged
incumbent at serving NLL $11.38$; spec enforcement cured it
to $0.65$ (App.~\ref{app:x1b}).
(L19) A serialized candidate pipeline buys $\sim$1 promotion
per 30--40k rows; worlds whose demand grows faster outrun the
governor (App.~\ref{app:xsat}).

\emph{Capacity and time --- when capacity paid here.}
(L16) Birth capacity compounds; mid-life capacity pays only
over the life it has left --- un-starving the gate narrowed
the paired oracle gap ($0.046 \to 0.034$ mean; median score
$0.5014 \to 0.5653$) and could not close it
(App.~\ref{app:x2}).
(L20) Parameter demand requires irreducible information
density: on a smooth-regression family a $20\times$ parameter
range scored flat (NLL $1.38/1.36/1.34/1.38$) --- discrete
memorization binds parameters; smooth regression does not
(App.~\ref{app:xsat}).

\subsection{Scope and threats to validity}
\label{sec:threats}
These measurements were taken at kilobyte-to-tens-of-kilobyte
model scale, on synthetic registered world families (token
inventories, drifting surfaces, growing class sets, generative
mixtures) plus a real-data case study, the MNIST-family
benchmark lanes and the gymnasium environments of
\S\ref{sec:eval-learn}/\S\ref{sec:cl-bench}, largely on the
reference-network family (the insertion campaign runs on both
families). The inferential standard is per-seed count
criteria over the registered seed sets (three seeds for the
early E/S series, five for every later v9 campaign, three per
arm for the 2026 campaigns, and the
\S\ref{sec:eval-learn}/\S\ref{sec:cl-bench} batteries as
enumerated in \S\ref{subsec:evidence-discipline}); no
significance testing is claimed at these $n$, and per-seed
values accompany every aggregate, in the appendices or in the
archived run stores of \S\ref{sec:eval-learn}/\S\ref{sec:cl-bench}. External validity is claimed only as
scale-independent regularities --- the observations of
\S\ref{sec:empirical-laws} are stated strictly within the
tested families and scale; any wider reach is an open
measurement. Exam-based
instruments are noise-limited at small item counts, and audits
certify only on the axes they measure; both limits are
themselves recorded observations. The theory-level limitations are
discussed in the Discussion; this subsection bounds the
measurements only.

\part{Discussion and Conclusion}

\section{Discussion}
\label{sec:discussion}

\paragraph{Scope of the 2026 campaigns.} The campaign worlds
are seeded synthetic streams at moderate scale (kilobyte-class
organs, thousands of rows), three seeds per arm, largely on the
reference-network family; the axis-signal result is measured at
one composition depth in one world; retention statements are
within-life ratios; registered constants (bands, floors, window
and course conventions) are policy values, not derived
quantities. Two library defects found by the campaigns' own
audits --- and the tests-first fixes and re-acceptance that
followed --- are part of the public record; we regard the audit
chain that caught them as part of the method.

\paragraph{Why smallness is load-bearing.} Every guarantee in
\S\ref{sec:theory}--\ref{sec:lifecycle} has a price that shrinks with
the model: full-replay retraining, dense versioning, per-version
stores, speculative growth with gated adoption --- all are affordable
at kilobytes and would be luxuries at gigabytes. This is not an
accident of implementation but the architectural consequence of the
brain/extension division: once language, context, and feature
extraction are the brain's job, what remains --- a mapping from a
handful of features to a judgment --- genuinely fits in kilobytes.
Smallness is what lets \emph{everything} stay soft.

\paragraph{What governance buys.} Because every change is gated, the
operator may act carelessly --- noisy batches, contradictory labels,
speculative growth --- and the served model can only improve or stay.
This inverts the usual posture of continual learning, where each
update is a risk to be managed; here risk is absorbed by the
lifecycle, which is precisely what makes \emph{LLM-driven} operation
safe without a human adjudicating each step.

\paragraph{Observability as a lifelong record.} Recent
interpretability work reads rich internal structure out of
frozen model snapshots --- most recently a privileged,
limited-capacity ``workspace'' of verbalizable representations~\cite{tc2026workspace}. The contrast with this system is one of
tense: a snapshot method observes what a network \emph{is};
a governed lifecycle records what it \emph{has been} --- every
head's loading, entropy-band occupancy, and growth event across
its service life, alongside every adoption decision. Lifelong
instrument trajectories are, to our knowledge, an observable no
frozen-snapshot analysis can reconstruct, and they are the
concrete form self-knowledge takes here: distribution
statistics as first-class, recorded objects.

\paragraph{Objections the axiom invites.} Three are worth
answering here. \emph{Is target handing a stop-gradient?} No
parameter is shielded from adaptation: every scale trains at every
step, and cross-scale pressure arrives as a target rather than a
backpropagated gradient --- the axiom governs trainability, not the
routing of credit. \emph{Are budgets freezing?} They are bounded
structural plasticity: caps are runtime policy, refusals are logged
outcomes, and both are adjustable within the model's lifetime ---
unlike architectural omission, which is unbounded, silent, and
permanent; we concede the plasticity they bound is real. \emph{Is
the committed version a frozen artifact?} Between promotions it is
immutable by design --- but the axiom forbids freezing of
\emph{trainability}, and training never stops in the working state;
a gate that refuses indefinitely is behaving correctly (no
candidate has scored higher on reality's evidence), and drift
re-opens the comparison by changing the evidence.

\paragraph{Where governance is the more suitable approach, and
where freezing suffices.} The comparison with immobilization is
scenario-dependent, and the boundary we draw is the one the
continual-learning literature itself draws. The surveys frame
every family as a position on the stability--plasticity trade-off~\cite{parisi,delange}, and the canonical taxonomy ---
regularization, replay, parameter isolation --- is developed for
task-incremental settings with clear boundaries~\cite{delange};
the task-free setting was introduced precisely because most such
methods depend on boundaries that data streams do not provide~\cite{aljundi}. Loss of plasticity under continued training is a
measured phenomenon~\cite{dohare}, and per-task frozen columns
spend capacity linearly while never revisiting it~\cite{prognets}. On this map, the governed regime is claimed for
the settings those observations describe: schema growth --- a
frozen interface cannot represent a newly observable input at any
weight setting, so the error floor is informational, not a
training artifact (E11b measures exactly this null: the
fixed-topology arm against the grown one) --- sustained drift,
and boundary-free change, where a gate needs only its holdout
stream. Conversely, where the domain is fixed, the distribution
stationary, and the schema closed, freezing is cheap, safe, and
sufficient~\cite{lora} --- the audit machinery here buys little
and costs its overhead. We claim the governed regime for the
first family of scenarios, not in general; the matched-parameter direct comparison under capacity pressure remains open
(\emph{Limitations} below).

\paragraph{The role of depth.} The reference substrate,
without composition blocks, has layer depth one
(Prop.~\ref{prop:rhostatus}). Depth earns its cost on
targets defined over intermediate quantities that must first be
constructed from the input: there, deep networks of modest size
compute functions that no shallower network can approximate unless
its width grows exponentially in the depth or in the input
dimension~\cite{telgarsky,eldanshamir}, while superpositions of
the single-hidden-layer form remain dense in the continuous
functions on compacta~\cite{cybenko} --- a question of required
size, not of attainable class. Refinement faces the complementary
target: a residual, largely stripped of compositional structure by
its host, for which the greedy stagewise fitting of an additive
expansion is the classical form~\cite{friedman,cascadecorr} and
carries dimension-independent approximation rates~\cite{jones,barron} --- in this language, the additive probe's
zero-attachment gradient (\S\ref{sec:compositional}) is greedy
approximation's one-step gain, the stopping quantity of matching
pursuit~\cite{mallatzhang}; and the depth separations that mark
the compositional class are the classical ones~\cite{telgarsky,eldanshamir}. The two demands also differ in timescale: the
number of representational stages a domain requires varies slowly,
whereas the content within stages is the locus of drift. The design
therefore fixes the serving chain's stage count at creation
through the substrate choice (\S\ref{sec:algebra}; the released
hosts include a transformer encoder) and never inserts a stage
into an operating composition --- a global disturbance for a
property that rarely changes. What the compositional operator adds is the one depth
channel this argument always permitted: scope-interior
composition ($\delta$, \S\ref{sec:compositional}), entered
exactly when a scope's own signals certify that the additive
route has stopped paying, and adopted only through the gate.
Lifelong plasticity thus divides by timescale: detail and
interface to $\rho$, $\omega$, $\sigma$; composition depth to
$\delta$ on the delayed-payoff slope signature; structural
obsolescence to gated re-founding ($\Phi$). 

\paragraph{A mechanical view of learning systems.} The reading of
\S\ref{sec:algebra} is an instance of a duality with a
distinguished pedigree: computation is physical~\cite{landauer},
physics is computable~\cite{feynman82}; and, conversely, learning
dynamics is a mechanics: energy functions over network states~\cite{hopfield}, inertial methods as damped mechanical systems~\cite{polyak}; and fitting a network is a parameter-identification
inverse problem --- backpropagation being the discrete counterpart
of the adjoint-state method by which such gradients are
classically computed~\cite{lecun1988,plessix,griewank}, a
correspondence that becomes literal in the optimal-control
formulations of deep learning~\cite{weinane}. In this duality the whole
lifecycle reads as an adaptive structure under load: it relaxes
level by level, grows a degree of freedom where relaxation
persistently fails ($u_j$), and realizes only those virtual
changes that lower a held-out measure --- the gate as a
virtual-work test~\cite{lanczos}. We use the correspondence as a source of
precision, not as a claim of physical equivalence.

\paragraph{Limitations.} At the tested scale the program
demonstrates \emph{safety and exactness}, not necessity or
superiority; the matched-parameter direct comparison against the wider
immobilization family (progressive columns, frozen backbones
with adapters) on a capacity-pressing task is the decisive open
experiment --- the EWC comparison itself is on record
(App.~\ref{sec:gseries}, supported 3/4). The evidence is kilobyte-scale, and the
core program is synthetic by design (hidden-law scenarios chosen so
ground truth is exact, partitions provably disjoint, and every run
deterministic); the real-data stream study
(App.~\ref{sec:streamcase}) tests the grown attention
substrate in prequential stream learning on real data, with
its paired growth ablation and attribution verdict reported
in place. We make no claims beyond that
scale, and E10 shows the axiom's noise cost when governance is
thin. The value-of-growth predictions are partly
unresolved: our lock-in scenario failed to bind, $\Phi$'s trigger is
open, and self-study does not yet convert real self-knowledge into
gains. The gate certifies against its holdout stream --- a
representativeness assumption inherited, not eliminated --- and
its repeated reuse across adjudications is the regime whose hazards
the reusable-holdout analysis formalizes~\cite{dwork}. Finally, the
LLM-operated validation covers the factory surface; the growth
surface is script-validated and exposed identically, but an
LLM-driven growth campaign is future work, as is the exploitation of
the self-knowledge signals that the program (App.~\ref{sec:empirical})
shows are real.

\section{Conclusion}

The mechanisms of this paper descend from one axiom and one
inherited idea --- the multiscale thought of physics and of the
multiscale finite element method: resolution granted where the
problem demands it. Their union --- one axiom, one gate, one audit
discipline --- occupies, so far as we are aware, an unoccupied
cell; its deployment-side relative is champion--challenger
practice, which shares the shadow-evaluate-promote shape without
the uniform gate over \emph{structural} operators. The account closes the theory's last exemption: the attention
mechanism itself now lives under the axiom --- its heads grown
per head, exactly and under the gate, its distributions
accountable to a local discipline whose gradient is certified
and whose locality is enforced by measurement. The empirical
program applied its registered discipline throughout: the mechanics verified
exactly; local restoration proved more cost-effective than
global repair where
collapse was induced; the instruments led the loss as a
median tendency (its false-alarm price measured); and the
strong form of demand-tracking growth was refuted at this scale
while its payment under genuine starvation was confirmed ---
the open item is an acceptance probe that does not injure what
it measures, and it is named, not hidden. We proposed treating lifelong plasticity as
definitional --- no parameter, at any scale of structure, ever
frozen --- and showed the position is coherent: the freezes it forbids include the silent one
(fixed topology as structural freezing by omission), and the
stability it seems to forfeit is recovered by governance ---
exactness at application, adjudication by held-out reality --- rather
than by immobility. The released system realizes the position end to end: a substrate whose structure is grown rather than
designed; an operator algebra that grows width, interface,
scale, and composition --- refining where the data demand finer
description, deepening where they demand transformation --- up to
re-founding the whole model,
one gate governing learning and growth alike, and a tool protocol
under which a general LLM safely operates, teaches, and grows its own
extension. The pre-registered program returns a map rather than a claim of general superiority: the mechanics verify exactly; where growth pays at this
scale is charted, several predictions await scenarios that
bind, and the model's self-knowledge proves real ahead of our ability
to exploit it. We take the map itself as the contribution --- now including its
newest entry, a capability bar the compositional direction
did not clear at this scale. The axiom, applied to its own
algebra, demanded a second direction of growth; the system
supplies it,
with the direction read from the model's own signals and
adjudicated by the same gate. A theory whose first principle is
that every change must survive reality's gate should expect to
publish its refusals --- and to grow. The same contract now extends to evaluative signals --- structure preference and policy optimization under one gate --- and the core method holds in its adjudicating lanes on the standard continual-learning benchmark family, with its dose boundaries reported: governed growth preserves the ability to keep learning along long task sequences, and converts announced review into recovered competence.

\phantomsection\addcontentsline{toc}{section}{Reproducibility and AI-Assistance Statement}
\markboth{Reproducibility and AI-Assistance Statement}{Reproducibility and AI-Assistance Statement}
\section*{Reproducibility and AI-Assistance Statement}

The complete system, acceptance suites, pre-registered specs,
drivers, and raw results exist as frozen, version-tagged
repositories (the system is called SoftModel in prose);
the library, the system, and the companion experiment stores
are public at
\url{https://github.com/continual-learning-models}
(repositories \texttt{GrowableSoftModel},
\texttt{SoftModelSystem}, and \texttt{paper-experiments}), and
the remaining artifacts are available to editors and
reviewers on request. Every number in this paper regenerates
from committed artifacts: \texttt{tests/acceptance/run\_acceptance.py} (mechanics), the
two-realizations program from \texttt{pytest} over
\texttt{tests/simulation} and \texttt{tests/capability} plus the
golden fixtures, with the App.~\ref{app:twodir} artifacts at
\nolinkurl{tests/logs/simulation_adaptive.jsonl} and
\nolinkurl{tests/logs/comp_smoke.jsonl},
\nolinkurl{experiments/*/driver.py} with committed \texttt{spec.md}
files (E-, S-, A-, and G-series), all seeded
(seed values in each run's resolved configuration) and CPU-only
(pure-\texttt{numpy} substrate; minutes per run on a laptop
core; specs and criteria are commit-dated in the repositories) --- the attention build adds its certification script
(mask-variance, degeneracy, and dose-response numbers recompute
from one committed file) and two campaign repositories in which
every run is a write-once directory carrying its resolved
configuration, provenance, and SHA-256 manifest, indexed by a
rebuildable registry; the evaluative-learning and benchmark batteries of
\S\ref{sec:eval-learn}/\S\ref{sec:cl-bench} reproduce from
the caller-side companion repository \texttt{paper-experiments}
(store directories \texttt{rl-supplement} and
\texttt{cl-benchmark}), archived with the library and
system repositories: frozen tags (\texttt{rl-unified-v1};
\texttt{sms-rl-unified-v1}) and
package versions pinned in each store's \texttt{PIN.txt} ---
the tags name pinned development states whose full history is
retained in the author's development archive, and the companion
repositories are content-identical snapshots of those states
(with dataset SHA-256 checksums for the benchmark lanes),
registered designs and pass criteria in the numbered design
documents archived under the companion repository's
\texttt{registrations/} directory, frozen instrument settings in \texttt{LINES.md},
per-run JSON result stores under \texttt{runs/}, and the
verification logs
(\texttt{QUASISTATIC\_FULL\_VERIFICATION.md},
\texttt{RL\_LIBRARY\_COMPARISON.md}) under the library's
\texttt{tests/logs} (registration order is evidenced by the development
archive's commit history, available to editors and reviewers
on request --- there the benchmark store's history spaces
registration, probe, and arms realistically, while the
reward-learning store batches some registration and result
commits minutes apart, evidencing sequence rather than
wall-clock precedence; the registered designs themselves ship
verbatim under \texttt{registrations/}). Drafting of this
paper was assisted by large language models operated by the author;
all claims were verified against the committed artifacts by the
author, and the evidence table (Table~\ref{tab:evidence}) is the
binding contract between text and record. Availability: the
evaluative-learning and benchmark evidence
(\S\ref{sec:eval-learn}/\S\ref{sec:cl-bench}) exists in full
in the companion repository \texttt{paper-experiments}, public
together with the library and system repositories at the
address above; the earlier sections' campaign
artifacts (the E-, S-, A-, and G-series experiment trees, the
attention-build certification, and the two campaign
repositories) are retained in the author's development archive
and are available to editors and reviewers on request.

\paragraph{The 2026 campaigns.} Library at tag
\texttt{depthgrowth-v1.7}; campaign drivers, judges, reports,
and write-once run directories under the campaign
\texttt{experiments/} tree (development archive; see the
availability note above) (one directory per campaign;
append-only judge outputs). Data-module SHAs: insertion /
shape / economy / windows \texttt{8fe72943\ldots}; capstone
\texttt{518aedae\ldots}; dual-axis \texttt{5d27422a\ldots};
forgetting-family \texttt{e9c3696f\ldots}; drift-economics
\texttt{72b81595\ldots}; cyclic-drift \texttt{401bc9a2\ldots};
depth-mapping $k{=}2/4/6$ \texttt{c1bf43ce\ldots} /
\texttt{8318872c\ldots} / \texttt{590df824\ldots}. Every run
directory records \texttt{git describe} of both repositories at
run time.

\clearpage
\phantomsection

\clearpage
\phantomsection
\appendix
\part*{Appendices}
\titleformat{\section}
  {\normalfont\fontsize{16}{19}\selectfont\bfseries}
  {Appendix~\thesection.}{1em}{}

\section{The Pre-Registered Empirical Program}
\label{sec:empirical}

This section presents the protocol, the exact mechanics, the
theory's own predictions (E-series), self-knowledge (S-series),
the attention program (A-series),
a real-data case study, and the generalization campaign
(G-series). Three further
evidence sections complete the chain --- the growth-value
campaigns, the fidelity program (GUARD), and the closing
evidence reading with Table~\ref{tab:evidence}, which indexes
every claim --- interleaved with the pedagogy, validation, and
mechanism appendices.
\label{sec:experiments}
\label{app:program}

\subsection{Protocol}\label{sec:protocol}

Every experiment in the program follows one discipline: the spec ---
hypotheses, arms, seeds, budgets, pass bars, and \emph{the
interpretation of every outcome including failure} --- is committed
before the driver first runs; three seeds with a $\ge 2/3$-seed rule (the E/S-series
convention; the later v9 campaigns register five seeds and their own
per-campaign bars; the 2026 campaign fleets run three seeds
per arm, their claims ordinal); raw
results committed even on failure; and FAIL branches applied
verbatim, never re-spun. The verdicts below are quoted as
adjudicated. Mechanical claims (exactness, budgets, gating,
quarantine, consent) are separately and continuously verified by the
released acceptance suite --- those all PASS, and every theory-level
claim in this paper stays within Table~\ref{tab:evidence}.

\paragraph{Compute disclosure.} All experiments ran CPU-only
(NumPy~\cite{harris2020numpy} is the release's only hard computational dependency) on
a single Apple-Silicon laptop (M1-class); no accelerator was
used anywhere in the program --- this is an environment
statement, as the write-once stores record configurations and
hashes, not hardware. Table~\ref{tab:compute} discloses the
recorded wall-clock of the Part-II verdict batteries; the
earlier campaigns' registrations carry their own timings.
Shared baseline arms (fixed/transformer between the generative
rounds; no-capacity/oracle between the open-inventory rounds)
are byte-identical reused run directories and are counted
once.

\begin{table}[b]
\centering\small
\caption{Recorded wall-clock (store fields; medians and ranges
in the run directories); bracketed labels are the registered
round names used in App.~\ref{sec:growthvalue} and
App.~\ref{sec:guardprog}. $^{a}$Includes the pre-fix battery
kept on record; excludes one anomalous 900.0\,s pre-fix wall
record (disclosed as recorded).}
\label{tab:compute}
\begin{tabular}{@{}lrrlr@{}}
\toprule
battery & runs & rows/run & end params & wall total \\
\midrule
GGEN2 (+P) [X1] & 20+20 & 40k\,/\,18.2k & 58--641 & 95\,s \\
GGROW2 [X2] & 15 & 30k & 1.9k--7.3k & 213\,s \\
GGEN3 (+P)$^{a}$ [X1b] & 25+25 & 40k\,/\,18.2k & 58--641 & 127\,s \\
GGROW3 [X2b] & 15 & 30k & 1.9k--2.5k & 237\,s \\
GUARD4 (+cal.) [r4] & 30+4 & 60k & 2.3k--6.4k & 22.4\,min \\
X-SAT r1 & 5 pairs & 600k & 5.0k$\to$40.8k & 8.7\,h \\
X-SAT r2 & 5 pairs & 600k & 12.0k$\to$58.4k & 5.7\,h \\
\bottomrule
\end{tabular}
\end{table}

\subsection{What verifies exactly (the mechanics)}

The acceptance suite confirms, on every run: $\rho$ refinement with
$|\Delta f| = 0$ and recursive application (inner-path sites at level $\ge 2$); $\omega$ widening with $|\Delta f| \le 2.22\times10^{-16}$;
$\sigma$ with exact entry; parameter-budget refusals raised on both
growth operators; full lineage events; $\Phi$ producing a gated
candidate adopted only on a gate success; holdout quarantine refusing self-study
contact with gate rows with an explicit logged error; self-study running only under granted,
logged budgets with zero-budget refusal; and the factory loop
(teach--gate--promote, garbage rejection, drift detect--recover,
discovery mining) end to end. The machinery of governed growth, in
short, works as specified.

\subsection{Where the theory's predictions stand (E-series)}
\label{sec:eseries}

\paragraph{T1 --- adaptation locus (E8).} Mixed. With all scales
training and the core law intact, adaptation concentrated in the fine
scales, as predicted (T1-i: supported). The stronger sub-prediction
--- that shifting the core law relocates adaptation to the coarse
scale by a $2\times$ margin (T1-ii) --- was not supported. The
observational method itself (watch where adaptation happens; freeze
nothing) worked as designed.

\paragraph{T2 --- lock-in release (E11a).} The pre-registered
\emph{precondition failed}: in 3/3 seeds the narrow-born task did not
induce capacity lock-in in the first place (inward refinement stayed
within $2\times$ of the oracle), so the release hypothesis was never
reached. We report this as it stands: at kilobyte scale with a
generous substrate, the capacity floor did not bind in our scenario.
The finding tempers \S\ref{sec:lockin}'s capacity instance --- and
leaves its interface instance, which is scale-free, untouched.

\paragraph{$\sigma$ exploitation (E11b).} Exact entry PASSed
(both zero backfill and true-value backfill), and
\textbf{the conjunctive pre-registered bar was NOT met (the pre-set
speed leg held in 1/3 seeds); the substance leg held in 3/3
seeds}: the
grown arm drove its error to $1.4\times10^{-3}$ against a control
floor of $\approx 5.7$ (the irreducible error without the
feature) --- a three-orders-of-magnitude separation that only the
grown input column can produce. The control arm is precisely the
topology-frozen null --- weights trainable, structure immobile ---
so this record is a mechanism verification of $\sigma$ against
structural freezing on the interface axis. The pre-registered bar was
conjunctive, and its second criterion --- exploitation within a
pre-set step budget --- held in only 1/3: the operator works and
the payoff is large; it arrived slower than the pre-registered
schedule.

\paragraph{T4 --- inversion early-warning and $\Phi$ (E11c).}
Takeover safety PASSed (re-founding cannot degrade the served model
--- the gate held). The trigger claims --- that amplitude inversion
\emph{predicts} necessity --- were NOT SUPPORTED in the tested
scenarios. $\Phi$ is safe; \emph{when} to fire it remains open.

\paragraph{T3 --- unfrozen robustness (E10).} FAIL branch applied:
under label noise the never-frozen coarse scale showed damage a
governance-matched control avoided at this scale. Read against
\S\ref{sec:governance}: the axiom without sufficiently strong
governance is exposed to noise; the pair is load-bearing, and this
experiment measures the cost of the axiom where governance was too
thin.

\subsection{Self-knowledge is real; exploiting it is open (S-series)}
\label{sec:sseries}

The S-series asked whether the model's \emph{self-knowledge} --- its
own signals about what it does not know --- is real, and whether
naive interventions built on those signals pay.

The signals are real, at full pre-registered strength. The model's
self-generated \textbf{doubt map points at true errors} (SQ1a: PASS
3/3 seeds). Its label-free \textbf{variation self-check} --- ``does
my answer move as the law implies when the input is transformed?''
--- is the best transfer-error predictor measured in the program
(SV1: PASS 3/3; correlation 0.68/0.58/0.60 vs 0.46/0.39/0.54 for
perturbation sensitivity), and by design contributes questions
without ever training on self-made labels.

The naive interventions are not yet earned, and each FAIL branch was
applied verbatim: question-directed teaching on the doubt map
\emph{harmed} in 2/3 seeds (SQ1b); study$\leftrightarrow$practice
alternation did not improve on study-only at its conjunctive bar (SP1); the
review-before-failure scheduler produced more failures than reactive
remediation and was dropped (SR1); the self-quiz loop improved on blind
replay in 1/3 seeds and was demoted (SZ1); and the layer's core claim
--- that granted self-study closes a measured gap at the tested
budget --- FAILed (SG1), with zero safety violations throughout.

We consider this pair of results --- \emph{signals real, exploitation
open} --- the program's central empirical sentence. It
says the raw material for autonomous self-improvement exists in a
model this small, and that turning it into learning gains is a
genuine open problem rather than an engineering formality.

\subsection{The attention program (A-series)}
\label{sec:aseries}

The released build extends the pre-registered program to the
attention component. Mechanics first, verdicts second, same
protocol (criteria printed before runs; negative branches at
full prominence; every number reproducible from the committed
certification script and analysis twin, each cell carrying its
run identifiers in a write-once store).
Table~\ref{tab:aseries} lists every row.

\begin{table}[htbp]
\centering\small
\caption{Attention-component program: mechanics rows (A-M) and
prediction rows (A-P), verdicts as registered.}
\label{tab:aseries}
\begin{tabular}{@{}p{0.09\linewidth}p{0.38\linewidth}p{0.45\linewidth}@{}}
\toprule
A-row & claim & verdict \\
\midrule
A-M1 & additive output identity (concat $=$ block-sum) &
exact ($3.6\times10^{-15}$) \\
A-M2 & \textsc{head-add} function-preserving & exact (bitwise
$0.0$) \\
A-M3 & \textsc{head-widen} with absorbed scale & exact
($\le 10^{-12}$; unabsorbed control shifts $O(1)=1.27$) \\
A-M4 & two-step trainability property & both instantiations as
derived (add: $W^O$ only at step 1, generators exactly $0$;
widen: zero side $W^K$/$W^O$ live at step 1 under the task
signal, seeded side's gradient exactly $0$ until step 2) \\
A-M5 & $J_{\mathrm{att}}$ certified gradient & FD agreement
$3.0\times10^{-11}$ (three configurations) \\
A-M6 & locality split (no upstream leak) & exact zeros with
positive control \\
A-M7 & $\mathbb{E}[J_{\mathrm{inv}}]=(1-1/K)J_{\mathrm{an}}$ &
confirmed ($|z|=0.73$); artifacts $18.4\%$/$12.2\%$ \\
\midrule
A-P1 & growth tracks demand & REFUTED strong form; pays under
starvation ($-35\%$/$-64\%$); trigger fired in all starved
seeds, gate accepted $2/5$ \\
A-P2 & local discipline vs global control & CONFIRMED ($97\%$
recovery at $\sim\!7\%$ cost vs $49\%$) \\
A-P3 & instruments lead held-out error & CONFIRMED (mean $+4.9$
periods, sd $16.4$; positive in $11/15$ runs; all misses at the
two smallest widths) \\
A-P4 & analytic $=$ mask at decisive regimes & CONFIRMED
(perfect parity; zero estimator variance) \\
\bottomrule
\end{tabular}
\end{table}

The A-P3 lead claim carries two controls, run after the review
round and reported whatever they said. First, a \emph{no-shift
control}: the same lane with phase B drawn from the unchanged
distribution fires the $2\sigma$ envelope in $13/15$ runs ---
the raw envelope-crossing criterion is a sensitive but noisy
instrument, so the lead is a statement about \emph{when} the
instruments move, not a usable standalone alarm at $2\sigma$;
any deployment needs a debounced threshold, and the false-alarm
rate is now on the record. Second, an \emph{external
comparator}: river's ADWIN~\cite{adwin} watching the identical per-step
error stream detects the drift in $32/32$ runs; the instruments
move earlier (median $22$ steps, positive in $28/30$ paired
runs), and ADWIN's own no-shift false-alarm rate is $6/16$ ---
the instruments buy their lead at a higher false-alarm cost
than a purpose-built detector, a trade the record states rather
than hides.

\subsection{Real-data case study: prequential concept-drift
streams}
\label{sec:streamcase}

The real-data stream study takes the grown attention substrate
to the stream-learning setting: prequential test-then-train~\cite{dawid}
(every prediction made before its label is seen, so every point
is out-of-sample), ten chronological stages per dataset, the
last tenth of each stage's rows reserved as never-streamed
retention probes, five seeds per arm, an order guard hashing
the learnable state around every prediction. Arms: the full
system (governed growth $+$ discipline), its own ablations
(growth off; discipline off), a fixed transformer host at
matched parameter budget, and four published stream-learning
baselines re-run locally through the \texttt{river}
library's~\cite{river} standard implementations (library defaults, no per-dataset
tuning on either side, configurations in each run's
resolved-configuration record; the fixed transformer host is
parameter-matched to the grown arm's end-of-run count). The
baselines span two classes, and the comparison is read
{\sloppy
class-first: the \emph{single-model} class (Hoeffding Adaptive
Tree~\cite{hat}) is the like-for-like --- and primary ---
comparison, one model against one model; the \emph{ensemble}
class (Adaptive Random Forest~\cite{arf}, Streaming Random
Patches~\cite{srp}, Leveraging Bagging~\cite{lbag} ---
tens of voting members each) is reported as a secondary,
deliberately asymmetric reference: a single grown organ against
committees. The claim is scoped to systems with layered,
long-lived regularities; streams whose every regularity is
short-lived belong to detect-and-rebuild specialists and are
outside it.\par}

{\sloppy
On the modern multi-class drift benchmark (the INSECTS
incremental-abrupt sensor stream~\cite{insects}; 79{,}986 rows,
33 features, six classes), the full system's prequential accuracy
$\mathbf{0.762}$ (seed mean, as for every baseline; per-seed
range $0.759$--$0.764$, sd $0.0017$) achieved the higher
accuracy in the like-for-like comparison by a clear margin
(single-model HAT: $0.599$). The
ensemble class is recorded as context, not as a like-for-like
comparison --- we field no ensemble --- and the single organ's
$0.762$ nevertheless sits above ARF $0.746$, SRP
$0.743$, and LB $0.684$, at a fraction of their computation
(mean wall time $41$\,s against SRP's $534$\,s and LB's
$744$\,s on identical hardware).
Attribution is reported
as measured: the paired ablation deltas attribute the result to the
substrate-plus-plasticity, \emph{not} to growth events (deltas
$\approx 0$; the P1 verdict's regime condition --- demand must
exceed capacity --- did not hold here). On the slow-drift NOAA
weather stream~\cite{noaaweather} the single organ was again the more accurate
model in its class (HAT
$0.739$ vs.\ our $0.768$); for context, the ensembles reach
$0.78$--$0.79$ there --- a class we do not compare against until an
organ ensemble exists. Retention on returning regimes
(first regime's accuracy at end of stream over its
just-learned level; latest run per seed): the system holds
$\approx 1.00$ --- zero measured forgetting --- against SRP
$0.99$, ARF $0.96$, LB $0.90$; HAT's $1.30$ sits on a far lower
accuracy level and is not comparable. The instrument trajectories ---
per-head loading and entropy-band occupancy across the stream's
life --- are an observable no baseline method produces; they
are released with the runs.\par}

\subsection{Generalization campaign (G-series)}
\label{sec:gseries}

A third pre-registered campaign asked the question this theory
exists to answer: what does \emph{in-service} plasticity buy on
problems easy for humans and hard for frozen models?
Length-generalization tasks (multi-digit addition, delayed
copy, running maximum; train lengths 3--6, test 7--16) were run
under two protocols on identical substrates: the literature's
train-then-freeze, and the system's own lifelong
test-then-train. The frozen protocol reproduces the known
cliff at the training boundary (addition: $0.65$ at length 6
$\to$ $0.11$ at length 7); the lifelong protocol holds
$0.51$--$0.57$ across the untrained lengths --- paired per-seed
deltas $+0.32$ (5/5 seeds), $+0.08$ (4/5), $+0.30$ (5/5) on the
three tasks. The lifelong arm consumes each label only
\emph{after} predicting it (prequential), so every point is
out-of-sample at prediction time; what the delta measures is
therefore in-service \emph{adaptation} to lengths never
trained before their arrival --- the claim is that serving
closes the frozen cliff, not that frozen-style generalization
appears for free. On the running-maximum task the growable substrate
also generalized far beyond the matched fixed transformer under
freezing ($0.70$ vs.\ $0.26$ out-of-range) --- selection-type
computation is the attention family's home ground, and the
grown form keeps that advantage. The campaign's negatives are
reported with their measured causes: streaming exposure did
\emph{not} assemble novel symbol \emph{combinations} (on the official SCAN~\cite{scan} \texttt{addprim\_jump} split,
length-filtered to the substrate's 16-token window with retained
fractions printed per run --- $0.38$ train / $0.29$ test ---
unseen-combination exact-match remained at zero for every arm,
frozen and lifelong alike, own-baseline comparisons only:
plasticity extends existing rules to new ranges; it did not
compose new ones here), and a
birth-capacity constraint was isolated by intervention (the shipped
minimal birth starves symbol-heavy tasks; two heads of width
four restore parity with the fixed host --- capacity at birth
remains a configuration obligation until the growth trigger's
senses close the gap). The growth trigger itself was
redesigned after the campaign's main batches: the redesign
ran in a separately labeled diagnostic lane against four
scenarios whose pass criteria were registered before its runs
(true starvation; under-training; an unlearnable plateau; a
mastered plateau), and the original trigger's results were
retained and reported, never replaced --- the acceptance-step
diagnosis of \S\ref{sec:growatt-verdicts} (the trigger fired in
every starved seed; the gate accepted $2/5$) refers to the
original trigger, which also \emph{overfired} where it did run:
on lifelong \textsc{copy} it reached the eight-event cap in
every seed at a $-0.32$ prequential cost against the fixed
host. The redesigned form (train-first escalation,
an improvement-requiring gate, grow-then-retrain cycling)
validated as a \emph{governor}: it no longer fires on
unlearnable or mastered plateaus and never harmed a healthy
model, while its remaining bound (rebuilding a starved model
from minimal birth within the step budget) is traced to the
growth operators' one-unit step size and stated as an open
core-side item, not hidden.

\paragraph{Governance vs.\ immobilization: the pre-registered
EWC comparison.} The comparison most central to this theory's
claim was
run last, with criteria registered before the runs: on the same
wide-birth substrate at matched parameters, governed plasticity
(the full arm) against Elastic Weight Consolidation (growth
off; diagonal Fisher over 50 batches at each stage boundary;
online accumulation; the anchor penalty applied as a decoupled
post-step --- the documented deviation, analogous to decoupled
weight decay, since the frozen substrate admits no in-loss
term; the deviation was subsequently validated against standard
in-loss EWC on an unfrozen toy net sharing the identical Fisher
and anchors --- median metric gaps $0.000$--$0.004$ across the
$\lambda$ grid with identical qualitative ordering, so the
variant costs the baseline nothing). Generosity ran toward the baseline: a pre-declared
$\lambda \in \{10, 100, 1000\}$ grid, all cells reported,
EWC's \emph{best} cell per task against our single standing
configuration. Criteria: adaptation (mean over post-first
stages of each stage's own-boundary probe accuracy) and
retention ($M[1,\mathrm{last}]/M[1,1]$), seed-paired majority,
thesis supported iff both hold on at least two of four tasks.
Verdict: \textbf{supported, 3/4}. Seed medians with seed
standard deviations and paired per-seed counts: addition, ours
$0.624\,(\pm.04)/1.051\,(\pm.05)$ (adaptation/retention)
vs.\ best-EWC $0.588\,(\pm.02)/0.993\,(\pm.11)$, paired
adaptation favorable $4/5$; delayed copy $0.888\,(\pm.29)/1.017\,
(\pm.18)$ vs.\ $0.768\,(\pm.36)/1.009\,(\pm.45)$, favorable
$4/5$; running maximum $0.987\,(\pm.04)/1.006\,(\pm.17)$
vs.\ $0.950\,(\pm.43)/0.997\,(\pm.43)$, favorable $4/5$ (the
EWC cells' large dispersions are themselves informative: the
anchor's effect varies strongly across seeds). The miss is
reported at full prominence, with its measured cause: on the
false-belief stream the best EWC cell \emph{adapted better}
($0.770\,(\pm.35)$ vs.\ $0.716\,(\pm.07)$, 2/5 paired seeds
favorable to ours) ---
immobilization was the more suitable method on one task and the
record says so. The pattern
across the four tasks is itself the finding: the three tasks
favorable to the governed arm are the \emph{drifting} ones (the
length distribution shifts between stages), and the unfavorable
one is the only \emph{stationary continuation} (the false-belief
stages are same-distribution episodes) --- where nothing
drifts, a mild Fisher anchor costs no adaptation and acts as
pure regularization around an already-good solution, while the
governed arm's plasticity never anneals and keeps stirring a
solved region (its own no-growth ablation is also more accurate
there,
$0.736$, by the same mechanism). The diagnosis names a real
gap in the present method --- plasticity has a governor for
growth but none for \emph{calm}: update speed does not
decrease where the world is still. The direction it implies is
recorded as future work in the method's own vocabulary rather
than EWC's: consolidation in \emph{function space} --- a
probe-set drift budget bounding behavioral movement in
stationary regimes (machinery the lifecycle already ships for
teaching), with update speed as a continuous policy field that
anneals where loss is low and stationary and recovers when
drift signals return --- parameters stay trainable everywhere
(the axiom is untouched); what is governed is the freedom to
wander, not the freedom to learn. The grid's upper end is its own exhibit:
$\lambda = 1000$ over-pins severely (retention ratios
$0.03$--$0.12$ --- anchored so hard it can neither learn the
new stages nor, in consequence, hold its early skill), the
stability--plasticity dilemma enacted; the governed arm holds
retention $\approx 1$ on three of four tasks with no anchor at
all. The scope is explicit: kilobyte scale, one immobilization
family (EWC); progressive columns and adapter freezing remain
open.

\clearpage
\section{The Growth-Value Campaigns}
\label{sec:growthvalue}

Two batteries asked the direction question at life scale under
opposite evidence regimes --- exact answers and generated
distributions --- with a public stationary companion bounding
the claim's scope.

\subsection{G-GROW-1: the open-inventory exact-answer
battery}
The first battery extends the question to exact-answer
life scale: in a world whose \emph{class inventory} itself grows,
does governed structural growth pay over an identically
educated fixed-birth net? One life per seed is serialized once
--- pattern-template classification over token sequences, four
classes at birth, arrivals on a hidden schedule up to
thirty-two, final quarter stationary --- and every arm consumes
the same stream byte for byte: the governed grower (minimal
birth; the redesigned trigger and adoption gate), the same
birth with growth disabled, an oracle born at final capacity
(the joint-training upper-bound analogue of continual
learning), and an external Hoeffding-tree reference. Two rounds
ran, each registered before its runs. Round 1's own validity
probe failed at full length --- the oracle did not improve on the
fixed birth, so the world was noise-limited rather than
capacity-limited and the round's nominal successes are vacuous;
its verdicts stand recorded, and round 2 was re-registered
openly (lower noise, larger inventory, smaller births,
end-of-life \emph{exam} accuracy as the registered metric),
with the expected failure of the pay criterion declared in the
registration itself.

Round 2 returned one observation, one mechanism confirmation,
and one negative result. The observation: \emph{capacity binds on exams while online
accuracy hides it} --- the oracle scores above the fixed birth on
end-of-life exam accuracy in $4/5$ seeds (median $0.64$
vs.\ $0.44$) while the same runs' prequential accuracies are
indistinguishable; online accuracy confounds optimization speed
with capacity, an instrument finding that outlives this world.
The negative, at full prominence: growth did not pay --- the
thesis this battery was built to test is \textbf{not supported}
in the open round (C-G1 $0/5$ in both rounds; C-G2 accuracy
$5/5$ in round 1 --- a pass the round's own validity finding
attributes to closed-world saturation --- then $1/5$ in the
openly redesigned round 2; parsimony $5/5$ throughout) --- because the gate refused essentially every proposal.
The mechanism confirmation is that the refusals adjudicate as
\emph{correct}: a tolerant lane at margin $-0.05$ reproduced
the strict lane bitwise across all five seeds (every refusal is
deep --- no adoption decision changes anywhere in
$[-0.05,+0.02]$), and the mechanism is measured and disclosed
--- probe training on the wake window's single modal bucket
interferes globally (200 probe-scale steps halve global exam
accuracy, $0.314 \to 0.169$, damaging every era), so candidate
structures arrive at adjudication damaged; probe hygiene is
thereby an operator-layer obligation, recorded beside the
one-unit step size above. The battery's single acceptance is
the record's most informative exhibit: correctly timed (249 rows after
an inventory onset; the timing diagnostic C-G3 is clean),
genuinely improving for the next eight thousand rows (exam
$0.57$ vs.\ the identical-education counterfactual's $0.47$
mid-life), then collapsing under later arrivals ($0.18$ at
three-quarter life; $0.37$ vs.\ $0.57$ at the end). An adoption
gate scores present evidence and cannot see lifetime
consequences: \emph{delayed harm past a correctly operating
gate} is now a
measured phenomenon in this record, and the direction it
implies --- adoption as a provisional verdict with a
post-adoption review horizon, milestone structural versions
retained as rollback anchors --- is recorded as future work in
the lifecycle's own vocabulary, not as a patch.

\subsection{G-GEN-1: the generative battery, and the
scope observation from its stationary companion}
The
direction question was then re-asked in the mode this method is
built for --- \emph{generative} problems, graded on distribution
quality, never on exact answers. The world is a growing-mixture
regression: a four-dimensional condition selects one of $K(t)$
local laws (each with its own noise scale --- the calibration
target), and $K(t)$ grows $4 \to 16$ on a hidden schedule; the
model emits a distribution (mean and its own uncertainty) for
every row. Metrics are trajectories only: windowed prequential
Gaussian NLL, coverage of the model's own $95\%$ band, and
recovery time after each arrival. Four arms consume one stream:
the governed grower (small birth, a two-lane adoption gate at
the caller), the same birth fixed, an oracle born at final
capacity, and a \emph{standard transformer} at matched parameter
budget with a rolling-residual uncertainty --- deliberately a
stock model, not a streaming specialist. Pre-registered
verdicts: capacity binds (validated on a burned seed, gap
$0.15$); the thesis is \textbf{supported} --- growth pays over
the fixed birth ($3/5$ paired), tracks the oracle within the
registered band ($4/5$), and keeps its uncertainty calibrated with
fast recoveries ($5/5$); the grower also achieves better scores
than the matched standard transformer on every seed ($5/5$,
reported; judged final-third pairs transformer/grow:
$1.967/1.863$, $1.859/1.699$, $2.115/1.933$, $2.001/1.968$,
$2.016/1.883$). Final
median NLL orders oracle $1.83 <$ \textbf{grow} $1.88 <$ fixed
birth $1.98 <$ transformer $2.00$, with $2$--$8$ adopted growth
events per seed carrying the gap-closing --- the growth is
load-bearing, not decorative. Two observations land with it. First,
\emph{the metric decides governability}: per-sample-informative
scores (NLL) keep the adoption gate alive --- the same gate that
starved on exact-match exams in the open-inventory battery
adopted confidently here; whether growth can be governed depends
on what the evidence stream can say per row. Second, the
\emph{scope observation}, measured rather than asserted: on a companion
PUBLIC real stream (next-day surface temperature --- naturally
stationary), the ordering inverts and the standard transformer
is the most accurate arm, while an unsilenced trigger manufactures
unnecessary growth and turns it into harm ($8/8$ adoptions on
noise, all regretted). A non-growing world needs no growth ---
no growth-value question arises there; the harm
belongs to a governor without a stationarity silence, the same
missing calm mechanism the EWC comparison named. Growth is a
POLICY answering the data: where the world grows, governed
growth pays without foreknowledge of the final size --- the fixed
alternative must either be born too small or anticipate the
final size in advance (born final-sized, and measurably worse at learning online); where the
world is still, the correct governed behavior is silence.

\emph{The fair-uncertainty rerun (E-16, 2026-07).} The stock
arm's uncertainty is a bolt-on rolling-residual window while
the grown arm's is learned --- an unfair-baseline defect under
the house relative-comparison doctrine, priced by a registered
rerun on the identical archived streams: the transformer was
given the same weapon (a learned heteroscedastic Gaussian head
trained by NLL~\cite{nix1994}, budget-matched at 650 vs 641
parameters), with the outcome map registered before running.
Sanity gate: the learned head beats the stock convention in
5/5 seeds (verdict-seed median end-window NLL $1.8195$ vs
$1.9669$) --- the
fair arm is well calibrated. Against the grown arm the fair
baseline reads \emph{parity}: grown better in 3/5 seeds,
verdict-seed medians $1.8196$ vs $1.8195$ (per-seed pairs
grow/fair: $1.786/1.847$, $1.617/1.619$, $1.820/1.789$,
$1.897/1.820$, $1.821/1.846$). Per the registered map the
flagship comparison is re-scoped: the $5/5$ stands as a
statement about the stock convention; at matched uncertainty
the static comparison is parity. This fixes the static
comparison's role in this paper --- a sanity floor showing
that governed growth does not sacrifice static quality at
matched budget --- while the design's evaluation lives on the
time axis (in-service adaptation, retention, relearning;
\S\ref{sec:cl}), where a fixed model does not participate.
The X1/X1b transformer counts below use the same stock
convention and should be read with this caveat.

\subsection{Round 2 of the generative campaign: the silenced
gate (X1)}
\label{app:x1}

\emph{Registration and question.} The round asked whether the
growth gate can be taught to stay quiet where the world does
not grow, without suppressing its response where the world
does. Frozen
acceptance (verbatim): A1 --- stationary-track adoptions net of
audit reverts $= 0$ on $\ge 4/5$ seeds, any surviving adoption
non-regretted (final-third NLL within $0.02$ of the fixed
baseline); A2 --- the growing track's criteria not degraded
(each win count $\ge$ round~1 $-$ 1); CENTRAL $=$ A1 $\wedge$
A2, with the registered clause ``if silence deafens, that
trade-off is the finding.''

\emph{Setup and fairness.} Both tracks re-ran with the same
worlds, arms, and budgets as round 1; round-1 baselines were
reused byte-identical. The gate gained a mute/backoff, a
robust-scale stillness band, and --- decisive, from the probe
ledger --- a train-only control arm fed identical batches, so
that a growth trial's advantage is measured net of the extra
training it received. Probe iteration used burned seeds
100/102 exclusively; verdict seeds 0--4 were first touched by
the frozen battery.

\emph{Results.} On the stationary public track, net adoptions
per seed were $1/0/1/2/0$ (median 1, range 0--2; gross $2$--$3$
with $1$--$2$ audit reverts each); $3/5$ seeds ended
non-regretted --- against round~1's $8/8$ regretted
adoptions. On the growing track, C-N1 (vs.\ the
lifelong-trained fixed twin) fell $3/5 \to 1/5$; C-N2 (oracle
band) held $4/5$; C-N3 (coverage and recovery) held $5/5$;
C-N4 (vs.\ the budget-matched transformer) $4/5$ (judged
final-third pairs transformer/grow: $1.967/1.896$,
$1.859/1.747$, $2.115/2.099$, $2.001/2.002$, $2.016/1.970$
--- the miss is seed 3's $-0.001$). Grow
final-third NLL per seed: $1.8961$\slash $1.7468$\slash
$2.0990$\slash $2.0015$\slash $1.9696$ (median $1.9696$);
fixed-twin per seed $1.9808$\slash $1.8802$\slash
$2.0789$\slash $2.0071$\slash $1.9717$ (median $1.9808$);
oracle median $1.8310$; transformer median $2.0005$.

\emph{Verdict and observations.} CENTRAL: FAIL --- A1 narrowly (two
non-regret breaches, one of them harm maturing past the audit
horizon), A2 on C-N1 only; the failure is the registered
trade-off, now measured. The round's instrumented autopsy
produced the program's central negative measurement: with
training gains controlled away, the capacity effect is
invisible at any affordable probe horizon (pain-subset paired
differences $0.009 \pm 0.017$ and comparable on five refused
proposals, against $\sim 0.1$ nats long-run) --- L13; the audit
horizon bounds what ex-post governance can catch --- L14; and
the price of silence is the growing-track margin --- L15
(\S\ref{sec:empirical-laws}).

\subsection{Round 2 of the open-inventory campaign: probe
hygiene and the provisional gate (X2)}
\label{app:x2}

\emph{Registration and question.} The first open-inventory
round ended with a starved gate (nothing ever adopted). This
round changed only caller-side strategy --- multi-bucket probe
lanes, probe-budget scaling, a widen-to-head-add escalation
path --- and re-adjudicated the registered criteria verbatim,
with the expectation note printed in advance: adoptions $> 0$ on
most seeds and the oracle gap narrowed, or full FAIL branches.

\emph{Setup and fairness.} Same world, arms, and criteria as
round 1; the no-capacity and oracle arms reused byte-identical.
The probe ledger (three rounds, burned seeds) discovered that
the ex-ante probe was not merely uninformative but
\emph{self-injuring}: probe training damaged every trial (gate
cross-entropy $4.4$--$8.2$ before $\to$ $10.5$--$22.8$ after
across eight events). The frozen rule that followed: adopt the
trial \emph{untrained} (growth application is exact, so the
step is function-preserving), keep the previous model training
as a same-flushes shadow, audit at $+3{,}000$ rows against that
counterfactual, revert with zero learning loss on failure.

\emph{Results.} The gate un-starved: adoptions $2$--$3$ per
seed on $5/5$ (round 1: 0), audits kept $1/1/1/0/0$ and
reverted 2 per seed. The median late-third score rose
$0.5014 \to 0.5653$ and the mean paired gap to the oracle
narrowed $0.046 \to 0.034$ (the registered report's aggregate
line mixed a median with a mean-referenced gap; it is
re-derived here from the stores --- per-seed values are
unaffected and quoted below). Per-seed late-third scores
(grow/fixed-birth/oracle):
$0.5653/0.5530/0.5304$, $0.4595/0.4561/0.4487$, $0.4752/0.5014/0.6518$,
$0.8159/0.8194/0.8217$, $0.6555/0.6565/0.6881$. C-G1 (grow improves on
the fixed birth by the $0.02$ margin): $0/5$ still --- grow is
now ahead on $2/5$ seeds but under the margin ($+0.0123$,
$+0.0034$). C-G2: accuracy $4/5$ (from $1/5$ in the ex-ante round~2;
round 1's $5/5$ carries its closed-world validity finding),
parsimony $5/5$.
Wall-clock fell $43\,\mathrm{s} \to 17.4\,\mathrm{s}$ per life.

\emph{Verdict and observations.} C-G1 FAIL unchanged, C-G2 PASS
improved; the diagnosis matured into L16: a width-vs-heads
census at birth showed \emph{both} growth directions carry
capacity value when given a whole life to compound, so what
mid-life adoption lacks is time, not direction --- capacity
pays in proportion to the trained life it has left.

\subsection{Round 3, generative track: incumbent--candidate
governance (X1b)}
\label{app:x1b}

\emph{Registration and stake.} The round deployed the
governance form the theory itself dictates --- the incumbent
serves untouched; a grown candidate trains on the same flushes
and is promoted only at the first audited checkpoint at which
it scores higher ($+W/+2W/+4W$); a demotion shadow guards every promotion to
$+4W$ --- with the stake printed in advance: PASS deletes the
open objections; FAIL revises the claim to its measured
boundary; no third outcome.

\emph{Setup and fairness.} Same tracks and criteria as X1;
trials are free by construction (the serving trajectory is
bit-equal to a never-growing arm until a promotion occurs), so
the stationary track now measures value judgment rather than
silence.

\emph{Results.} Stationary track: net adoptions $= 1$ on
$5/5$ seeds --- the silence clause fails as written --- but $4/5$
promotions were \emph{non-regretted} (final-third pairs
$0.6961$ vs $0.7299$, $0.7083$ vs $0.6907{+}0.02$, $0.6846$ vs
$0.7424$, $0.7381$ vs $0.8573$; lower NLL is better, and
non-regret means within the $0.02$ margin of the fixed
baseline); the fifth seed's surviving promotion is regretted
--- it exceeds that margin. The governor found real
capacity value on the nominally stationary track and bought
it. Growing track: C-N1 $1/5$; C-N2 $4/5$; C-N3 $5/5$; C-N4
vs.\ the budget-matched transformer $5/5$ --- the largest
transformer-comparison margin of any round (grow final-third
NLL per seed $1.8926/1.7511/2.0123/1.9720/1.9302$ against
transformer $1.9674/1.8592/2.1146/2.0005/2.0163$, median
$1.9302$ vs $2.0163$; the fixed arms are byte-identical reuses
of X1's). A battery autopsy found and
fixed a governance-implementation defect: a promotion arriving
while a demotion window was open clobbered the open rollback
anchor, promoting a diverged incumbent (serving NLL
$11.38$); the spec-enforcing fix (promotion deferred while any
demotion window is open) removed the explosion (seed's
final-third $0.6533$) --- L18; both batteries are on record.

\emph{Verdict.} A1 FAIL (by construction of the bar), A2 FAIL
on C-N1 only; the value-governance mechanics succeeded ---
zero-cost trials, bit-exact serving isolation, non-regretted
or rolled-back adoptions, no catastrophes.

\subsection{Round 3, open-inventory track: the same governance
under exam metrics (X2b)}
\label{app:x2b}

\emph{Setup.} Identical governance transplanted to the exam
world; same registered criteria as X2.

{\sloppy
\emph{Results and verdict.} Late-third exam scores per seed
(grow): $0.5150/0.4170/0.5014/0.8307/0.5793$ (median $0.5150$); the
no-capacity and oracle arms are byte-identical reuses of X2's.
C-G1 $0/5$ (margin shortfalls
$-0.058$, $-0.059$, $-0.020$, $-0.009$, $-0.097$; the one
sub-margin advantage $+0.011$); the net-zero seed sits at
\emph{exactly} the no-capacity arm's score --- serving
isolation is bit-exact in the exam world too. C-G2 pass
(accuracy $3/5$, parsimony $5/5$). The instructive failure:
promotions passed on bucket cross-entropy, harmed on exams,
and the demotion audit --- reading the same bucket
cross-entropy --- cannot see exam harm. Governance cannot
certify on an axis it does not measure (L14 in full force);
exam-shaped audit metrics are the registered escape (T23).\par}

\emph{The measured boundary (all four gate rounds).} Governed
growth achieves better scores than budget-matched standard
architectures under arriving structure ($5/5$, $4/5$, $5/5$ across rounds); within
one life, mid-life capacity does not outrun a lifelong-trained
fixed twin of the same substrate by the registered margins
(C-N1 $3/5 \to 1/5 \to 1/5$; C-G1 $0/5$ throughout) --- the
governance frontier, L17: ex-ante proof is impossible (L13),
so proof costs at least one audit window of latency, and that
latency is exactly the head start the incumbent keeps ---
within gate-round-length lives; the 600k-row X-SAT lives
(App.~\ref{app:xsat}) bracket the frontier from the other
side, where capacity's within-life comparison crosses. The
E-16 rerun above fixes the transformer comparisons' role:
a sanity floor under the stock uncertainty convention, parity
at matched uncertainty.

\subsection{The size-adjustability round (X-SAT)}
\label{app:xsat}

\emph{Registration and question.} Can the same governed
machinery take a network born small past a fixed counterpart as
data grows without bound --- and does the grown capacity
convert to score? Frozen acceptance: A-1 (size adjustability)
--- born below the counterpart, ends above it, $\ge 4/5$ seeds;
A-2 (value crossover) --- a \emph{settled} stage-exam
crossover exists on $\ge 3/5$ seeds, reported as the pair
($N$ rows, $P$ parameters). The author-fixed sufficiency rule
was frozen at design time: both arms are consolidated to a
measured plateau before every comparison, and checkpoints
within one audit window of a promotion are excluded as
unsettled.

\emph{Setup and fairness.} One class-arrival stream feeds both
arms identically for life; the counterpart is a fixed network of
the same substrate family (12{,}020 backbone parameters; both
arms' class heads grow identically with the arriving
inventory) trained on every row; the growable arm buys capacity only through the
frozen incumbent--candidate governance. Fairness terms are
part of the claim: the fixed network's defining constraint is
precisely that it cannot grow, so every favorable comparison is reported
beside the size ratio at which it was obtained.

\emph{The first round (extreme pace), kept as the disclosed
strategy iteration.} With a small birth (5{,}020) under an
extreme arrival rate, A-1 PASSED $5/5$ --- born $2.4\times$
smaller than the counterpart, ended $3.3\times$ larger
(39{,}375--40{,}815) via 15--20 earned promotions --- but A-2
failed ($2/5$: crossovers at $200{,}000$ rows $\cdot$
16{,}755 params and $550{,}000 \cdot 37{,}908$; three seeds
censored with both arms collapsed at end accuracies
$4$--$8\%$). The verbatim FAIL branch named the mechanism: a
serialized candidate pipeline buys $\sim$1 promotion per
30--40k rows (L19); demand arrived faster than that purchase
rate. After a
machine reboot killed the first battery mid-flight, all seeds
were deleted and rerun fresh by ruling; determinism
spot-checks matched byte-for-byte at shared checkpoints.

\emph{The moderate round (primary).} The corrected strategy
--- no extremes --- was frozen before any verdict run: birth
buffered to parity with the counterpart (12{,}020 each), arrival
pace moderated to demand $\approx 2\times$ the counterpart's
capacity. Table~\ref{tab:xsat2} carries the per-seed results
(counterpart 12{,}020).

\begin{table}[htbp]
\centering\small
\caption{X-SAT round 2 (moderate regime), per seed: end size,
registered crossover point, and end-of-life loss ratio.}
\label{tab:xsat2}
\begin{tabular}{@{}rrlc@{}}
\toprule
seed & end params & crossover ($N$ rows $\cdot$ $P$ params) &
end grow/fixed \\
\midrule
0 & 57{,}247 & $200{,}000 \cdot 27{,}411$ & $0.261\,/\,0.201$ \\
1 & 58{,}015 & censored & $0.126\,/\,0.143$ \\
2 & 54{,}943 & $350{,}000 \cdot 35{,}288$ & (final checkpoint
unsettled)$^{\dagger}$ \\
3 & 54{,}943 & $550{,}000 \cdot 51{,}358$ & $0.372\,/\,0.264$ \\
4 & 58{,}399 & $50{,}000 \cdot 17{,}614$ & $0.179\,/\,0.076$ \\
\bottomrule
\end{tabular}

\smallskip
{\small $^{\dagger}$Seed 2's raw end pair
($0.222/0.265$) is a checkpoint within one audit window of a
promotion --- excluded by the registered sufficiency rule
(verified); its settled mid-life comparison reads
$0.50$--$0.51$ vs $0.31$--$0.32$.}
\end{table}

Every seed ended $4.6$--$4.9\times$ the counterpart's size
through 15--19 earned promotions; $4/5$ seeds crossed and
stayed ahead (median crossover $275{,}000$ rows); seed 1
ended behind on the raw end pair ($0.126/0.143$) and
is censored. Sustained
overtake began at $1.47/2.28/2.94/4.27\times$ the counterpart's
size (median $\sim 2.6\times$) --- the measured exchange rate
of capacity for score under identical training.

\emph{Verdict and observations.} With round 1's A-1 ($5/5$) and the
moderate round's A-2 ($4/5$), the size-adjustability program's
CENTRAL verdict is PASS. The strategy that succeeded ---
moderate lifelong data growth, buffered birth, the frozen
governance unchanged --- was located by the failed extreme
round, whose observations stand as the boundaries (L19 purchase rate;
L20 demand source: the companion smooth-regression leg was
stopped at its pilot gate when a $20\times$ parameter range
scored flat --- discrete memorization binds parameters; smooth
regression does not).

\clearpage
\section{A Pedagogy for Lifelong Models}\label{sec:pedagogy}

A model that is never frozen is never finished being taught, so
a theory of lifelong models is incomplete without a theory of
their education. We state one here as abstracted principles ---
each drawn from the long empirical record of human education,
restated in machine terms, and, where our campaigns permit, tied
to a measured result rather than left as analogy.

\textbf{1. Foundation.} Education begins with a curated canon:
a small, validated, unambiguous body of exemplars taught before
anything else, under zero pollution. In a lifelong learner the
early weights are the substrate every later lesson lands on, so
foundation quality is an architectural property, not merely a
data property. (Enforced in our protocol as a clean foundation
stage; its absence is not directly testable without abandoning
the parity of arms, and we mark the causal claim open.)

\textbf{2. Sequencing.} The curriculum is ordered from concepts
to composition to integration --- but ordering is the
\emph{learner's} strategy, never an assumption about the serving
world, which arrives unordered and drifting.

\textbf{3. Repetition and its rhythm.} Retention is manufactured
by review, spaced densely while a lesson is fresh and sustained
at a steady cadence for life. Our record makes the necessity
concrete: review dose is non-monotonic (doubling a drill moved a
composition skill from $0.20$ to $0.45$; six-fold drilling
collapsed it to $0.17$ and damaged neighboring skills --- a
measured crowding-out effect), and a skill drillable to $1.0$ in
isolation plateaued near chance inside a full curriculum ---
exposure economics, not capability, was the binding constraint (App.~\ref{sec:guardprog}).

\textbf{4. Contemporaneity.} Classics survive by being reprinted
in each era's language. In machine terms: review material must
be re-realized in the \emph{current} surface forms while its
principle stays invariant. Our record contains the failure mode
of violating this: frozen review text taught old surfaces
forever, leaving the ordinary stream as the sole channel for
new forms, and any throttling of that channel was paid at the
fidelity minimum (App.~\ref{sec:guardprog}).

\textbf{5. Blending.} Principle and skill are taught as one
exercise --- every lesson simultaneously trains the task and the
standard, with no separable ethics stream. What is never
separated cannot be cheaply excised; separability is an attack
surface. (Non-extractability itself remains a future
measurement.)

\textbf{6. Controlled exposure.} After foundations, the
curriculum deliberately includes measured contact with polluted
and deceptive material. A learner meeting corruption for the
first time in service has no antibodies; ours met scheduled
mislabeling and counterfeit framing from its basic stage onward.

\textbf{7. Self-examination.} The learner carries its own canon
and tests itself against it for life; judgment of drift comes
from within, not from an external censor. Measured: an
undefended plastic learner followed coherent corruption almost
exactly as the incentive predicted; a canon-anchored self-test
with re-teaching under alarm recovered half the damage at
$\sim\!11\%$ vigilance cost --- necessity and partial efficacy
are both on the record, with the timing dilemma (lagging alarms
act late, leading alarms cannot tell novelty from corruption)
stated as the open core of machine immunity (App.~\ref{sec:guardprog}).

\textbf{8. Lifelong remediation and its economics.} Decline is
not terminal: repeated education applies again, any number of
times, targeted first at the weakest items. Here the two
architectures part ways economically --- a plastic learner's
remediation rides its always-on trainability, lesson by lesson;
a frozen model's remediation is an unfreeze-and-retrain event
that must sweep its full canon to fend off forgetting, so its
remedial retraining grows more expensive as its curriculum
grows. The
retention--cost frontier, not any single score, is the appropriate
comparison (App.~\ref{sec:guardprog}).

\textbf{9. Evaluation on the time axis.} A lifelong learner is
not graded by one examination but by trajectories: the rate of
fidelity decline under drift, time to degradation, recovery
speed after remediation, and the cost of that recovery.
Single-number minima conflate a principle abandoned with a
transition survived; our own instrument record documents that
conflation and its correction.

\textbf{10. Governors have regimes.} The pedagogy is enforced by
governors --- growth when structure arrives, calm when the world
is still, immunity when the signal is corrupt --- and each pays
only inside its regime: growth is unnecessary where nothing
arrives, calm has nothing to harvest where surfaces never rest,
and composing governors naively lets them perturb one another
through shared evidence streams. An education system for
machines is therefore not one mechanism but a small government
of them, each scoped to the regime it was measured in.

\paragraph{Governors and their regimes.}
Principle 10's scoping is fixed by three independent
measurements: the EWC comparison's stationary miss, the public
stationary stream's over-firing trigger, and the calm probes of
the fidelity program. On composition, one interaction is
measured (a throttle perturbing the statistics another
governor reads), suggesting that a composed government should
isolate each trigger's evidence channel --- a registered
direction, not yet a measured regularity. The campaign texts retain
their own derivations.

These principles are not decoration; every campaign of
Appendices~\ref{sec:empirical}--\ref{sec:guardprog} either
enforced one and measured its
effect or violated one and measured the cost. Where the theory
outruns the record --- structural protection of foundations,
non-extractability, individualized pacing --- we say so and
leave those as registered future measurements.

\clearpage
\section{The Fidelity Program (GUARD): Four Registered Rounds}
\label{sec:guardprog}

\subsection{The question, the world, and the instruments}
The empirical program's final campaign
asked the education question of App.~\ref{sec:pedagogy}: can early
education on a growable substrate lay a foundation that
survives a lifetime of drift, corruption, and temptation ---
and does it survive \emph{better} there than on a frozen
substrate? The world is a constructed content-judgment stream
with compositional ground truth (the same surface token is
harmful raw and neutral under genuine neutralization), surface
forms that rotate for life over an invariant rule, scheduled
label corruption, and three temptation episodes whose corrupted
labels coherently reward defection. All arms receive
byte-identical education; every mechanism below is caller-side;
each round was separately pre-registered, and each verdict ---
three failures and one success --- is reported at the same
volume.

\subsection{Round 1 --- baseline: no defenses}
The undefended plastic
learner defected under coherent temptation almost exactly as
the incentive predicted (true-label accuracy $0.32$--$0.37$
inside episodes, against a stream-computed defection incentive
of $+0.41$), while the frozen graduate held the minimum-score
statistic by standing still. Analysis attributed the frozen
arm's result to the instrument as much as to the arm: an
all-time-minimum grade conflates a principle abandoned with a
transition survived. The round's positive content is the
necessity proof: immunity mechanisms are prerequisites, not
accessories.

\subsection{Round 2 --- strategy: canon-anchored immunity}
The learner
was given a self-examination organ --- it retains its own
canon, tests itself against it on a fixed cadence, and under
alarm both refuses high-confidence-contradicted lessons and
re-teaches its own canon (review-the-classics as a defense).
Registered outcome: resistance and fidelity each rose $+0.10$
at $\sim\!11\%$ vigilance cost, with every reused arm
reproducing round 1 bitwise; the central claims still failed.
Analysis localized the residual to transition churn at surface
rotations, not to defection --- and exposed a timing dilemma
that we state as the open core of machine immunity: lagging
alarms (self-test decay) act after damage; leading alarms
(contradiction rates) cannot distinguish coherent corruption
from legitimate novelty.

\subsection{Round 3 --- strategy: a calm governor}
The suspected
missing organ --- update cadence reduced under measured
stability, with instant full-rate recovery on any novelty ---
was built and probed for five disclosed rounds on burned seeds.
The probes withheld the battery: the mechanism engages cleanly
and never costs adaptation, but a world whose surfaces rotate
for life offers no stillness to harvest, and a root-cause
control (growth disabled) confirmed the fidelity price came
from throttling the only channel through which new surface
forms are learned. The analysis generalizes to a regime observation:
each governor pays only inside its regime --- growth where
structure arrives, calm where the world is genuinely still
(precisely where the EWC comparison and the stationary public
stream had measured its absence as the binding gap) --- and a
world in permanent drift rewards neither throttle, because
there the transition dip is the price of tracking.

\subsection{Round 4 --- strategy: correct the instrument,
then teach properly} Two corrections were pre-registered together.
First, fidelity became a self-referenced \emph{decline}
comparison --- trajectory slope, time-to-degradation, and
recovery relative to each arm's own graduation level --- never
an absolute no-drop bar, whose bias toward non-tracking arms
the record already documented. Second, the curriculum was
rebuilt to the pedagogy of App.~\ref{sec:pedagogy} and gated on
education solidity before any comparison: a sequenced
foundation, a consolidation-length basic stage (six disclosed
calibration rounds established that lengthening the sheltered
stage \emph{hurts} --- solidity is built by consolidation among
mixed material, not by isolation), spaced reviews re-realized
in current surfaces, and targeted lifelong remediation. The
solidity gate itself produced an architectural exhibit: the
frozen substrate missed the $0.90$ graduation bar on one seed
under every curriculum tried, while on the same calibration
seed the growable arm cleared
$0.93$ under all six --- educability is unevenly distributed
between the architectures. Registered verdicts: \textbf{the
education thesis is supported}. The educated growable arm's
fidelity trajectory declines more slowly than the frozen
graduate's ($3/5$ paired; its own trajectory's slope is in fact
positive for life, median $+0.0045$ per $10^3$ rows across
seeds --- an own-slope statistic, distinct from the paired
differences of Table~\ref{tab:guard4}), holds above its own
graduation level at least as long on every seed ($5/5$;
$2.25$--$7\times$ longer on three, tied at the censoring floor
on two), and recovers
faster after excursions ($4/5$); on the cost frontier the
frozen arm's periodic rehearsal never restores its decline at
any logged cost, while the growable arm's remediation rides its
always-on plasticity. Table~\ref{tab:guard4} carries the
per-seed values behind these verdicts and behind
Figure~\ref{fig:guard}.

\begin{table}[t]
\centering\small
\caption{GUARD round 4, per-seed grading (grading-twin output,
archived): slope difference (growable$-$frozen, fidelity per
$10^3$ rows; positive favors the growable arm; favorable 3/5 by
the paired sign), time-to-degradation (drop $0.10$ below own
graduation; favorable 5/5 under the registered $\ge$ rule, two
censoring-floor ties), median recovery time per excursion
(favorable 4/5; $^{c}$ marks seeds whose frozen arm includes an
episode that never recovered --- censored; medians are over
recovered episodes), and logged remediation work (rows; reported as a
frontier, not an adjudication criterion --- the frozen arm's
rehearsal
cost is fixed by its schedule and never restores its
decline).}
\label{tab:guard4}
\begin{tabular}{@{}crrrr@{}}
\toprule
seed & slope diff & TTD g/f (rows) & recovery med.\ g/f &
cost g/f \\
\midrule
0 & $-0.0094$ & $2{,}900\,/\,1{,}000$ & $8{,}050\,/\,8{,}000$ &
$20{,}268\,/\,5{,}400$ \\
1 & $-0.0029$ & $7{,}000\,/\,1{,}000$ & $500\,/\,8{,}000$ &
$4{,}096\,/\,5{,}400$ \\
2 & $+0.0046$ & $1{,}000\,/\,1{,}000$ & $8{,}000\,/\,32{,}000^{c}$ &
$20{,}781\,/\,5{,}400$ \\
3 & $+0.0007$ & $1{,}000\,/\,1{,}000$ & $1{,}675\,/\,8{,}000$ &
$2{,}079\,/\,5{,}400$ \\
4 & $+0.0094$ & $2{,}250\,/\,1{,}000$ & $2{,}675\,/\,7{,}900^{c}$ &
$9{,}431\,/\,5{,}400$ \\
\bottomrule
\end{tabular}
\end{table}

\paragraph{Instrument sensitivity and seed hygiene (the audit
answers).} Three disclosures complete this round's record.
\emph{(i) Both instruments.} Re-scoring the same round-4
stores under the \emph{original} rounds-1--3 instrument (the
all-time-minimum level bar with its registered $0.05$ margin)
gives $2/5$ --- the round-4 thesis does \emph{not} pass under
the old instrument; and under that instrument the non-learning
majority-vote baseline posts the highest minimum levels of any
arm (up to $0.9083$ by standing still), which is the measured
form of the bias the correction was registered to remove.
Within the corrected instrument, the graduation-baseline
correction (single 120-row exam $\to$ mean of last three
checkpoints) changes nothing at the count level: TTD stays
$5/5$, recovery stays $4/5$ with the same unfavorable seed.
\emph{(ii) Calibration hygiene.} All curriculum calibration
ran on burned seeds 101/102 across six disclosed rounds,
strictly before the verdict battery; verdict seeds 0--4 were
first touched by the frozen battery (zero overlap, timestamps
on record).
\emph{(iii) Inclusion rule.} The $\ge 0.90$ solidity gate was
enforced on the burned seeds; on the verdict seeds the frozen
arm never reached it (0/5; the growable arm 2/5) --- so the
operative rule includes all five seeds, each arm anchored to
its \emph{own} graduation level (the metrics are
self-referenced by design). Alternative rules: discarding
pairs whose comparator failed to graduate empties the battery
($n=0$, undecidable); counting non-graduation against the
failing arm strengthens the verdict (slope $3/5 \to 4/5$,
others unchanged). The shipped rule is the conservative one.

\subsection{Program conclusions}
(i) Immunity is a
prerequisite for any lifelong learner exposed to coherent
corruption, and a canon-anchored self-test with re-teaching is
a working, cheap, partial form of it. (ii) Governors have
regimes; composing them naively lets them perturb one another
through shared evidence streams, and no governor can buy back
the price of tracking in a world that never rests. (iii) The
choice of instrument is part of the claim: minimum statistics
reward immobility, while time-axis decline statistics measure
what a lifelong learner is for. (iv) Under a fair instrument
and an education run by the stated pedagogy, the growable
architecture is the better keeper of its own foundations in a
changing world --- and it is also the more educable one. Every
constant, probe round, and adjustment behind these statements
is disclosed in the released registrations.

\clearpage
\section{Validation by an Operating LLM}
\label{app:validation}

The factory surface was
validated end-to-end by a production LLM (a stock Claude model,
Sonnet class, run headless through its command-line interface)
driving the real stdio
transport under a hidden-law protocol; a pinned-identity re-run
(model ID \texttt{claude-sonnet-5}, CLI defaults, unaided, full
machine-readable transcript released with the artifact set)
reproduced the protocol end-to-end --- $16/16$ exact-answer
generalization on operand pairs provably absent from teaching,
garbage teaching refused with the live version untouched
(candidate scored $0.4375$ against the incumbent's $1.0$;
adoption declined), drift re-taught to $12/12$, lineage
auditable across five versions. The original protocol: scenarios governed by
regularities known to the data generator but not to the brain, with
operand spaces partitioned disjointly so that every evaluated input
is provably absent from training. Headline adjudications (all
pre-registered, all reproducible from the released runners): gated
teaching improved a model from 0.60 to \textbf{1.00 exact-answer
generalization on never-seen inputs}; contradictory (garbage)
teaching was \textbf{rejected} with the live model untouched; drift
was detected (recent 0.10 vs baseline 1.00) and \textbf{recovered to
1.00} under windowed re-teach with recent-slice gating; the artifact
measured \textbf{2{,}404 bytes}. In domain scenarios the same
factory, which contains no business content, produced multi-domain
capability on demand --- a fused
meteorology$\times$marketing$\times$calendar model at 0.90
generalization against 0.05 for its best single-domain ablation ---
and recovered water's physical phase boundaries (2.5\,$^\circ$C / 97.5\,$^\circ$C
against ground truth 0/100 at the data's $\pm 2.5$ resolution) from
raw observations, mining them as readable rules --- success measured
as recovery of the generating law, the standard of
symbolic-regression evaluation~\cite{pysr,feynman}. A purely
multiplicative law resisted exact-integer precision at this scale
(recorded as a boundary, not gated away).
Figure~\ref{fig:timeline} complements these adjudications with a
governed growth timeline from the same released system: structural
edits exact at application, three promotions, and a final
refusal.

\begin{figure}[t]
\centering
\includegraphics[width=\linewidth]{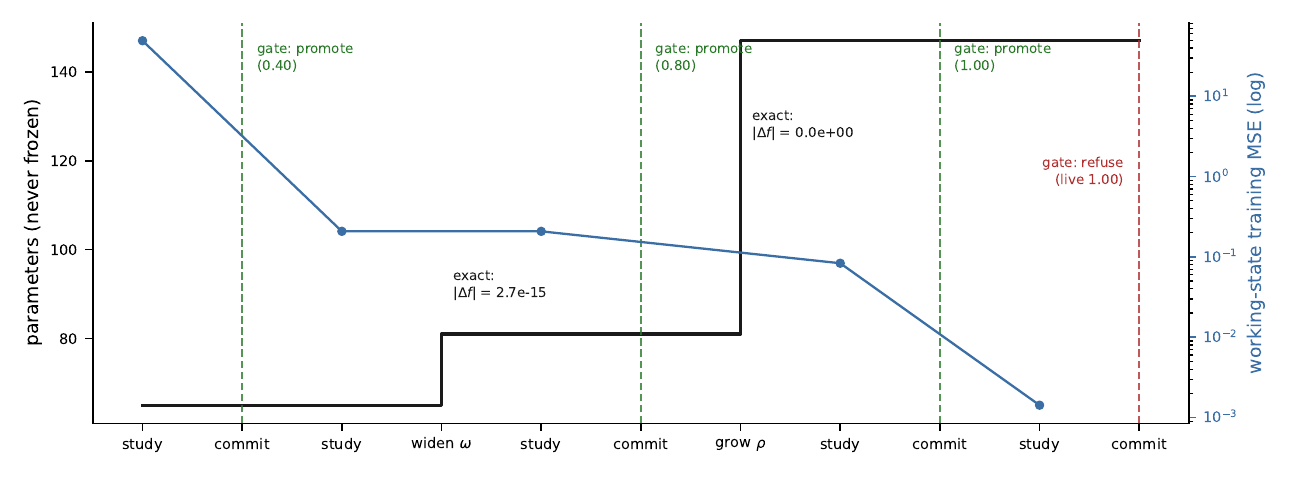}
\caption{A governed growth timeline (real trace from the released
system; parameters 65 $\to$ 81 $\to$ 147, final tree height 2).
Structural
edits ($\omega$, $\rho$) are exact at application (measured
$|\Delta f|$ annotated); the gate promotes three times and
\emph{refuses} the final commit --- the incumbent already scored
1.0 on the recent held-out slice, so further speculation that did
not pay was not adopted.}
\label{fig:timeline}
\end{figure}

\clearpage
\section{A Topology Growing under Governance}
\label{app:topo}

Figure~\ref{fig:topoevo} shows the released system growing a real
topology under the disclosure protocol of the empirical program:
growth sites are chosen only from the system's own instability
report; adoption is only by the gate; the seed is fixed and the
run's snapshots are committed alongside the paper source. A staged
curriculum (a coarse law; fine-grained structure that concentrates
instability; a new causal input; then a finer cross-texture) drives
the working state from a single scope of 16 units (65 parameters)
to a tree-height-4 recursive structure of 13 composite nodes
(599 parameters), via report-ranked $\rho$ refinements and $\omega$
widenings at TWO scales --- the root and an inner scope --- plus one
$\sigma$ interface growth. The drawing uses the standard
layered network motif throughout: every network --- outer and inner ---
is input $\to$ hidden column $\to$ output, and an inner network
hangs beneath its host composite unit as the same motif, smaller;
the structure is self-similar, and it deepens only where the data
demanded it. The gate's verdicts are part of the record: the
birth commit and one mid-run commit were promoted; the stage-B
commit and both final commits were \emph{refused} (the grown
working state did not score above the incumbent on the recent held-out
slice) --- speculation was free; adoption was earned. Each inner network reads the \emph{same} input $\hat{x}$ and
adds its output to the scope's hidden state through its
attachment at the small $+$ junction.

\begin{figure}[t]
\centering
\includegraphics[width=\linewidth]{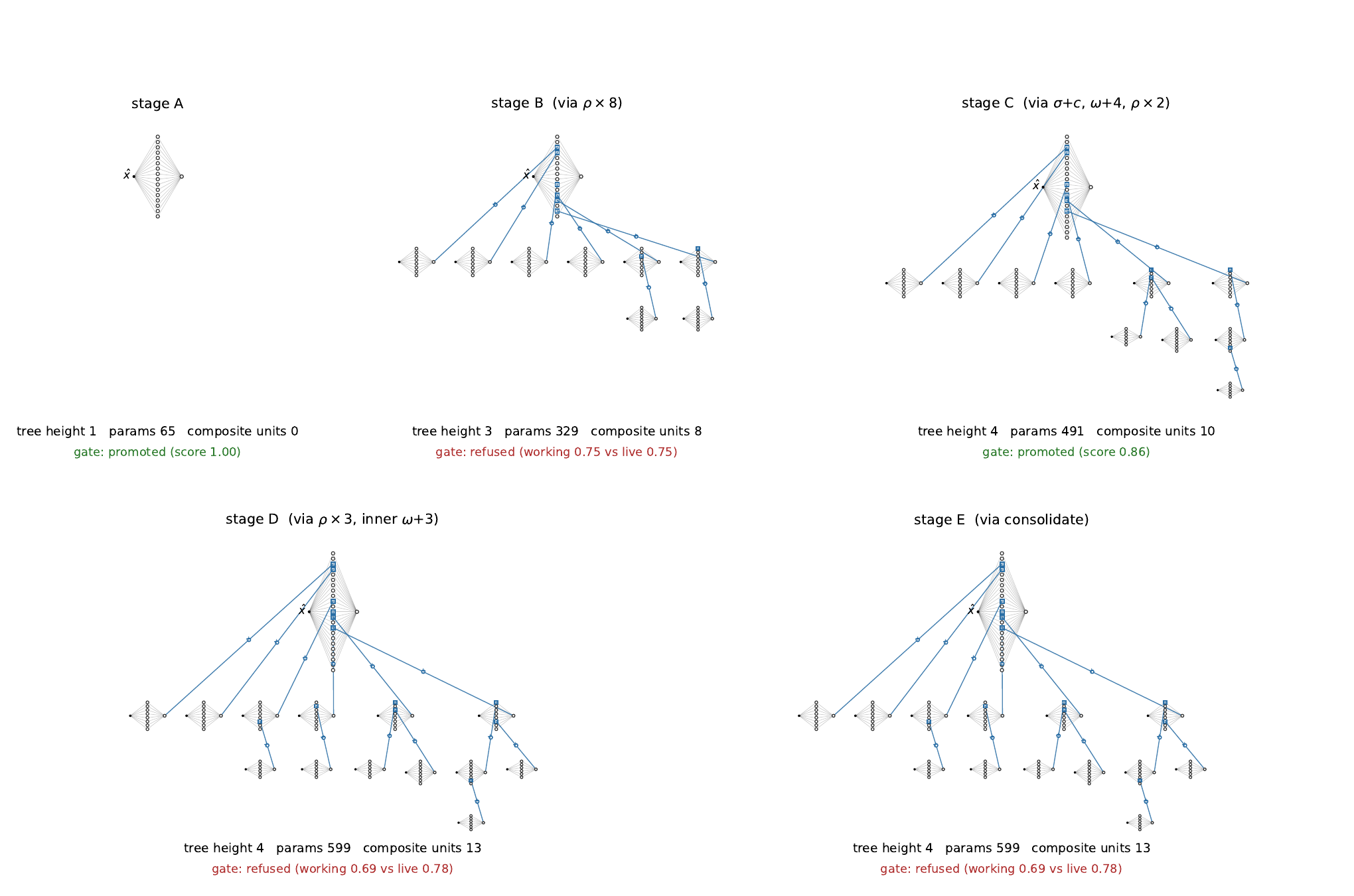}
\caption{A real topology growing adaptively under governance
(released system, fixed seed): working-state structure at five
stages, with structure parameters (tree height, parameters,
composite-node count) and the gate's verdict per stage. Every
network --- outer and inner --- is the standard layered motif (input
$\to$ hidden column $\to$ output); accent squares are composite
units, and each inner network hangs beneath its host as the same
motif, reads the same input $\hat{x}$, and adds its output to the
host unit through the small $+$ (parallel correction, not a
chain). Note the final stages: structure that did not pay was
refused by the gate and never reached the served model.}
\label{fig:topoevo}
\end{figure}

\clearpage
\section{The Historical Depth Route (retained as record)}
\label{app:twodir}

This appendix's first-generation content --- the mechanics,
simulations, and depth-field figure of the scope-interior
compositional route --- is retained in the v9/v10 records but is
no longer cited as evidence in this version. The reason is the
theory's own interface-completeness law: in that generation,
grown inner structure coupled to its host through a scalar-width
interface, and local refinement behind a scalar port cannot
export distributional corrections --- the bottleneck that the
fullwidth interface of \S\ref{sec:theory} removes. The one
verdict quoted from that era is its honest failure, kept
verbatim as the motivation for the redesign: a pre-set
capability bar --- the compositional arm improves on the
additive arm on at least $2/3$ seeds at matched parameters ---
was \emph{not met} ($1/3$, with one large advantage and two
losses). The whole-layer operator under the prevention stack,
measured in \S\ref{sec:campaigns-results} and
App.~\ref{app:newcampaigns}, supersedes that evidence: exactness
at serving time, controlled mid-life deepening without collapse,
and the depth advantage emerging with iterated composition depth
at fixed parameter budget.

\clearpage
\section[The 2026 Depth-Axis and Continual-Learning Campaigns]{The 2026 Depth-Axis and\texorpdfstring{\\}{ }Continual-Learning Campaigns}
\label{app:newcampaigns}

Fifteen campaigns, executed 2026-07-25/26 under the registered
discipline of \S\ref{sec:program}: designs and pass lines fixed
in numbered design documents before any fleet, released with
the experiment tree (\texttt{experiments/designs/}); drivers
call the frozen library through its
served interfaces only (machine-checked isolation); run
directories are write-once with per-run workspaces; every run
records the data-module SHA and \texttt{git describe} of both
repositories; judges recompute every verdict from run files
alone, and judge outputs are append-only. The code anchor for
every adopted row is the library tag \texttt{depthgrowth-v1.7}.
Two library defects (a policy-write merge defect and a
plan-seam budget gap) were exposed by these campaigns' own
audit trails, fixed under change control with tests-first
discipline, and the affected arms re-run on the fixed library;
superseded rows are retained on disk. Worlds are seeded
synthetic streams (data SHAs in the Reproducibility statement);
three seeds per arm; the reference-network family unless noted.

\paragraph{Campaign records (design $\to$ verdicts).}
\emph{E-3, in-service insertion} (within-run before/after +
no-insertion twin; 36 runs): serve continuity and instant
exactness PASS (J1 36/36 bitwise; J2 zero failed serves; J4
36/36 inserted-layer aliveness); the post-insertion descent
window FAILED as registered (12/36) --- the transient finding
that the governance campaigns then managed.
\emph{E-6/E-6b, protection windows} (+ dose and horizon
annexes): spike damping 28/36 PASS; zero durable cost on
healthy regions 9/9; late-insertion recovery 3/9 FAILED as
registered, attributed by the horizon annex to remaining
training budget (6/9 in-band at the extended horizon, median
0.93); best measured convention $0.15\times 6$ batches,
event-scoped, explicitly closed.
\emph{E-2, shape governance} (plan-driven runs, floors on/off):
law at every instant 3/3; zero collisions with the floor
provably binding 3/3; the 30-batch quality band 1/3, resolving
3/3 at the pre-registered 50-batch annex.
\emph{E-4, exploration economy} (trial-gated vs unconditional
adoption): zero-residue rollback 3/3 (18/18 bitwise restores);
quotes $=$ ledger actuals 3/3; decision audit replay 3/3;
economy 2/3 with the probe-myopia cause isolated.
\emph{E-1, controlled deepening capstone} (deep / additive /
born-deep arms; compositional world): training proceeded
through all mid-life insertions in every seed; final parity
2/3 within band (one seed at 1.37 vs the born-deep control);
depth realized 3/3; instant exactness 6/9 bitwise with the
three exceptions at $2$--$3\times10^{-16}$ (widen-rider
summation reordering) --- the refined preservation statement of
\S\ref{sec:campaigns-results}.
\emph{E-8, growth vs.\ retrain} (drifted world; grow / fixed /
scratch / twin): compute-to-parity 58--64\% in 2/3 seeds;
service continuity 3/3; the same-input-region retention bar
0/3 as registered --- the drift-taxonomy finding that E-12's
paradigm answers.
\emph{E-9, forgetting laws} (region-separated sequential
domains; seven arms): disease 3/3 ($552$--$1{,}362\times$);
replay primacy; timing dominance; complementarity 2/3 under
corrected governance; the targeting bar's 1/3 traced to a
repetition confound and corrected by E-11.
\emph{E-10, depth-separation mapping} (normalized iterated
maps, $k\in\{2,4,6\}$, one parameter cap): the ordering
reversal $k{=}2$ mixed $\to$ $k{=}4,6$ deep 3/3 at 58\%
parameters; the formal separation band (a conjunction
including the strategy arm) not met.
\emph{E-11, clean replay laws}: targeting 2/3 at equal volume
and diversity; the volume dose monotone over 5--25\% with no
knee; the open-ended ramp refuted 0/3 (the bounded-course
law).
\emph{E-12, cyclic-drift savings} (A$\to$B$\to$A): all lines
PASS --- relearning savings $7$--$10\times$ (3/3), grow
$\le$ fixed 2/3, service and integrity 3/3.
\emph{E-13/E-15, review scheduling}: post-course erosion
(LAW-E); spaced repetition confirmed and dosed (one/two/three
courses: $341$--$800$ / $252$--$438$ / $73$--$149$); the
material-selection arm is inadmissible by design rule and its
bundled champion bars failed as registered (objective
bundling); course-plus-growth yields the best treated
new-domain quality at weaker retention.
\emph{E-14, axis signal} (slope-pricing arm at $k{=}6$):
axis-awareness and quality lines 3/3 each.
\emph{E-7, dual-axis boundary} (phase-shifting stream):
instrument integrity 3/3; both registered hypotheses 0/3 ---
the additive family did not saturate on a three-level smooth
composition, and early single-horizon probe gains measured
global underfitting rather than axis demand; both boundary
records fed the designs of E-10 and E-14.

\paragraph{Deviations from registration.} All pre-fleet and
documented in the campaign records: E-2's observability
executed as served census rather than the sketched headroom
trace; E-4's candidate menu re-expressed in the one-move
vocabulary; E-6 executed on insertion events with the window
close idiom corrected; E-9's stabilization criterion reached
its working form on the third registered attempt (the two
non-firing forms preserved); E-10's decision cadence moved from
a plateau-only gate (measured firing once per life) to calendar
cadence with pricing as the demand check; E-1's static control
required a one-step materialization batch; E-11/E-13/E-15 reuse
archived same-world, same-seed fleets as read-only references;
E-14 executed as a registered extension of the E-10 driver.

\clearpage
\section{Relation to Other Methods}\label{sec:related}
This work's roots are the multiscale tradition of physics and
numerical mathematics (\S\ref{sec:direction}), developed independently of the
lines compared below; the comparisons derive from a literature
search conducted in July 2026 and re-run at writing time. On
system-independent axes --- trigger source (in-service evidence
vs.\ schedule or fiat), preservation at application (bitwise
exact vs.\ approximate), adoption test (held-out adjudication
uniform over parametric \emph{and} structural change vs.\
none), and service phase (deployment-time vs.\ pre-training) ---
we are not aware of a prior system occupying this paper's cell
on all four; each axis taken alone has near relatives,
identified next.

\paragraph{Growing neural networks.} Structure has been grown
before: residual-trained units, later frozen (cascade-correlation~\cite{cascadecorr}); on novelty~\cite{ran}; under the vigilance of adaptive resonance theory (ART)~\cite{artnn} --- the stability--plasticity vocabulary originating
with Grossberg~\cite{grossberg80} --- by evolution (NEAT~\cite{neat}); by function-preserving transforms (Net2Net~\cite{net2net}, network morphism~\cite{netmorph}, GradMax~\cite{gradmax}); per-task columns~\cite{prognets};
sparsity-controlled units~\cite{den}; explicitly learned
grow-vs-reuse structure decisions~\cite{learntogrow};
local-descent splitting~\cite{splitting,firefly}; novelty-triggered experts~\cite{varigrow}; designed self-similarity (FractalNet~\cite{fractalnet}). Ours differs in premise and government:
nothing freezes, exactness serves as a governance invariant
(scheduled expansion toward a fixed destination on their side;
evidence-triggered proposals the gate demonstrably refuses ---
audit-traced --- on this one), growth extends to the input interface
($\sigma$) --- the stream-learning literature on feature
evolvable streams~\cite{fes} treats arriving and vanishing
features at the ensemble/algorithm level; $\sigma$ answers the
same reality inside one model's body, exactly and gated --- and
to gated re-founding ($\Phi$), and self-similarity
is grown only where data force it. MgNet identifies pooling and
feature extraction with multigrid's operators in a fixed
architecture~\cite{mgnet}; this design shares with that tradition
--- the one it acknowledges --- only the growth discipline:
resolution where an indicator demands it.

\paragraph{Growth triggers.} Expansion scores and local update
statistics trigger growth in~\cite{senn,smgrnn} --- the nearest
relatives of $u_j$ --- and candidates are scored by local signals~\cite{firefly,gradmax} or structural objectives~\cite{adanet},
with freeze--thaw optimization framing continue-versus-switch~\cite{freezethaw}. Ours differs in adoption and reading: a
trigger only ever proposes; prediction is gated by a certificate,
probe curves are read only at extrapolated asymptotes~\cite{shorthorizon}, and adoption is only by the held-out gate.

\paragraph{Adaptive depth.} Adaptive computation time~\cite{act}
and depth-adaptive decoding~\cite{elbayad} vary depth per input
at inference in a fixed architecture; adaptive neural trees~\cite{ant} grow routed per-branch depth by exhaustive trial under
a frozen history. Ours differs in kind: depth is a permanent,
non-uniform property of the structure, unrouted, predicted from
the scope's own signals, and adopted through the gate in a
lifelong regime.

\paragraph{Credit assignment.} Target propagation~\cite{bengiotp,targetprop} and perturbation methods~\cite{spall}
also carry corrective information without layer-to-layer
gradients. Ours differs in venue and role: targets cross
\emph{scales} of a nested substrate rather than layers of a
chain, and within a scale the primitive remains ordinary gradient
descent.

\paragraph{Gated adoption.} Champion--challenger and shadow
deployment are established production practice (an industry
pattern documented in MLOps platforms rather than a citable
literature),
their hazards documented in the technical-debt literature~\cite{sculley}. The reality gate differs in scope: it lives
inside the model, is the \emph{only} adoption path, re-scores the
incumbent fresh, and applies uniformly to gradient steps and
structural operators.

\paragraph{Continual learning.} Replay, parameter-isolation, and
regularization families~\cite{mccloskey,clsurvey,parisi} manage
interference by anchoring or freezing (EWC~\cite{ewc}, LoRA~\cite{lora}). Ours differs by construction: store-per-version
full-replay plus the gate, affordable at kilobytes. Loss of
plasticity~\cite{dohare} is the most serious known threat to the
axiom at scale; warm-start inferiority~\cite{ashadams}
independently supports store-per-version and $\Phi$; drift
handling is integrated with promotion, the concerns those the
stream literature catalogues~\cite{gama}.

\paragraph{Neural architecture search.} NAS~\cite{automl}
optimizes architecture in an outer loop before deployment. Ours
has no outer loop and no search objective: growth is triggered
inside one continuing life and adjudicated by the same gate as
learning --- the model remains under construction throughout
deployment.

\paragraph{Self-improving and self-shaping systems.} SEAL accepts
self-proposed updates on downstream reward~\cite{seal}; FunSearch
and AlphaEvolve keep only artifacts that pass hard evaluators~\cite{funsearch,alphaevolve}. Our gate plays that evaluator for
parametric and structural change alike, and self-shaping at birth
is deterministic and search-free.

\paragraph{LLMs and tools.} The Model Context Protocol~\cite{mcp}
standardizes tool invocation, and managed fine-tuning loops
(Tinker~\cite{tinker}, with its LoRA-first rationale~\cite{loraregret}, ART~\cite{art}, LoRAX~\cite{lorax}) adapt
LLMs. Ours inverts the dependency: the tool itself is taught and
grown by the LLM, contains no language model, and the server
guarantees teaching cannot degrade it.

\paragraph{Local solving loops among neighboring families.}
Each ingredient of \S\ref{sec:loops} has neighbors; the
combinations, to our knowledge at the time of writing, do not.
Equilibrium layers compute fixed points where a designer placed
them~\cite{deq}, and the looped-transformer line --- including
its recent adaptive-depth forms, where a learned halting policy
chooses \emph{how many} times a designed block
recurs~\cite{loopedexpr,looplm} --- adapts the iteration
\emph{count}, never the \emph{placement}: in all of these the
cycle is architecture, decided before training. The $\lambda$
operator differs in kind: the cycle is \emph{grown} --- placed
at a scope by the model's own signals, entered exactly,
certified for contraction by an enforced (not assumed) bound,
removable, and adopted only through the gate that adjudicates
every other structural liberty here. Neuroevolution and
structural-plasticity systems do add recurrent links during
life~\cite{neat}, but as unconstrained mutations: no fixed-point
semantics, no exactness at entry, no stability certificate, no
gated adoption. On the second realization: test-time
training~\cite{ttt} and fully test-time adaptation~\cite{tent}
adapt a \emph{whole trained model} to each
input against a self-supervised objective, and per-neuron local
objectives have been proposed as alternatives or complements to
end-to-end training~\cite{localobj}; self-processing units
instead give a \emph{newborn grown structure} a bounded,
label-blind consistency loop \emph{during its enrollment
window only} --- evolution-time, serving-pure,
optimizer-neutral, inside a governed growth process. The
difference that matters is the same in both realizations:
iteration and self-refinement enter as \emph{governed local
resources of a growing topology}, not as architectural
commitments of a fixed one.

\paragraph{Heterogeneous attention heads.} The
July 2026 search (re-run at writing time) finds the
\emph{ingredients} of this section separately occupied and
their combination not. Function-preserving head addition with
zero-initialized output projections exists as a pre-training
acceleration operator~\cite{gesmundo2023composable,yao2023msg}
--- applied at scheduled expansion points to speed a fixed
destination architecture, not triggered by in-service evidence,
not gate-adjudicated, and not aimed at lifelong service. Unequal
or scheduled head widths exist as \emph{design-time} choices ---
pyramidal per-head subspaces~\cite{msmha2025}, per-layer head
schedules~\cite{prism2026}, or fixed head size decoupled from
the embedding~\cite{bhojanapalli2020lowrank}. We claim none of
that. What we claim as unoccupied is the combination: per-head
width as a \emph{lifelong, in-service outcome} of an
evidence-triggered, exactly function-preserving,
gate-adjudicated growth process --- head inequality as
biography rather than blueprint. The
fast-weights tradition~\cite{schmidhuber1992fast,ba2016fast,schlag2021linear} reads
attention itself as a temporary generated mapping; we credit
that conceptualization and do not build on it --- here attention
is a persistent, growing organ whose capacity is governed, not a
transient program.

\paragraph{Head pruning: the destructive dual.} The pruning
literature established the empirical fact this section's
mechanism answers: head demand is grossly unequal --- a large
fraction of trained heads can be removed with little loss~\cite{michel2019sixteen,voita2019analyzing}. Pruning is the
destructive response to that fact (build uniformly large, then
cut); governed per-head growth is the constructive dual (build
small, grow where the demand evidence lands). The two share the
diagnosis and differ in everything downstream of it: pruning is
a post-hoc correction of a mold, growth is the absence of the
mold.

\paragraph{Entropy stabilization: global prevention vs.\ local
restoration.} The collapse of attention entropy is a documented
training pathology with a global preventive remedy ---
spectral-norm reparameterization that bounds the mechanism away
from collapse everywhere and always~\cite{zhai2023stabilizing}.
An entropy-\emph{regularization} family has since grown around
the same phenomenon --- one-sided penalties on excess entropy
with per-head strengths~\cite{optml2025entropy}, entropy-guided
architectural adjustment~\cite{jha2025entropy}, and post-hoc
surgical reinitialization of collapsed heads~\cite{surgical2026} --- among the works our search returned,
these act at training time or after it; the mechanism here is a
deployment-time, two-sided, per-row band with an enforced
locality split, adjudicated like every other change.
The discipline of \S\ref{sec:growatt-disc} is its local
restorative complement: a per-row objective that detects and
heals the pathology where it occurs, leaves healthy heads
untouched (a measured per-head gate), and keeps the repair
strictly local by an enforced gradient-split rule. Prevention
buys a guarantee at the price of a global constraint on every
head forever; restoration buys locality at the price of acting
after the fact. At kilobyte scale the restorative route measured
$97\%$ recovery at $\sim\!7\%$ of the global control's cost
(\S\ref{sec:growatt-verdicts}); the two approaches are
compatible in principle and answer different operating regimes.

\paragraph{Continual learning: the post-completion survey.}
The regime this system targets
is the subject of a large literature, surveyed post-completion
(July 2026) as positioning context. Four families organize it.
\emph{Plasticity maintenance} counters the measured loss of
learning ability under long training by selectively
re-initializing or perturbing low-utility units, or by
regularizing toward initialization; these act at parameter
granularity inside a fixed architecture, where the present
system treats learning capacity as a governed structural
resource. \emph{Architecture-based methods} attach task-indexed
modules (low-rank adapters, experts under a router) to a frozen
backbone; the isolation they buy trades against transfer, and
the frozen backbone is the direct negation of the
total-plasticity axiom --- here capacity is added inside one
never-frozen artifact with preservation guaranteed at the
instant of change. \emph{Model merging} folds sequential task
models into one set of weights to hold serving cost constant;
the single-artifact economics is shared, reached here without a
merge step. \emph{Memory- and test-time methods} evolve an
external memory around a frozen model; here the model itself is
the evolving object, and review draws on the model's own
experience store. Three mechanism shapes recur across these
literatures --- new capacity entering at zero, utility-guided
selection of where to renew, and a protected integration period
for newborn structure --- and the same three shapes were derived
here from the multiscale tradition before the survey was
conducted; we record the convergence as observed, after the
fact. The measured contributions of
\S\ref{sec:campaigns-results} to this literature's questions
are the tool-indication boundaries (review, not added capacity
or rate protection, treats strict sequential forgetting), the
review-scheduling laws (bounded courses, stabilization timing,
spaced repetition as the retention dose), retention measured as
relearning savings, and the delayed-payoff slope as a
direction-selection signal.

\clearpage
\section{Reading the Evidence}
\label{app:reading}
Two closing pieces bind the evidence block: the three readings,
and the table that every claim in this paper answers to.
\subsection{Reading the map}

Three readings organize the evidence of
Appendices~\ref{sec:empirical}--\ref{sec:guardprog}.
First, \textbf{mechanism vs value}: everything mechanical about
governed growth verifies exactly; where growth pays is now
partly answered in the positive (the transformer comparisons and
the size-adjustability crossover, \S\ref{sec:res-capacity}),
and the remaining open questions are localized to specific,
revisable scenarios (a lock-in task where the floor binds; trigger
signals for $\Phi$; noise regimes demanding stronger governance;
the within-life comparison bounded by L17).
Second, \textbf{the governance pair is load-bearing}: E10 is what the
axiom costs when the gate side is too thin --- evidence for the
theory's structure, and a limit its presentation must respect.
Third,
\textbf{pre-registration is what makes negative results informative}:
because interpretations were fixed before runs, each FAIL branch
names exactly which prediction, under which bar, at which
scale --- which is what makes the negative branches
informative.

\subsection{The evidence table}
\label{app:evtable}

Table~\ref{tab:evidence} is the paper's single correspondence
table and its source of record: every numbered claim,
prediction, and observation printed anywhere in this paper appears in
exactly one row; open items say open; boundary items name
their observation; the body quotes this table and never outruns it.
Columns: the theory section that stakes the claim; the claim;
the campaign or artifact that adjudicated it; the verdict as
registered; one key number; and the status
(supported\,/\,boundary\,/\,open\,/\,observation\,/\,diagnostic\,/\,aux\,/\,motivation
--- the last marks a historical row retained only as the
recorded motivation for a redesign, never cited as evidence).

{\footnotesize\setlength{\tabcolsep}{3.5pt}
\begin{longtable}{@{}>{\raggedright\arraybackslash}p{0.035\linewidth}>{\raggedright\arraybackslash}p{0.235\linewidth}>{\raggedright\arraybackslash}p{0.18\linewidth}>{\raggedright\arraybackslash}p{0.15\linewidth}>{\raggedright\arraybackslash}p{0.20\linewidth}>{\raggedright\arraybackslash}p{0.105\linewidth}@{}}
\caption{Claims $\to$ artifacts $\to$ adjudicated verdicts.
No sentence in this paper outruns this table. Rows tagged
``C'' cite the pedagogy appendix
(App.~\ref{sec:pedagogy}).}
\label{tab:evidence}\\
\toprule
\S & Claim & Campaign / artifact & Verdict & Key number &
Status \\
\midrule
\endfirsthead
\multicolumn{6}{@{}l}{\emph{Table~\ref{tab:evidence},
continued.}}\\
\toprule
\S & Claim & Campaign / artifact & Verdict & Key number &
Status \\
\midrule
\endhead
\bottomrule
\endlastfoot
\multicolumn{6}{@{}l}{\emph{The axiom and its predictions (\S\ref{sec:theory})}}\\*
\S2 & T1: adaptation localizes at the drift scale & E8 & MIXED & --- & boundary \\
\S2 & T2: lock-in release after $\omega$ & E11a & PRECOND.\ NOT MET & 3/3 scenario failed to bind & open \\
\S2 & T3: unfrozen coarse scale robust to label noise & E10 & FAIL branch & --- & boundary \\
\S2 & T4: $\Phi$ trigger necessity / takeover safety & E11c & safety PASS; necessity NOT SUPP. & --- & boundary \\
\S2 & size adjustability: born small, ends above a fixed counterpart & X-SAT A-1 (App.~\ref{app:xsat}) & PASS & 5/5; $5{,}020 \to \sim$40k past 12{,}020 & supported \\
\S2 & grown capacity converts to settled score (moderate world) & X-SAT A-2 (App.~\ref{app:xsat}) & PASS & 4/5; overtake at 1.47--4.27$\times$ size (med.\ $\sim$2.6$\times$) & supported \\
\addlinespace
\multicolumn{6}{@{}l}{\emph{Substrate, instruments, self-knowledge (\S\ref{sec:algebra})}}\\*
\S3 & $\omega/\rho/\sigma$ exact at application & acceptance B1/B3/C1 & PASS & $0$; $2.22\times 10^{-16}$; exact & supported \\
\S3 & budgets enforced; lineage audited & acceptance B4/B5 & PASS & --- & supported \\
\S3 & doubt map real & SQ1a & PASS & 3/3 & supported \\
\S3 & variation self-check predicts transfer error & SV1 & PASS & 3/3; best measured predictor & supported \\
\S3 & $\sigma$ exploitation & E11b & substance 3/3; conjunctive bar NOT met (speed 1/3) & $10^{3}\times$ 3/3; pre-set speed 1/3 & boundary \\
\S3 & naive exploitation of self-knowledge & SQ1b\slash SP1\slash SR1\slash SZ1\slash SG1 & FAIL branches & --- & open \\
\addlinespace
\multicolumn{6}{@{}l}{\emph{Growth direction (\S\ref{sec:direction})}}\\*
\S4 & $\delta$ exact at application, in service & insertion campaign (\S\ref{sec:campaigns-results}) & PASS & bitwise 36/36 instants & supported \\
\S4 & block gradients FD-verified through the chain & unit suite & PASS & $\le 10^{-5}$ & supported \\
\S4 & widen-only regime bit-identical to additive release & golden fixtures & PASS & --- & supported \\
\S4 & direction selection by two-horizon slope & axis-signal campaign (\S\ref{sec:campaigns-results}) & PASS & 3/3 + 3/3 at $k{=}6$ & supported \\
\S4 & depth advantage under iterated composition & separation mapping (\S\ref{sec:campaigns-results}) & PASS & $k{=}4,6$: 3/3 at 58\% params & supported \\
\S4 & additive-family saturation (registered bars) on a three-level smooth world & additive-boundary campaign (E-7) & NOT met (J-1/J-2 0/3; J-3 3/3): no saturation at this scale & early gain $=$ global underfitting; feeds E-10/E-14 & boundary \\
\S4 & mid-life deepening under the prevention stack & capstone campaign (\S\ref{sec:campaigns-results}) & PASS & no collapse; parity with born-deep & supported \\
\S4 & first-generation capability bar (historical) & record (App.~\ref{app:twodir}) & NOT MET & 1/3, verbatim & motivation \\
\S4 & a topology grows end-to-end under full governance & App.~\ref{app:topo} & demonstrated & --- & aux \\
\addlinespace
\multicolumn{6}{@{}l}{\emph{Local solving loops (\S\ref{sec:loops})}}\\*
\S5 & $\lambda$ mechanics exact, FD-verified, deterministic & released loop suite (12+8) & PASS & entry bitwise; certified $C_G$ (activation Lipschitz) $= 1.128994$ & supported \\
\S5 & SPU governance isolation & released SPU suites & PASS & bitwise & supported \\
\addlinespace
\multicolumn{6}{@{}l}{\emph{Growable attention and the stream cases (\S\ref{sec:growatt})}}\\*
\S6 & attention mechanics A-M1--A-M7 (identities, exact head growth, certified gradients, locality split, estimator identity) & certification script (App.~\ref{sec:aseries}) & PASS & 7/7; bitwise 0.0; FD $3.0\times10^{-11}$ & supported \\
\S6 & P1: growth tracks demand (strong form) & EA2 + DIAG-STARVE (App.~\ref{sec:aseries}) & REFUTED strong; pays under starvation & $-35\%$/$-64\%$; accepted 2/5 of 5 starved & boundary \\
\S6 & P2: local discipline more effective than global repair & EA3 dose-response (App.~\ref{sec:aseries}) & CONFIRMED & 97\% restored at $\sim$7\% cost vs 49\% & supported \\
\S6 & P3: instruments lead the loss & EA4 grid, $n=15$ (App.~\ref{sec:aseries}) & CONFIRMED as median tendency; not a standalone alarm & $+4.9$ periods mean; 11/15 positive; no-shift control fires 13/15 & boundary \\
\S6 & P4: analytic $=$ mask at decisive regimes, cheaper & EA5 (App.~\ref{sec:aseries}) & CONFIRMED exactly & parity perfect; analytic sd $0$ & supported \\
\S6 & stream-competitive on a modern drift benchmark & stream case (App.~\ref{sec:streamcase}) & PASS & 0.762 vs HAT 0.599; NOAA 0.768 vs 0.739 & supported \\
\S6 & adapt without forgetting on returning regimes & stream case (App.~\ref{sec:streamcase}) & PASS & retention $\approx$1.00 vs 0.90--0.99 & supported \\
\S6 & stream-case result attributable to growth & paired ablations & NOT ATTRIBUTABLE & deltas $-0.0004$/$+0.0005$ & boundary \\
\addlinespace
\multicolumn{6}{@{}l}{\emph{Lifecycle and the growth-value campaigns (\S\ref{sec:lifecycle}, \S\ref{sec:llm})}}\\*
\S7 & gate promotes/refuses; $\Phi$ takeover gated & acceptance A-suite, C5 & PASS & --- & supported \\
\S7 & holdout quarantine; consent-gated self-study & acceptance C2--C4 & PASS & --- & supported \\
\S7 & lifelong serving closes the frozen length cliff & G-series pair (App.~\ref{sec:gseries}) & PASS & paired $+0.32/{+0.08}/{+0.30}$ & supported \\
\S7 & streaming exposure composes novel combinations & SCAN filtered split (App.~\ref{sec:gseries}) & FAIL branch & $0$, all arms & open \\
\S7 & governance more effective than immobilization under drift & registered EWC grid (App.~\ref{sec:gseries}) & SUPPORTED & 3/4; stationary miss reported & boundary \\
\S7 & capacity binds on exams, not prequential & G-GROW-1 r2 (App.~\ref{sec:growthvalue}) & PASS & 4/5 ($0.64$ vs $0.44$) & supported \\
\S7 & growth pays over same-education fixed birth & G-GROW-1 C-G1/C-G2 & FAIL; parsimony PASS & 0/5; 5/5 & boundary \\
\S7 & gate refusals deep and correct; the tolerated adoption harms & tolerant lane + case & diagnostic & $0.57 \to 0.37$ & diagnostic \\
\S7 & generative growth thesis (pays; tracks oracle; calibrated $\sigma$) & G-GEN-1 battery (App.~\ref{sec:growthvalue}) & SUPPORTED & 3/5, 4/5, 5/5 & supported \\
\S7 & growth needed only where the world grows & stationary companion & scope observation & 8/8 regretted on noise & observation \\
\S7 & better scores than the stock-uncertainty transformer under arriving structure & rounds 1--3 (App.~\ref{app:x1}--\ref{app:x2b}) & PASS repeatedly & 5/5, 4/5, 5/5 & supported \\
\S7 & static-quality sanity floor at matched (learned) uncertainty & E-16 rerun, archived flagship streams & parity, as the registered outcome map provides & sanity 5/5; grown better 3/5; medians 1.8196/1.8195 & boundary \\
\S7 & silenced gate: quiet without growing-track suppression & X1 (App.~\ref{app:x1}) & CENTRAL FAIL (trade-off measured) & C-N1 $3/5 \to 1/5$; net 0--2 & boundary \\
\S7 & ex-post gate un-starves the exam world & X2 (App.~\ref{app:x2}) & C-G1 FAIL; C-G2 PASS & adoptions $0 \to 2$--$3$; median $0.5014 \to 0.5653$ & boundary \\
\S7 & incumbent--candidate mechanics: free trials, bit-exact isolation, rollback & X1b/X2b (App.~\ref{app:x1b}, \ref{app:x2b}) & PASS & shadow-clobber cured $11.38 \to 0.65$ & supported \\
\S7 & within-life growth outruns a lifelong-trained fixed twin & X1/X1b/X2b + G-GROW rounds & boundary measured & C-N1 1/5; C-G1 0/5; frontier L17 & boundary \\
\S7 & autonomous growth satisfies the user shape law; floor provably binding & headroom campaign (E-2) & PASS & instants 3/3; transient in-band 3/3 at 50 (annex) & supported \\
\S7 & protection windows damp the insertion transient at zero healthy-region cost & window campaign (E-6/E-6b) & PASS w/ attribution boundary & 28/36; 9/9 zero cost; residue $\to$ budget (median 0.93) & boundary \\
\S7 & structural search priced exactly; bounded trials leave zero residue & pricing campaign (E-4) & PASS; probe myopia 2/3 & quotes $=$ ledger 3/3; bitwise restore 3/3 & boundary \\
\S9 & LLM-driven factory lifecycle end-to-end & Phase-1 P1--P10 (App.~\ref{app:validation}) & PASS & 1.00 gen.; garbage rejected; 2{,}404\,B & supported \\
\S9 & multi-domain fusion; boundary recovery & domain scenarios & PASS & 0.90 vs 0.05; 2.5/97.5\,$^{\circ}$C & supported \\
\addlinespace
\multicolumn{6}{@{}l}{\emph{Continual-learning campaigns (\S\ref{sec:cl}, \S\ref{sec:campaigns-results})}}\\*
\S10 & in-service adaptation cheaper than retraining after drift & drift-economics campaign (E-8) & 2/3 on the cost bar & 58--64\% of retrain (s2 $1.96\times$); service continuity 3/3 & boundary \\
\S10 & returning regime: relearning savings from retained knowledge & cyclic-drift campaign (E-12) & ALL PASS & savings $\times$7--10 (3/3); service 3/3 & supported \\
\S10 & review, not windows or capacity, treats sequential-switch forgetting & forgetting ladder (E-9) & R-1 3/3; window/capacity arms fail as registered & untreated $\times$552--1{,}362; review arms $\times$72--300 & supported \\
\S10 & targeted review beats random at equal volume and diversity & targeting correction (E-11) & 2/3; repetition confound isolated & volume dose 5--25\% monotone; ramp 0/3 refuted & boundary \\
\S10 & champion-assembly registered bars & course family (E-13/E-15) & FAIL as registered (objective bundling) & L-1/L-6 & open \\
\S10 & spaced-repetition course count is the retention dose & course family (E-13/E-15) & measured, monotone in every seed & $\times$341--800 / $\times$252--438 / $\times$73--149 & supported \\
\addlinespace
\multicolumn{6}{@{}l}{\emph{Pedagogy principles, tested by GUARD (App.~\ref{sec:pedagogy})}}\\*
C & undefended plasticity defects under coherent temptation & GUARD r1 (App.~\ref{sec:guardprog}) & measured & 0.32--0.37 vs incentive $+0.41$ & observation \\
C & re-anchored immunity recovers half at 11\% cost & GUARD r2 & partial; central claims NOT SUPP.\ (verbatim) & $+0.10/{+0.10}$ & boundary \\
C & calm governor: mechanism sound, regime absent & GUARD r3 probes (battery withheld, disclosed) & regime observation & fidelity $0.67 \to 0.51$ under calm & observation \\
C & education thesis under fair instrument + proper pedagogy & GUARD r4 (App.~\ref{sec:guardprog}) & SUPPORTED (corrected instrument; 2/5 under the original) & slope 3/5; TTD 5/5 ($\ge$ rule; strict 3/5, two floor ties); recovery 4/5 & supported \\
C & frozen substrate less educable than growable & GUARD r4 solidity gate & exhibit & frozen missed 0.90 on s101, 6/6 curricula; growable 0.93 & diagnostic \\
C & R4 verdict under the original level instrument & re-scoring of R4 stores & does not pass & 2/5 at margin 0.05; corrected counts baseline-insensitive & diagnostic \\
\addlinespace
\multicolumn{6}{@{}l}{\emph{Empirical observations (\S\ref{sec:laws-digest})}}\\*
\S14 & L1 exams see capacity; online scores do not & G-GROW-1 & observation & $0.64$ vs $0.44$ & observation \\
\S14 & L2 probe economics dominate small-scale gates & X2 & observation & $43 \to 17.4$\,s/life & observation \\
\S14 & L3 instantaneous gates cannot see lifetime consequences & G-GROW-1 case & observation & $0.57 \to 0.37$ & observation \\
\S14 & L4 review dose non-monotonic & GUARD-1 probes & observation & $0.20 \to 0.45 \to 0.17$ & observation \\
\S14 & L5 exposure economics bind in-curriculum & GUARD-1 C-GD5 & observation & $1.0$ solo vs $\sim$chance & observation \\
\S14 & L6 transition dip is the price of tracking & GUARD & observation & --- & observation \\
\S14 & L7 defection tracks incentive without immunity & GUARD r1 & observation & $0.32$--$0.37$ vs $+0.41$ & observation \\
\S14 & L8 verify the instrument before the theory & X2 autopsy & observation & invented event name green-lit & observation \\
\S14 & L9 immunity timing dilemma; re-anchoring partial & GUARD r2 & observation & half at 11\% & observation \\
\S14 & L10 the metric decides governability & G-GEN-1 vs G-GROW-1 & observation & $0$ vs $2$--$8$ adoptions & observation \\
\S14 & L11 non-growing world needs no growth & stationary companion & observation & 8/8 regretted & observation \\
\S14 & L12 governors have regimes & GUARD r3 & observation & $0.67 \to 0.51$ & observation \\
\S14 & L13 capacity value ex-ante unobservable & X1 control-arm autopsy & observation & $0.009 \pm 0.017$ vs $\sim$0.1 & observation \\
\S14 & L14 audits blind off their metric & X2/X2b & observation & kept widen; exam fell 0.04 & observation \\
\S14 & L15 the price of silence & X1 & observation & C-N1 $3/5 \to 1/5$ & observation \\
\S14 & L16 birth capacity compounds & X2 diagnosis & observation & gap $0.046 \to 0.034$, not closed & observation \\
\S14 & L17 frontier: proof latency = incumbent head start & X1b/X2b & observation & C-N1 1/5 with C-N4 5/5 & observation \\
\S14 & L18 rollback anchors immutable until resolved & X1b autopsy & observation & $11.38 \to 0.65$ & observation \\
\S14 & L19 purchase rate of serialized candidates & X-SAT r1 & observation & $\sim$1 promotion/30--40k rows & observation \\
\S14 & L20 discrete memorization binds parameters & X-SAT G-GEN leg & observation & $20\times$ params flat & observation \\
\addlinespace
\multicolumn{6}{@{}l}{\emph{Evaluative learning and benchmark lanes (\S\ref{sec:eval-learn}, \S\ref{sec:cl-bench}); E-S*/Y-* name the registered designs and}}\\*
\multicolumn{6}{@{}l}{\emph{per-seed run stores of the \texttt{paper-experiments} companion repository (Reproducibility Statement;}}\\*
\multicolumn{6}{@{}l}{\emph{store files are named by battery, e.g.\ \texttt{E\_S7\_RESULTS.json}, \texttt{Y7\_FULL40\_RESULTS.json})}}\\*
\S8 & three-regime law of growth under evaluative learning & E-S1/E-S5/E-S6/E-S7 & per registered bars; unresolved and boundary items rowed separately below & $-33.9$ vs $-1.9$; 8/10 vs 0/10 & supported \\
\S8 & quasi-static identity extends to the policy loop & TIER-1--4 + 9 library cases & PASS & $\le 2.2\times 10^{-11}$; $2.8\times 10^{-17}$ & supported \\
\S8 & governance exact under reward learning & clone0; E-S2; E-S3 & PASS & refusal 5/5, identical finals; drift 5/5; $5.9$--$95\times$; stack 5/5 & supported \\
\S11 & capacity under sequential arrival & Y-3 both scales; Y-4; Y-6 controlled & PASS & 9/10, 9/10, 8/10, identical finals 10/10; 77\% & supported \\
\S11 & long-sequence plasticity (designed regime; canonical cross-check) & Y-7/Y-8/Y-9; Y-9b & PASS & first-pass decline lower 10/10; canon 8/10 & supported \\
\S11 & just-in-time refresher (independent exam days) & Y-9e & PASS & 10/10; $+3.4$ vs $-1.4$ & supported \\
\S8 & optimizer-moment hypothesis (falsification test) & E-S1P/S4 store & REFUTED --- transient carried by the function change, not optimizer state & moment transplant left the cost in place & supported \\
\S8 & LunarLander long-run net effect & E-S6 & below protocol resolution & 4/10 with wide swings & open \\
\S8 & one sparse-reward task at moderate scale & E-S5x series & out of reach after three remedies & entropy collapse, live-probed & boundary \\
\S11 & Fashion uncontrolled lines at registered dose & Y-6 & MISS (each line 2/5) & dose boundary, controlled line passes & boundary \\
\S11 & forty-task multi-pass companion line & Y-9 & MISS (7/10 vs $\geq$8/10) & primary plasticity line unaffected; subject answered by the review protocol (10/10) & boundary \\
\S11 & large-from-birth keep-learning control & --- & not run & open control & open \\
\end{longtable}}

\begin{table}[t]
\centering\scriptsize
\caption{Per-seed values for the adjudicating lanes of
\S\ref{sec:eval-learn}/\S\ref{sec:cl-bench} (gaps are
grown-arm minus protocol-matched twin; declines in points;
exam-day means are all-task post-review accuracies).}
\label{tab:perseed-new}
\begin{tabular}{@{}>{\raggedright\arraybackslash}p{0.30\linewidth}>{\raggedright\arraybackslash}p{0.64\linewidth}@{}}
\toprule
Lane & Per-seed values (paired, seed order) \\
\midrule
Stationary control (gap; registered as a no-bar report) & $-60.5$, $-4.2$, $+27.9$, $+10.8$, $-54.4$ \\
Staged, reward-only (gap) & $-73.8$, $-8.9$, $-10.2$, $-39.0$, $-37.5$ \\
Staged, alternating (gap) & $0.0$, $-4.2$, $-5.4$, $0.0$, $0.0$ \\
Quadratic staged, reward-only (gap) & $-14.0$, $+50.6$, $-11.5$, $+1.6$, $+27.0$ \\
Quadratic staged, alternating (gap) & $+1.9$, $-4.3$, $+7.2$, $-9.8$, $+10.4$ \\
CartPole post-change AUC (grown/twin) & 2{,}557/2{,}259, 2{,}144/2{,}062, 2{,}416/1{,}928, 1{,}701/2{,}405, 2{,}244/2{,}244 \\
40-task first-pass decline (grown/fixed) & $5.0/7.6$, $1.4/10.8$, $1.2/7.7$, $-0.6/5.3$, $-2.6/6.9$, $0.5/8.7$, $-0.8/9.6$, $2.3/8.7$, $-0.7/4.9$, $2.3/8.2$ \\
Exam-day mean (grown/fixed) & 0.193/0.150, 0.241/0.169, 0.222/0.167, 0.197/0.139, 0.209/0.148, 0.266/0.159, 0.197/0.186, 0.191/0.152, 0.243/0.146, 0.195/0.160 \\
\bottomrule
\end{tabular}

\end{table}

\subsection{How to audit a row}
Take any row: its campaign column names the appendix section
carrying the registered acceptance, per-seed values, and
probe-ledger disclosures; rows of the evaluative-learning and
benchmark group trace instead through the Reproducibility
Statement, whose battery identifiers name the registered design
documents and per-run JSON stores of the
\texttt{paper-experiments} companion repository (per-seed
values for the adjudicating lanes:
Table~\ref{tab:perseed-new}). That section names its registered
report and result store; the stores are write-once run
directories under the released artifact set, each with its
resolved configuration and content hashes. The row's key
number therefore traces in two hops from this table to a
hashed artifact on disk.

\clearpage
\section{Glossary of Named Quantities and Roles}
\label{app:glossary}

\begin{table}[htbp]
\centering
\small
\caption{Glossary. Every role is a replaceable part behind a
registry (\S\ref{sec:compositional}); every quantity is recorded
in the audit artifacts.}
\label{tab:glossary}
\begin{tabular}{llp{6.6cm}l}
\toprule
Symbol / name & Kind & One-line meaning & Defined \\
\midrule
scope & structure & a network in the inclusion tree (root or inner) & Def.~1 \\
composite node & structure & hidden unit hosting an inner network & Def.~1 \\
inclusion tree & structure & the containment relation of scopes & \S\ref{sec:algebra} \\
tree height & quantity & height of the inclusion tree (per branch) & \S\ref{sec:algebra} \\
layer depth & quantity & $1+$blocks: a scope's composed stages (per scope) & \S\ref{sec:compositional} \\
$u_j$ & signal & EMA instability of node $j$'s input-weight updates (aims growth) & \S\ref{sec:signals} \\
$R_{\mathrm{inv}}$ & signal & fine-to-coarse amplitude ratio at the top split & \S\ref{sec:neutralops} \\
gain ledger & record & realized fixed-horizon gain of each growth event & \S\ref{sec:compositional} \\
predictability certificate & policy & four checks gating Tier-1 extrapolation & \S\ref{sec:compositional} \\
$\lambda$-block & structure & grown cycle: $(L_{\mathrm{in}}, b_\lambda, L_{\mathrm{out}})$ at a scope's chain end & \S\ref{sec:loops-lambda} \\
contraction certificate & policy & enforced bound $c_\varphi\sigma_{\max}(L_{\mathrm{in}})\sigma_{\max}(L_{\mathrm{out}}) \le \rho_{\max}$ & \S\ref{sec:loops-lambda} \\
$K_{\max}$, $k$ & quantity & iteration cap; executed iteration count (audited) & \S\ref{sec:loops-lambda} \\
local process objective & signal & label-blind newborn loss: consistency $+$ collapse hinge, Eq.~\eqref{eq:spu} & \S\ref{sec:loops-spu} \\
functional consistency & signal & response stability under bounded internal perturbation & \S\ref{sec:loops-spu} \\
scale-hierarchy ratios & policy & host/body mass ratio, host floor, shaping steps & \S\ref{sec:scalehierarchy} \\
extrapolator & role & fits curve families to the energy series & \S\ref{sec:compositional} \\
forecastability & role & spectral-entropy regularity check & \S\ref{sec:compositional} \\
changepoint & role & online regime-break detection (BOCPD) & \S\ref{sec:compositional} \\
backtest & role & rolling-origin skill check of the extrapolator & \S\ref{sec:compositional} \\
pricer & role & zero-attach probes, asymptote-read & \S\ref{sec:compositional} \\
combiner & role & merges prices and proposals into one decision & \S\ref{sec:compositional} \\
gate & mechanism & held-out adoption test for every change & \S\ref{sec:gate} \\
widen-only / adaptive & modes & direction policy: additive only / both axes & \S\ref{sec:mode} \\
$\omega,\rho,\sigma,\delta,\Phi$ & operators & widen, refine, interface, deepen, re-found & Table~\ref{tab:operators} \\
\bottomrule
\end{tabular}
\end{table}

\end{document}